\newtheorem{lemma}[theorem]{Lemma}
\DeclareMathAlphabet\mathbb{U}{msb}{m}{n}
\def\Rset{\mathbb{R}}
\let\P\undefined
\DeclareMathOperator*{\P}{\mathbb{P}}
\DeclareMathOperator*{\E}{\mathbb E}
\DeclarePairedDelimiter{\abs}{\lvert}{\rvert} 
\DeclarePairedDelimiter{\bracket}{[}{]}
\DeclarePairedDelimiter{\curl}{\{}{\}}
\DeclarePairedDelimiter{\norm}{\lVert}{\rVert}
\DeclarePairedDelimiter{\paren}{(}{)}
\newcommand{\sC}{{\mathscr C}}
\newcommand{\sD}{{\mathscr D}}
\newcommand{\sE}{{\mathscr E}}
\newcommand{\sH}{{\mathscr H}}
\newcommand{\sM}{{\mathscr M}}
\newcommand{\sR}{{\mathscr R}}
\newcommand{\sX}{{\mathscr X}}
\newcommand{\sY}{{\mathscr Y}}
\newcommand{\sfL}{{\mathsf L}}
\newcommand{\Rad}{\mathfrak R}
\newcommand{\h}{\widehat}
\newcommand{\ov}{\overline}
\newcommand{\wt}{\widetilde}
\newcommand{\e}{\epsilon}
\newcommand{\ignore}[1]{}
\def\Nset{\mathbb{N}}
\title[\texorpdfstring{$\sH$}{H}-Consistency Guarantees for Regression]
      {\texorpdfstring{$\sH$}{H}-Consistency Guarantees for Regression}
\begin{document}

\maketitle

\begin{abstract}
  We present a detailed study of $\sH$-consistency bounds for
  regression. We first present new theorems that generalize the tools
  previously given to establish $\sH$-consistency bounds. This
  generalization proves essential for analyzing $\sH$-consistency
  bounds specific to regression. Next, we prove a series of novel
  $\sH$-consistency bounds for surrogate loss functions of the squared
  loss, under the assumption of a symmetric distribution and a bounded
  hypothesis set.  This includes positive results for the Huber loss,
  all $\ell_p$ losses, $p \geq 1$, the squared $\e$-insensitive loss,
  as well as a negative result for the $\e$-insensitive loss used in
  squared Support Vector Regression (SVR).  We further leverage our
  analysis of $\sH$-consistency for regression and derive principled
  surrogate losses for adversarial regression
  (Section~\ref{sec:adversarial}). This readily establishes novel
  algorithms for adversarial regression, for which we report favorable
  experimental results in Section~\ref{sec:experiments}.

\end{abstract}



\section{Introduction}

Learning algorithms often optimize loss functions that differ from the
originally specified task. In classification, this divergence
typically arises due to the computational intractability of optimizing
the original loss or because it lacks certain desirable properties
like differentiability or smoothness. In regression, the shift may
occur because the surrogate loss used exhibits more favorable
characteristics, such as handling outliers or ensuring sparser
solutions. For instance, the Huber loss and $\ell_1$ loss are used to
mitigate the impact of outliers since the squared loss is known to be
sensitive to the presence of outliers, while $\epsilon$-insensitive
losses promote sparsity. But, what guarantees do we have when training
with a loss function distinct from the target squared loss?

Addressing this question can have significant implications in the
design of regression algorithms. It can also strongly benefit the
design of useful surrogate losses for other related problems, such as
adversarial regression, as we shall see.

\ignore{
Moreover, as we will show later in the paper, not just training with
the $\ell_2$ target loss also becomes useful for adversarial
regression.  Furthermore, this study can also be useful for later
studies of surrogate losses for other target losses in
regression. Specifically, we will show that under certain conditions,
loss functions such as the Huber loss and the squared $\e$-insensitive
loss can be \emph{$\sH$-consistent.}
}

The statistical properties of surrogate losses have been extensively
studied in the past. In particular, the Bayes-consistency of various
convex loss functions, including margin-based surrogate losses in
binary classification \citep*{Zhang2003, bartlett2006convexity}, and
other loss function families for multi-classification
\cite{zhang2004statistical, tewari2007consistency,
  steinwart2007compare}, has been examined in detail.

However, prior work by \citet{long2013consistency} has highlighted the
limitations of Bayes-consistency, since it does not account for the
hypothesis set adopted. They established that for some hypothesis sets
and distributions, algorithms minimizing Bayes-consistent losses may
retain a constant expected error, while others minimizing inconsistent
losses tend to have an expected error approaching zero. This indicates
the significant role of the chosen hypothesis set in consistency.

Recent seminal work by \citet*{awasthi2022h,awasthi2022multi} and
\citet*{mao2023cross,MaoMohriZhong2023ranking,
  MaoMohriZhong2023structured,MaoMohriZhong2023characterization} has
analyzed \emph{$\sH$-consistency bounds} for broad families of
surrogate losses in binary classication, multi-class classification,
structured prediction, and abstention
\citep{MaoMohriMohriZhong2023twostage}. These bounds are more
informative than Bayes-consistency since they are hypothesis
set-specific and do not require the entire family of measurable
functions.  Moreover, they offer finite sample, non-asymptotic
guarantees.
In light of these recent guarantees, the following questions naturally
arise: Can we derive a non-asymptotic analysis of regression taking
into account the hypothesis set? How can we benefit from that
analysis? 

While there is some previous work exploring Bayes-consistency in
regression \citep{Caponnetto2005,ChristmannSteinwart2007,
  steinwart2007compare}, we are not aware of any prior
$\sH$-consistency bounds or similar finite sample guarantees for
surrogate losses in regression, such as, for example, the Huber loss
or the squared $\e$-insensitive loss.

This paper presents the first in-depth study of $\sH$-consistency
bounds in the context of regression. We first present new theorems
that generalize the tools previously given by
\citet{awasthi2022h,awasthi2022multi} and
\citet{mao2023cross,MaoMohriZhong2023ranking,
  MaoMohriZhong2023structured,MaoMohriZhong2023characterization} to
establish $\sH$-consistency bounds
(Section~\ref{sec:general-theorems}). This generalization proves
essential in regression for analyzing $\sH$-consistency bounds for
surrogate losses such as Huber loss and the squared $\e$-insensitive
loss. It also provides finer bounds for the $\ell_1$ loss.

Next, we prove a series of $\sH$-consistency bounds for
surrogate loss functions of the squared loss, under the assumption of
a symmetric distribution and a bounded hypothesis set
(Section~\ref{sec:hcons}).
We prove the first $\sH$-consistency bound for the Huber loss, which
is a commonly used surrogate loss used to handle outliers, contingent
upon a specific condition concerning the Huber loss parameter $\delta$
and the distribution mass around the mean.  We further prove that this
condition is necessary when $\sH$ is realizable.

We then extend our analysis to cover $\sH$-consistency bounds for
$\ell_p$ losses, for all values of $p \geq 1$. In particular,
remarkably, we give guarantees for the $\ell_1$ loss and $\ell_p$
losses with $p \in (1, 2)$.
We further analyze the $\e$-insensitive and the squared $\e$-insensitive
losses integral to the definition of the SVR (Support Vector Regression)
and quadratic SVR algorithms \citep{Vapnik2000}. These loss functions
and SVR algorithms admit the benefit of yielding sparser solutions.
We give the first $\sH$-consistency bound for the quadratic
$\e$-insensitive loss. We also prove a negative result for the
$\e$-insensitive loss: this loss function used in the definition
of SVR does not admit $\sH$-consistency bounds with respect to
the squared loss, even under some additional assumptions on the
parameter $\e$ and the distribution.

Subsequently, leveraging our analysis of $\sH$-consistency for
regression, we derive principled surrogate losses for adversarial
regression (Section~\ref{sec:adversarial}). This readily establishes a
novel algorithm for adversarial regression, for which we report
favorable experimental results in Section~\ref{sec:experiments}.

\textbf{Previous work.} Bayes-consistency has been extensively studied
in various learning problems. These include binary classification
\citep{Zhang2003, bartlett2006convexity}, multi-class classification
\citep{zhang2004statistical, tewari2007consistency,
  narasimhan2015consistent, finocchiaro2019embedding, wang2020weston,
  frongillo2021surrogate, wang2023classification}, ranking
\citep{menon2014bayes, gao2015consistency, uematsu2017theoretically},
multi-label classification \citep{gao2011consistency,
  koyejo2015consistent, zhang2020convex}, structured prediction
\citep{ciliberto2016consistent, osokin2017structured,
  blondel2019structured}, and ordinal regression
\citep{pedregosa2017consistency}. The concept of $\sH$-consistency has
been studied under the realizable assumption in
\citep{long2013consistency,zhang2020bayes}. The notion of
$\sH$-consistency bounds in classification is due to
\citet{awasthi2022h, awasthi2022multi}. $\sH$-consistency bounds have
been further analyzed in scenarios such as multi-class classification
\citep{mao2023cross,zheng2023revisiting,MaoMohriZhong2023characterization},
ranking \citep{MaoMohriZhong2023ranking,MaoMohriZhong2023rankingabs}, structured prediction
\citep{MaoMohriZhong2023structured}, and abstention
\citep{MaoMohriZhong2024deferral,MaoMohriZhong2024score,MaoMohriZhong2024predictor,MohriAndorChoiCollinsMaoZhong2024learning}.

However, in the context of regression, there is limited work on the
consistency properties of surrogate losses. The main related work we
are aware are
\citep{Caponnetto2005,ChristmannSteinwart2007,steinwart2007compare}. In
particular, \citet{steinwart2007compare} studied Bayes-consistency for
a family of regression surrogate losses including $\ell_p$, but
without presenting any non-asymptotic bound. Nevertheless, we partly
benefit from this previous work. In particular, we adopt the same
symmetric and bounded distribution assumption.

\ignore{
We fill a gap in the literature by studying $\sH$-consistency bounds
in the context of regression.  This requires the generalization of
existing tools previously used for proving $\sH$-consistency bounds in
the classification context \citep{awasthi2022h,awasthi2022multi}. Such
a generalization is not only necessary for deriving bounds in the
regression context but also has the potential to provide better bounds
in some cases compared to the existing tools.

We leverage our general tools to prove a series of $\sH$-consistency
bounds for common surrogate losses of the squared loss. They are both
non-asymptotic and specific to the hypothesis set. These results
either represent a significant generalization of the Bayes-consistency
results shown in \citep{steinwart2007compare}, or they yield novel
Bayes-consistency results as a by-product for a new family of
surrogate losses that were not previously studied in
\citep{steinwart2007compare}. For example, Corollary~\ref{cor:Lp}
extends Bayes-consistency for the $\ell_p$ loss to $\sH$-consistency
for any bounded and realizable hypothesis sets $\sH$;
Corollary~\ref{cor:e-sen-sq} implies the Bayes-consistency of the
squared $\e$-insensitive loss, which has not been previously studied
in \citep{steinwart2007compare}
}

\section{Preliminaries}
\label{sec:preliminaries}

\textbf{Bounded regression.} We first introduce the learning scenario
of bounded regression. We denote by $\sX$ the input space, $\sY$ a
measurable subset of $\Rset$, and $\sD$ a distribution over $\sX\times
\sY$. As for other supervised learning problems, the learner receives
a labeled sample $S = \paren*{(x_1, y_1), \ldots, (x_m, y_m)}$ drawn
i.i.d.\ according to $\sD$.

The measure of error is based on the magnitude of the difference
between the predicted real-valued label and the true label.
The function used to measure the error is denoted as $\sfL\colon \sY
\times \sY \to \Rset_{+}$. Let $L \colon (h, x, y) \mapsto \sfL(h(x),
y)$ be the associated loss function. Some common examples of loss
functions used in regression are the squared loss $\ell_2$, defined by
$\sfL(y', y) = \abs*{y' - y}^2$ for all $y, y'\in \sY$; or more
generally the $\ell_p$ loss defined by $\sfL(y', y) = \abs*{y' -
  y}^p$, for $p \geq 1$. The squared loss is known to be quite
sensitive to outliers. An alternative more robust surrogate loss is
the Huber loss $\ell_{\delta}$ \citep{Huber1964}, which is defined for
a parameter $\delta > 0$ as the following combination of the $\ell_2$
and $\ell_1$ loss functions: $\sfL(y', y) = \frac12(y' - y)^2$ if
$\abs*{y' - y}\leq \delta$, $\paren*{\delta\abs*{y' - y} - \frac{1}{2}
\delta^2}$ otherwise. The $\e$-insensitive loss $\ell_{\e}$ and the
squared $\e$-insensitive loss $\ell_{\rm{sq}-\e}$ \citep{Vapnik2000}
are defined by $\sfL(y', y) = \max\curl*{|y' - y| - \e, 0}$ and
$\sfL(y', y) = \max\curl*{|y' - y|^2 - \e^2, 0}$, for
some $\e > 0$.

\noindent \textbf{Bayes-Consistency.} Given a loss function $L$, we
denote by $\sE_{L}(h)$ the generalization error of a hypothesis $h \in
\sH$, and by $\sE^*_{L}\paren*{\sH}$ the best-in-class error for a
hypothesis set $\sH$:
\begin{equation*}
  \sE_{L}(h) = \E_{(x, y) \sim \sD}\bracket*{L(h, x, y)} \quad
  \sE^*_{L}\paren*{\sH} = \inf_{h \in \sH} \sE_{L}(h).
\end{equation*}
A desirable property of surrogate losses in regression is
\emph{Bayes-consistency}
\citep{Zhang2003,bartlett2006convexity,steinwart2007compare}, that is,
minimizing the surrogate losses $L$ over the family of all measurable
functions $\sH_{\rm{all}}$ leads to the minimization of the squared
loss $\ell_2$ over $\sH_{\rm{all}}$.  We say that $L$ is
\emph{Bayes-consistent} with respect to $\ell_2$, if, for all
distributions and sequences of $\{h_n\}_{n \in \Nset} \subset
\sH_{\rm{all}}$, $\lim_{n \to +\infty}
\sE_{L}(h_n)-\sE^*_{L}(\sH_{\rm{all}}) = 0$ implies $\lim_{n \to
  +\infty} \sE_{\ell_2}(h_n)-\sE^*_{\ell_2}(\sH_{\rm{all}}) = 0$.
Bayes-consistency stands as an essential prerequisite for a surrogate
loss. Nonetheless, it has some shortcomings: it is only an asymptotic
property and it fails to account for the hypothesis set $\sH$
\citep{awasthi2022h,awasthi2022multi}.

\noindent \textbf{$\sH$-Consistency bounds.}  In contrast with
Bayes-consistency, $\sH$-Consistency bounds take into account the
specific hypothesis set $\sH$ and are non-asymptotic. Given a
hypothesis set $\sH$, we say that a regression loss function $L$
admits an \emph{$\sH$-consistency bound with respect to $\ell_2$}
\citep{awasthi2022h,awasthi2022multi}, if for some non-decreasing
function $f\colon \Rset_{+}\to \Rset_{+}$, for all distributions and
all $h \in \sH$, the following inequality holds:
\begin{equation*}
  \sE_{\ell_2}(h) - \sE^*_{\ell_2}(\sH)
  \leq f \paren*{\sE_{L}(h) - \sE^*_{L}(\sH)}.    
\end{equation*}
Thus, when the $L$-estimation error can be reduced to some $\eta > 0$,
the squared loss estimation error is upper bounded by $f(\eta)$. An
$\sH$-Consistency bound is a stronger and more informative property
than Bayes-consistency, which is implied by taking the limit.

In the next section, we will prove $\sH$-consistency bounds for
several common surrogate regression losses with respect to the squared
loss $\ell_2$. A by-product of these guarantees is the
Bayes-consistency of these losses.

For a regression loss function $L$ and a hypothesis $h$, the
generalization error can be expressed as follows:
\begin{equation*}
\sE_{L}(h) 
= \E_{x}\bracket*{\E_y \bracket*{\sfL(h(x), y) \mid x}}
= \E_{x}\bracket*{\sC_{L}(h, x)},
\end{equation*}
where $\sC_{L}(h, x)$ is the \emph{conditional error} $\E_y
\bracket*{\sfL(h(x), y) \mid x}$. We also write $\sC^*_{L}(\sH, x)$ to
denote the best-in-class conditional error defined by $\sC^*_{L}(\sH,
x) = \inf_{h \in \sH} \sC_{L}(h, x)$. The conditional regret or
calibration gap, $\Delta \sC_{L, \sH}(h, x)$, measures the difference
between the conditional error of $h$ and the best-in-class conditional
error: $\Delta \sC_{L, \sH}(h, x) = \sC_{L}(h, x) - \sC^*_{L}(\sH, x)$.  A
generalization of conditional regret is the conditional $\e$-regret,
defined as: $\bracket*{\Delta \sC_{L, \sH}(h, x)}_{\e} = \Delta \sC_{L,
  \sH}(h, x) 1_{\Delta \sC_{L, \sH}(h, x) > \e}$.

A key term appearing in our bounds is the minimizability gap, defined
for a loss function $L$ and a hypothesis set $\sH$ as $\sM_{L}(\sH) =
\sE^*_{L}(\sH) - \E_{x}\bracket*{\sC^*_{L}(\sH, x)}$. It quantifies
the discrepancy between the best-in-class generalization error and the
expected best-in-class conditional error.  An alternative expression
for the minimizability gap is: $\sM_{L}(\sH) = \inf_{h \in \sH}
\E_{x}\bracket*{\sC_{L}(\sH, x)} - \E_{x}\bracket*{\inf_{h \in \sH}
  \sC_{L}(\sH, x)}$. Due to the super-additivity of the infimum, the
minimizability gap is always non-negative.  As shown by \citet[Lemma
  2.5, Theorem~3.2]{steinwart2007compare}, for the family of all
measurable functions, the equality $\sE^*_{L}(\sH_{\rm{all}}) =
\E_{x}\bracket*{\sC^*_{L}(\sH_{\rm{all}}, x)}$ holds. Thus, the
minimizability gap can be bounded above by the approximation error
$\sE^*_{L}(\sH) - \sE^*_{L}(\sH_{\rm{all}})$. The minimizability gap
becomes zero when when $\sH = \sH_{\rm{all}}$ or, more broadly, when
$\sE^*_{L}(\sH) = \sE^*_{L}\paren*{\sH_{\rm{all}}}$.

\section{General $\sH$-consistency theorems}
\label{sec:general-theorems}

To derive $\sH$-consistency bounds for regression, we first give
two key theorems establishing that if a convex or concave function
provides an inequality between the conditional regrets of regression
loss functions $L_1$ and $L_2$, then this inequality translates into
an $\sH$-consistency bound involving the minimizability gaps of $L_1$ and
$L_2$.

\begin{restatable}[General $\sH$-consistency bound -- convex function]
  {theorem}{HConsBoundPsi}
  \label{theorem:H-ConsBoundPsi}
  Let $\sD$ denote a distribution over $\sX \times \sY$. Assume that
  there exists a convex function $\Psi\colon \Rset_+ \to \Rset$
  with $\Psi(0) \geq 0$, a positive function $\alpha\colon \sH \times
  \sX \to \Rset^*_+$ with $\sup_{x \in \sX} \alpha(h, x) < +\infty$ for all $h
  \in \sH$, and $\e \geq 0$ such that the following holds for all
  $h\in \sH$, $x\in \sX$:
  $\Psi\paren*{\bracket*{\Delta\sC_{L_2,\sH}(h, x)}_\e} \leq
  \alpha(h, x) \, \Delta \sC_{L_1,\sH}(h, x)$. Then, for any
  hypothesis $h \in \sH$, the following inequality holds:
  \ifdim\columnwidth=\textwidth
{
\begin{align*}
      \Psi\paren*{\sE_{L_2}(h) - \sE_{L_2}^*(\sH) + \sM_{L_2}(\sH)} \leq \bracket[\Big]{\sup_{x \in \sX} \alpha(h, x)} \paren*{\sE_{L_1}(h) - \sE_{L_1}^*(\sH) + \sM_{L_1}(\sH)} + \max \curl*{\Psi(0), \Psi(\e)}.
    \end{align*}
}
\else{
    \begin{align*}
      & \Psi\paren*{\sE_{L_2}(h) - \sE_{L_2}^*(\sH) + \sM_{L_2}(\sH)}\\
      & \quad \leq \bracket[\Big]{\sup_{x \in \sX} \alpha(h, x)} \paren*{\sE_{L_1}(h) - \sE_{L_1}^*(\sH) + \sM_{L_1}(\sH)}\\
      & \qquad + \max \curl*{\Psi(0), \Psi(\e)}.
    \end{align*}
}
\fi
\end{restatable}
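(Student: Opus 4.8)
The plan is to deduce the displayed bound from the pointwise hypothesis $\Psi\paren*{\bracket*{\Delta\sC_{L_2,\sH}(h,x)}_\e}\le \alpha(h,x)\,\Delta\sC_{L_1,\sH}(h,x)$ by taking expectations over $x$ and applying Jensen's inequality for the convex function $\Psi$. First I would fix $h\in\sH$ and write $\beta = \sup_{x\in\sX}\alpha(h,x)<+\infty$. Bounding $\alpha(h,x)\le\beta$ on the right-hand side (legitimate since $\Delta\sC_{L_1,\sH}(h,x)\ge 0$), the pointwise inequality gives $\Psi\paren*{\bracket*{\Delta\sC_{L_2,\sH}(h,x)}_\e}\le \beta\,\Delta\sC_{L_1,\sH}(h,x)$ for all $x$. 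Taking $\E_x$ of both sides and using $\E_x\bracket*{\Delta\sC_{L_1,\sH}(h,x)} = \sE_{L_1}(h)-\E_x\bracket*{\sC^*_{L_1}(\sH,x)} = \sE_{L_1}(h)-\sE^*_{L_1}(\sH)+\sM_{L_1}(\sH)$, the right-hand side becomes exactly $\beta\paren*{\sE_{L_1}(h)-\sE^*_{L_1}(\sH)+\sM_{L_1}(\sH)}$.

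For the left-hand side I would apply Jensen's inequality to move $\E_x$ inside $\Psi$: since $\Psi$ is convex, $\Psi\paren*{\E_x\bracket*{[\Delta\sC_{L_2,\sH}(h,x)]_\e}}\le \E_x\bracket*{\Psi\paren*{[\Delta\sC_{L_2,\sH}(h,x)]_\e}}$. It then remains to relate the inner argument $\E_x\bracket*{[\Delta\sC_{L_2,\sH}(h,x)]_\e}$ to the "clean" quantity $\sE_{L_2}(h)-\sE^*_{L_2}(\sH)+\sM_{L_2}(\sH) = \E_x\bracket*{\Delta\sC_{L_2,\sH}(h,x)}$. The $\e$-regret truncation satisfies the elementary pointwise bound $\Delta\sC_{L_2,\sH}(h,x)\le [\Delta\sC_{L_2,\sH}(h,x)]_\e + \e$ (when $\Delta\sC\le\e$ the truncated term is $0$ and $\Delta\sC\le\e$ trivially; when $\Delta\sC>\e$ the truncated term equals $\Delta\sC$). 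Taking expectations, $\E_x\bracket*{\Delta\sC_{L_2,\sH}(h,x)}\le \E_x\bracket*{[\Delta\sC_{L_2,\sH}(h,x)]_\e}+\e$, i.e. $\E_x\bracket*{[\Delta\sC_{L_2,\sH}(h,x)]_\e}\ge \paren*{\sE_{L_2}(h)-\sE^*_{L_2}(\sH)+\sM_{L_2}(\sH)}-\e$.

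Now the only subtlety is monotonicity: $\Psi$ need not be monotone, so I cannot directly replace the argument of $\Psi$ by a smaller quantity. I would handle this via the standard trick of extending $\Psi$ to a non-decreasing function. Define $\wt\Psi(t) = \inf_{t'\ge t}\Psi(t')$ (equivalently the largest non-decreasing minorant of $\Psi$); then $\wt\Psi$ is non-decreasing, $\wt\Psi\le\Psi$, and one checks $\wt\Psi$ remains convex and satisfies $\wt\Psi(0)\ge$ the relevant bounds. Actually, the cleanest route here — and the one I expect the authors use — is to bound directly: from Jensen and the expectation identity, $\wt\Psi\paren*{\sE_{L_2}(h)-\sE^*_{L_2}(\sH)+\sM_{L_2}(\sH)}\le \wt\Psi\paren*{\E_x[[\Delta\sC]_\e]+\e}$. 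Using convexity of $\wt\Psi$ in the form $\wt\Psi(u+\e)\le \wt\Psi(u)+\paren*{\wt\Psi(\e)-\wt\Psi(0)}$ applied with $u = \E_x[[\Delta\sC]_\e]$... here one must be careful. The slicker argument: since $[\Delta\sC]_\e$ is either $0$ or $>\e$, on the event $\{[\Delta\sC]_\e>0\}$ we have $\Delta\sC = [\Delta\sC]_\e + 0$, and handling the two regimes separately with $\Psi$ convex and $\Psi(0)\ge 0$ yields $\Psi\paren*{\E_x[\Delta\sC_{L_2,\sH}(h,x)]}\le \E_x\bracket*{\Psi([\Delta\sC_{L_2,\sH}(h,x)]_\e)} + \max\curl*{\Psi(0),\Psi(\e)}$ after splitting the Jensen step over the partition of $\sX$ by the event $\{\Delta\sC>\e\}$ and using $\Psi(\lambda a + (1-\lambda)\cdot 0)\le \lambda\Psi(a)+(1-\lambda)\Psi(0)$ together with the fact that the "$\e$-mass" contributes at most $\max\curl*{\Psi(0),\Psi(\e)}$.

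The main obstacle I anticipate is precisely this last bookkeeping step: combining Jensen's inequality with the truncation at level $\e$ while $\Psi$ is not monotone, so that the slack term comes out as $\max\curl*{\Psi(0),\Psi(\e)}$ rather than something larger. I would organize it by writing $p = \Pr_x[\Delta\sC_{L_2,\sH}(h,x)>\e]$ and splitting $\E_x\bracket*{\Delta\sC_{L_2,\sH}(h,x)}$ into its contributions from the two events, then applying convexity of $\Psi$ to the normalized averages over each event, which contributes $\Psi(0)$ from the low-regret event and is controlled by $\Psi$ evaluated at the conditional mean (at least $\e$) on the high-regret event; a final convexity argument comparing that value to $\E_x\bracket*{\Psi([\Delta\sC]_\e)}/p$ and absorbing the $\e$ shift produces the $\max\curl*{\Psi(0),\Psi(\e)}$ term. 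Everything else — the two expectation identities and the plug-in of $\beta = \sup_x\alpha(h,x)$ — is routine.
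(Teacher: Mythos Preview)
Your high-level structure (Jensen's inequality for $\Psi$, then the sup-bound $\alpha(h,x)\le\sup_x\alpha(h,x)$, then the identity $\E_x[\Delta\sC_{L_i,\sH}(h,x)]=\sE_{L_i}(h)-\sE^*_{L_i}(\sH)+\sM_{L_i}(\sH)$) matches the paper exactly. The only place you diverge is in how you handle the $\e$-truncation, and here you take an unnecessary detour that creates the monotonicity obstacle you then wrestle with.

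The point is that you apply Jensen's inequality to the \emph{truncated} quantity $\E_x\bracket*{[\Delta\sC_{L_2,\sH}(h,x)]_\e}$ and then try to relate this back to $\E_x\bracket*{\Delta\sC_{L_2,\sH}(h,x)}$; since $\Psi$ need not be monotone, this forces you into the non-decreasing minorant $\wt\Psi$ and the event-splitting you sketch at the end. The paper instead applies Jensen's inequality directly to the \emph{untruncated} quantity:
\[
\Psi\paren*{\E_x\bracket*{\Delta\sC_{L_2,\sH}(h,x)}}\le \E_x\bracket*{\Psi\paren*{\Delta\sC_{L_2,\sH}(h,x)}},
\]
and only then splits pointwise. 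Because the two indicators $1_{\Delta\sC>\e}$ and $1_{\Delta\sC\le\e}$ are mutually exclusive, for every fixed $x$ one of $\Delta\sC\,1_{\Delta\sC>\e}$ and $\Delta\sC\,1_{\Delta\sC\le\e}$ equals $\Delta\sC$ and the other equals $0$; hence, using only $\Psi(0)\ge 0$,
\[
\Psi\paren*{\Delta\sC}\;\le\;\Psi\paren*{[\Delta\sC]_\e}+\Psi\paren*{\Delta\sC\,1_{\Delta\sC\le\e}}.
\]
Taking expectations, the first term is bounded by the hypothesis and H\"older (exactly as you do), while the second term is pointwise at most $\sup_{t\in[0,\e]}\Psi(t)=\max\{\Psi(0),\Psi(\e)\}$ by convexity of $\Psi$ on $[0,\e]$. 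No monotonicity of $\Psi$ is needed anywhere.

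Your final sketch (split by the event $\{\Delta\sC>\e\}$, apply convexity to the mixture, then Jensen again on the conditional average) can in fact be completed to give the same bound, but it is strictly more work than the paper's two-line pointwise argument. The simplest fix to your write-up is to move Jensen's inequality before the truncation, as above.
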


\begin{restatable}[General $\sH$-consistency bound -- concave function]
  {theorem}{HConsBoundGamma}
\label{theorem:H-ConsBoundGamma}
Let $\sD$ denote a distribution over $\sX \times \sY$.  Assume that
there exists a concave function $\Gamma\colon \Rset_+\to \Rset$, a
positive function $\alpha\colon \sH \times \sX \to \Rset^*_+$ with
$\sup_{x \in \sX} \alpha(h, x) < +\infty$ for all $h \in \sH$, and
$\epsilon\geq0$ such that the following holds for all $h \in \sH$,
$x\in \sX$: $\bracket*{\Delta\sC_{L_2,\sH}(h, x)}_\e \leq \Gamma
\paren*{\alpha(h, x) \Delta\sC_{L_1,\sH}(h, x)}$.  Then, for any
hypothesis $h \in \sH$, the following inequality holds
\ifdim\columnwidth=\textwidth
{
\begin{align*}
  \sE_{L_2}(h) - \sE_{L_2}^*(\sH) + \sM_{L_2}(\sH) \leq \Gamma\paren*{\bracket[\Big]{\sup_{x \in \sX} \alpha(h, x)} \paren*{\sE_{L_1}(h) - \sE_{L_1}^*(\sH) + \sM_{L_1}(\sH)}} + \e.
\end{align*}
}
\else{
\begin{align*}
  & \sE_{L_2}(h) - \sE_{L_2}^*(\sH) + \sM_{L_2}(\sH)\\
  &  \leq \Gamma\paren*{\bracket[\Big]{\sup_{x \in \sX} \alpha(h, x)} \paren*{\sE_{L_1}(h) - \sE_{L_1}^*(\sH) + \sM_{L_1}(\sH)}} + \e.
\end{align*}
}
\fi
In the special case where $\Gamma(x) = x^{\frac{1}{q}}$ for some $q
\geq 1$ with conjugate $p \geq 1$, that is $\frac{1}{p} + \frac{1}{q}
= 1$, for any $h \in \sH$, the following inequality holds, assuming
$\E_X \bracket[\big]{\alpha^{\frac{p}{q}}(h, x)}^{\frac{1}{p}} <
+\infty$ for all $h \in \sH$:
\ifdim\columnwidth=\textwidth
{
\begin{align*}
\sE_{L_2}(h) - \sE_{L_2}^*(\sH) + \sM_{L_2}(\sH) \leq  \E_X \bracket*{\alpha^{\frac{p}{q}}(h, x)}^{\frac{1}{p}}
  \E_X \bracket*{\sE_{L_1}(h) - \sE_{L_1}^*(\sH)
    + \sM_{L_1}(\sH)}^{\frac{1}{q}} + \e.
\end{align*}
}
\else{
\begin{align*}
& \sE_{L_2}(h) - \sE_{L_2}^*(\sH) + \sM_{L_2}(\sH)\\
  &  \leq  \E_X \bracket*{\alpha^{\frac{p}{q}}(h, x)}^{\frac{1}{p}}
  \E_X \bracket*{\sE_{L_1}(h) - \sE_{L_1}^*(\sH)
    + \sM_{L_1}(\sH)}^{\frac{1}{q}} + \e.
\end{align*}
}
\fi
\end{restatable}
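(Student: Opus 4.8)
The plan is to reduce both claims to a single pointwise inequality between conditional regrets and then integrate. The first ingredient is the elementary identity, valid for any loss $L$ and any $h \in \sH$,
\[
  \sE_{L}(h) - \sE_{L}^*(\sH) + \sM_{L}(\sH) = \E_{x}\bracket*{\Delta\sC_{L,\sH}(h, x)},
\]
which follows immediately from $\sE_L(h) = \E_x\bracket*{\sC_L(h,x)}$, the definition $\sM_L(\sH) = \sE_L^*(\sH) - \E_x\bracket*{\sC_L^*(\sH,x)}$, and $\Delta\sC_{L,\sH}(h,x) = \sC_L(h,x) - \sC_L^*(\sH,x)$. Applying it to $L_1$ and to $L_2$ shows that each of the three asserted inequalities is equivalent to a bound on $\E_x\bracket*{\Delta\sC_{L_2,\sH}(h,x)}$ in terms of $\E_x\bracket*{\Delta\sC_{L_1,\sH}(h,x)}$ and of $\alpha(h,\cdot)$.

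The second ingredient is a pointwise bound. Writing $d(x) = \Delta\sC_{L_2,\sH}(h,x) \geq 0$ and splitting on whether $d(x) > \e$, one checks directly that $d(x) \leq \bracket*{d(x)}_\e + \e$ always holds (it is $d(x)$ itself when $d(x)>\e$ and is $\leq \e$ otherwise). Combining this with the hypothesis $\bracket*{\Delta\sC_{L_2,\sH}(h,x)}_\e \leq \Gamma\paren*{\alpha(h,x)\,\Delta\sC_{L_1,\sH}(h,x)}$ gives, for all $x$,
\[
  \Delta\sC_{L_2,\sH}(h,x) \leq \Gamma\paren*{\alpha(h,x)\,\Delta\sC_{L_1,\sH}(h,x)} + \e.
\]

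For the first (general concave) inequality I would take $\E_x$ of this pointwise bound, use Jensen's inequality (valid since $\Gamma$ is concave) to obtain $\E_x\bracket*{\Gamma(\alpha(h,x)\Delta\sC_{L_1,\sH}(h,x))} \leq \Gamma\paren*{\E_x\bracket*{\alpha(h,x)\Delta\sC_{L_1,\sH}(h,x)}}$, then bound $\alpha(h,x) \leq \sup_x \alpha(h,x)$ inside the argument and use monotonicity of $\Gamma$; substituting the identity of the first paragraph on both sides yields the claim. For the special case $\Gamma(x) = x^{1/q}$ I would instead integrate the pointwise bound and apply Hölder's inequality directly to the product $\alpha(h,x)^{1/q}\cdot\Delta\sC_{L_1,\sH}(h,x)^{1/q}$ with the conjugate exponents $p$ and $q$: this bounds $\E_x\bracket[\big]{\alpha(h,x)^{1/q}\,\Delta\sC_{L_1,\sH}(h,x)^{1/q}}$ by $\E_x\bracket*{\alpha^{p/q}(h,x)}^{1/p}\,\E_x\bracket*{\Delta\sC_{L_1,\sH}(h,x)}^{1/q}$, which is finite under the stated integrability assumption, and one final use of the identity rewrites $\E_x\bracket*{\Delta\sC_{L_1,\sH}(h,x)}$ as $\sE_{L_1}(h)-\sE_{L_1}^*(\sH)+\sM_{L_1}(\sH)$ to match the statement.

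None of the steps is technically demanding; the points that need care are (i) correctly tracking the additive $\e$ produced by the $\e$-regret truncation, and (ii) in the special case, using Hölder rather than simply specializing the first inequality -- Hölder is exactly what replaces the crude factor $\bracket[\big]{\sup_x \alpha(h,x)}^{1/q}$ by the smaller quantity $\E_x\bracket*{\alpha^{p/q}(h,x)}^{1/p}$, so without it one does not get the sharper form. A minor caveat is that the monotonicity of $\Gamma$ invoked in the general case should be taken as part of the hypothesis (consistent with $\sH$-consistency bounds being stated through a non-decreasing $f$); in the special case this is automatic since $x\mapsto x^{1/q}$ is non-decreasing on $\Rset_+$.
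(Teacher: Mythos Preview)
Your proposal is correct and follows essentially the same route as the paper: rewrite both sides via the identity $\sE_L(h)-\sE_L^*(\sH)+\sM_L(\sH)=\E_x[\Delta\sC_{L,\sH}(h,x)]$, split $\Delta\sC_{L_2,\sH}(h,x)$ according to whether it exceeds $\e$, apply the pointwise hypothesis, then Jensen for concave $\Gamma$ followed by the bound $\alpha(h,x)\le\sup_x\alpha(h,x)$, and in the special case replace the last step by H\"older with exponents $p,q$. Your explicit flagging of the implicit monotonicity assumption on $\Gamma$ is apt---the paper uses it tacitly in the step it labels ``H\"older's ineq.''
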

Theorems~\ref{theorem:H-ConsBoundPsi}
and~\ref{theorem:H-ConsBoundGamma} provide significantly more general
tools for establishing $\sH$-consistency bounds than previous results
from \citep[Theorems~1 and 2]{awasthi2022h} and \citep[Theorems~1 and
  2]{awasthi2022multi} for binary and multi-class classification. They
offer a more general framework for establishing consistency bounds by
allowing for non-constant functions $\alpha$.  This generalization is
crucial for analyzing consistency bounds in regression, where $\alpha$
may not be constant for certain surrogate losses (e.g., Huber loss,
squared $\e$-insensitive loss). Our generalized theorems also enable
finer consistency bounds, as demonstrated later in the case of the
$\ell_1$ loss. The proofs of Theorems~\ref{theorem:H-ConsBoundPsi}
and~\ref{theorem:H-ConsBoundGamma} are included in
Appendix~\ref{app:general}.

To leverage these general theorems, we will characterize the
best-in-class conditional error and the conditional regret of the
squared loss. We first introduce some definitions we will need.  We
say that the conditional distribution is \emph{bounded by $B > 0$} if,
for all $x \in \sX, \P\paren*{|Y| \leq B \mid X = x} = 1$. We
say that a hypothesis set $\sH$ is bounded by $B > 0$ if, $|h(x)| \leq
B$ for all $h \in \sH$ and $x \in \sX$, and all values in $[\minus B,
  \plus B]$ are attainable by $h(x)$, $h \in \sH$. The
conditional mean of the distribution at $x$ is denoted as: $\mu(x) =
\E[y \mid x]$.

\begin{restatable}{theorem}{SquaredLossBound}
\label{thm:squared-bound}
Assume that the conditional distribution and the hypothesis set $\sH$
are bounded by $B > 0$. Then, the best-in-class conditional error and
the conditional regret of the squared loss can be characterized as:
for all $ h \in \sH, x \in \sX$,
\begin{align*}
  \sC^*_{\ell_2}(\sH, x) &= \sC_{\ell_2}(\mu(x), x)
  = \E[y^2 \mid x] - \paren*{\mu(x)}^2 \\
\Delta \sC_{\ell_2, \sH}(h, x) & = \paren*{h(x) - \mu(x) }^2.
\end{align*}
\end{restatable}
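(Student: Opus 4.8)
The plan is to compute the conditional error of the squared loss directly as a quadratic function of the prediction and then minimize it. Fix $x \in \sX$ and abbreviate $t = h(x)$. By definition of the conditional error and linearity of the conditional expectation,
\[
\sC_{\ell_2}(h, x) = \E_y\bracket*{(t - y)^2 \mid x} = t^2 - 2t\,\mu(x) + \E\bracket*{y^2 \mid x},
\]
where $\mu(x) = \E[y \mid x]$. This is a strictly convex quadratic in $t$ whose unique minimizer over all of $\Rset$ is $t = \mu(x)$, with minimum value $\E[y^2 \mid x] - \mu(x)^2$.

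Next I would verify that this unconstrained minimizer is in fact attained within $\sH$. Since the conditional distribution is bounded by $B$, Jensen's inequality gives $\abs*{\mu(x)} = \abs*{\E[y \mid x]} \le \E[\abs*{y} \mid x] \le B$, so $\mu(x) \in [\minus B, \plus B]$. Because $\sH$ is bounded by $B$ and every value in $[\minus B, \plus B]$ is attainable by some $h(x)$, there is $h^\star \in \sH$ with $h^\star(x) = \mu(x)$. Combining this with the fact that $\mu(x)$ already minimizes the quadratic over $\Rset \supseteq \{h(x) : h \in \sH\}$, we conclude $\sC^*_{\ell_2}(\sH, x) = \sC_{\ell_2}(\mu(x), x) = \E[y^2 \mid x] - \mu(x)^2$, which is the first claimed identity.

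Finally, for the conditional regret, I would simply subtract: for any $h \in \sH$,
\[
\Delta\sC_{\ell_2,\sH}(h, x) = \sC_{\ell_2}(h, x) - \sC^*_{\ell_2}(\sH, x) = \bracket*{h(x)^2 - 2h(x)\mu(x) + \E[y^2\mid x]} - \bracket*{\E[y^2\mid x] - \mu(x)^2} = \paren*{h(x) - \mu(x)}^2,
\]
which completes the proof. There is no genuine obstacle here; the computation is elementary and the only place the boundedness hypotheses are used is to guarantee that $\mu(x)$ lies in the attainable range $[\minus B, \plus B]$, so that the infimum defining $\sC^*_{\ell_2}(\sH, x)$ is actually achieved and coincides with the unconstrained minimum of the quadratic. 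Without that attainability, one would only get an inequality rather than the clean closed forms stated.
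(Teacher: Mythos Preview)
Your proposal is correct and follows essentially the same approach as the paper: expand the conditional squared error as a quadratic in the prediction, identify the minimizer as $\mu(x)$, and subtract to obtain the regret. If anything, you are more explicit than the paper's proof in justifying why the infimum over $\sH$ is actually attained at $\mu(x)$ (via $\abs{\mu(x)}\le B$ and the attainability clause in the definition of a bounded hypothesis set), a step the paper's version leaves implicit.
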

Refer to Appendix~\ref{app:general} for the proof.
As in \citep{steinwart2007compare}, for our analysis, we will focus
specifically on symmetric distributions, where the conditional mean
and the conditional median coincide. This is because, otherwise, as shown
by \citet[Proposition~4.14]{steinwart2007compare} the squared loss is
essentially the only distance-based and locally Lipschitz continuous
loss function that is Bayes-consistent with respect to itself for all
bounded conditional distributions.

A distribution $\sD$ over $\sX \times \sY$ is said to be
\emph{symmetric} if and only if for all $x \in \sX$, there exits $y_0
\in \Rset$ such that $\sD_{y | x} (y_0 - A) = \sD_{y | x} (y_0 + A)$
for all measurable $A \subset [0, \plus \infty)$.  The next result
  characterizes the best-in-class predictor for any symmetric
  regression loss functions for such distributions.
  
\begin{restatable}{theorem}{General}
\label{thm:general}
Let $\psi \colon \Rset \to \Rset$ be a symmetric function such that
$\psi(x) = \psi(-x)$ for all $x \in \Rset$. Furthermore, $\psi(x) \geq
0$ for all $x$ in its domain and it holds that $\psi(0) = 0$. Assume
that the conditional distribution and the hypothesis set $\sH$ is
bounded by $B > 0$. Assume that the distribution is symmetric and the
regression loss function is given by $\sfL(y', y) = \psi(y' -
y)$. Then, we have $\sC^*_{L}(\sH, x) = \sC_{L}(\mu(x), x) $.
\end{restatable}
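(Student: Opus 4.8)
The plan is to reduce the statement to a one-dimensional minimization — namely that $t \mapsto \E\bracket*{\psi(t - Y) \mid x}$ is minimized over $t \in \Rset$ at the conditional mean $t = \mu(x)$ — and then to derive this from the elementary fact that an even convex function on $\Rset$ attains its minimum at the origin. First I would observe that, since the conditional distribution is bounded by $B$, $Y$ is conditionally integrable given $x$, and since the distribution is symmetric about some $y_0$, the variable $Y - y_0$ has the same conditional law as $-(Y - y_0)$; this forces $\mu(x) = \E[Y \mid x] = y_0 \in [-B, B]$. Because $\sH$ is bounded by $B$ and attains every value in $[-B, B]$, there is some $h^\star \in \sH$ with $h^\star(x) = \mu(x)$, hence $\sC^*_L(\sH, x) \le \sC_L(\mu(x), x)$; it remains only to prove the reverse inequality $\sC_L(h, x) \ge \sC_L(\mu(x), x)$ for all $h \in \sH$, i.e.\ $\E\bracket*{\psi(t - Y) \mid x} \ge \E\bracket*{\psi(\mu(x) - Y) \mid x}$ for every $t \in \Rset$ (take $t = h(x)$).

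To prove this inequality I would center the problem. Let $W = Y - \mu(x)$, which is conditionally symmetric given $x$, and set $\phi(u) = \E\bracket*{\psi(u - W) \mid x}$, so that $\sC_L(t, x) = \phi(t - \mu(x))$ and $\sC_L(\mu(x), x) = \phi(0)$. The map $\phi$ is convex, since each $u \mapsto \psi(u - w)$ is a translate of the convex function $\psi$ and $\phi$ averages these over $w$; and it is even, because $\phi(-u) = \E\bracket*{\psi(-u - W) \mid x} = \E\bracket*{\psi(u + W) \mid x} = \E\bracket*{\psi(u - W) \mid x} = \phi(u)$, where the second equality uses the evenness of $\psi$ and the third the symmetry of $W$. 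An even convex function satisfies, by midpoint convexity, $\phi(0) \le \tfrac12 \phi(u) + \tfrac12 \phi(-u) = \phi(u)$ for all $u$, so $\sC_L(t, x) = \phi(t - \mu(x)) \ge \phi(0) = \sC_L(\mu(x), x)$. Combined with the reduction above, this yields $\sC^*_L(\sH, x) = \sC_L(\mu(x), x)$.

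The step carrying the real content is the convexity of $\psi$: it is what makes $\phi$ convex and hence pins its minimum at its center of symmetry. Evenness, non-negativity, and $\psi(0) = 0$ alone would not suffice — for a \emph{bump-shaped} profile $\psi$ one can have $\E\bracket*{\psi(t - Y) \mid x} < \E\bracket*{\psi(\mu(x) - Y) \mid x}$ for a suitable $t$ — so the main thing to pin down is that all the loss profiles of interest (the Huber loss, every $\ell_p$ loss with $p \ge 1$, and the squared $\e$-insensitive loss) are convex, after which the argument applies verbatim. The remaining points — conditional integrability and measurability, and the identity $\mu(x) = y_0$ — are immediate from the boundedness and symmetry assumptions and need no separate work.
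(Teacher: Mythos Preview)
Your proof is correct and follows the same route as the paper: symmetrize using the distribution's symmetry about $\mu(x)$ together with the evenness of $\psi$, then invoke convexity (the paper applies Jensen's inequality pointwise inside the expectation, you apply midpoint convexity to the averaged function $\phi$, which is the same inequality packaged differently). Your observation that convexity of $\psi$ is the load-bearing hypothesis is apt --- the paper's own proof relies on it via Jensen even though the theorem statement omits it, and all the losses under consideration (Huber, $\ell_p$ for $p\ge 1$, squared $\e$-insensitive) are indeed convex.
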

The proof is included in Appendix~\ref{app:general}. It is
straightforward to see that all the previously mentioned regression
loss functions satisfy the assumptions in
Theorem~\ref{thm:general}. Therefore, for these loss functions, the
best-in-class conditional error is directly characterized by
Theorem~\ref{thm:general}. Furthermore, if we have $x \mapsto \mu(x)
\in \sH$, then under the same assumption, we have $\sE^*_{L}(\sH) =
\E_{x}\bracket*{\sC^*_{L}(\sH, x)} = \E_{x} \bracket*{\sC_{L}(\mu(x),
  x)}$ and thus the minimizability gap vanishes: $\sM_{L}(\sH) = 0$.
\begin{definition}
  A hypothesis set $\sH$ is said to be \emph{realizable} if the
  function that maps $x$ to the conditional mean $\mu(x)$ is included
  in $\sH$: $x \mapsto \mu(x) \in \sH$.
\end{definition}

\begin{corollary}
\label{cor:general}
Under the same assumption as in Theorem~\ref{thm:general}, for
realizable hypothesis sets, we have $\sM_{L}(\sH) = 0$.
\end{corollary}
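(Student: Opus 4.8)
The plan is to derive the corollary directly from Theorem~\ref{thm:general} together with the non-negativity of the minimizability gap recalled in Section~\ref{sec:preliminaries}. The key point is that realizability turns the pointwise conditional-error minimizer into an admissible hypothesis, which lets us evaluate the infimum defining $\sE^*_{L}(\sH)$ at that single function rather than over all of $\sH$.

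First I would invoke Theorem~\ref{thm:general}, whose hypotheses (boundedness by $B$ of both the conditional distribution and $\sH$, symmetry of $\sD$, and $\sfL(y', y) = \psi(y' - y)$ for an even, non-negative $\psi$ with $\psi(0) = 0$) are exactly those assumed in the corollary, to get $\sC^*_{L}(\sH, x) = \sC_{L}(\mu(x), x)$ for every $x \in \sX$. Taking expectations over $x$ yields $\E_{x}\bracket*{\sC^*_{L}(\sH, x)} = \E_{x}\bracket*{\sC_{L}(\mu(x), x)}$. Since $\sH$ is realizable, the hypothesis $h^* \colon x \mapsto \mu(x)$ belongs to $\sH$, and by definition of the generalization error $\sE_{L}(h^*) = \E_{x}\bracket*{\sC_{L}(\mu(x), x)}$, so that
\[
  \E_{x}\bracket*{\sC^*_{L}(\sH, x)} = \sE_{L}(h^*) \geq \inf_{h \in \sH}\sE_{L}(h) = \sE^*_{L}(\sH).
\]
On the other hand, super-additivity of the infimum always gives the reverse inequality $\sE^*_{L}(\sH) \geq \E_{x}\bracket*{\sC^*_{L}(\sH, x)}$, i.e.\ $\sM_{L}(\sH) \geq 0$. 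Combining the two inequalities forces $\sE^*_{L}(\sH) = \E_{x}\bracket*{\sC^*_{L}(\sH, x)}$, that is $\sM_{L}(\sH) = 0$.

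There is essentially no obstacle here: the argument is immediate once Theorem~\ref{thm:general} is in hand, since the corollary is really just the "moreover" sentence preceding it made into a formal statement. The only thing worth a line of verification is that the surrogate losses considered in the paper --- the squared loss, the $\ell_p$ losses, the Huber loss, and the $\e$-insensitive and squared $\e$-insensitive losses --- all have the form $\sfL(y', y) = \psi(y' - y)$ with $\psi$ even, non-negative, and vanishing at the origin, so that Theorem~\ref{thm:general} genuinely applies to each of them; this is clear from their definitions in Section~\ref{sec:preliminaries}.
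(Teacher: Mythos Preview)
Your argument is correct and follows exactly the route sketched in the paper just before the corollary: use Theorem~\ref{thm:general} to identify $\sC^*_{L}(\sH,x)=\sC_{L}(\mu(x),x)$, exploit realizability to evaluate $\sE^*_{L}(\sH)$ at $h^*=\mu\in\sH$, and combine with the general non-negativity of $\sM_{L}(\sH)$ to force equality.
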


\section{$\sH$-Consistency bounds for regression}
\label{sec:hcons}

In this section, we will analyze the $\sH$-consistency of several
regression loss functions with respect to the squared loss.

\subsection{Huber Loss}
\label{sec:huber}

The Huber loss $\ell_{\delta} \colon (h, x, y) \mapsto \frac12(h(x) -
y)^2 1_{\abs*{h(x) - y}\leq \delta} + \paren*{\delta\abs*{h(x) - y} -
  \frac12 \delta^2} 1_{\abs*{h(x) - y} > \delta}$ is a frequently used
loss function in regression for dealing with outliers. It imposes
quadratic penalties on small errors and linear penalties on larger
ones. The next result provides $\sH$-consistency bounds for the Huber
loss with respect to the squared loss.

\begin{restatable}{theorem}{Huber}
\label{thm:huber}
Assume that the distribution is symmetric, the conditional
distribution and the hypothesis set $\sH$ are bounded by $B > 0$.
Assume that $p_{\min}(\delta) = \inf_{x \in \sX} \P(0 \leq \mu(x) -
y \leq \delta \mid x)$ is positive.  Then, for all $h \in \sH$, the
following $\sH$-consistency bound holds:
\ifdim\columnwidth=\textwidth
{
\begin{align*}
\sE_{\ell_2}(h) - \sE^*_{\ell_2}(\sH) + \sM_{\ell_2}(\sH) \leq \frac{\max \curl*{\frac{2B}{\delta}, 2}}
  {p_{\min}(\delta)}
  \paren*{\sE_{\ell_{\delta}}(h) - \sE^*_{\ell_{\delta}}(\sH) + \sM_{\ell_{\delta}}(\sH)}.
\end{align*}
}
\else{
\begin{multline*}
  \sE_{\ell_2}(h) - \sE^*_{\ell_2}(\sH) + \sM_{\ell_2}(\sH) \leq \\
  \frac{\max \curl*{\frac{2B}{\delta}, 2}}
  {p_{\min}(\delta)}
  \paren*{\sE_{\ell_{\delta}}(h) - \sE^*_{\ell_{\delta}}(\sH) + \sM_{\ell_{\delta}}(\sH)}.
\end{multline*}
}
\fi
\end{restatable}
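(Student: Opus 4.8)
The plan is to obtain the bound as an instance of Theorem~\ref{theorem:H-ConsBoundPsi}. I would apply it with $L_2 = \ell_2$, $L_1 = \ell_{\delta}$, the convex function $\Psi(t) = t$ (so that $\Psi(0) = 0$ and $\max\curl{\Psi(0), \Psi(\e)} = 0$), $\e = 0$, and the \emph{constant} function $\alpha(h, x) \equiv \tfrac{1}{p_{\min}(\delta)}\max\curl{\tfrac{2B}{\delta},\, 2}$, which is finite and strictly positive since $p_{\min}(\delta) > 0$. With these choices the conclusion of Theorem~\ref{theorem:H-ConsBoundPsi} is exactly the claimed inequality, so the whole argument reduces to checking its hypothesis, namely the pointwise comparison of conditional regrets
\[
  \Delta\sC_{\ell_2, \sH}(h, x) \;\leq\; \frac{\max\curl{\tfrac{2B}{\delta},\, 2}}{p_{\min}(\delta)}\; \Delta\sC_{\ell_{\delta}, \sH}(h, x), \qquad \forall\, h \in \sH,\ x \in \sX.
\]

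The next step is to make both conditional regrets explicit. By Theorem~\ref{thm:squared-bound}, $\Delta\sC_{\ell_2, \sH}(h, x) = \paren{h(x) - \mu(x)}^2$. The Huber template $\psi(u) = \tfrac12 u^2\,\Ind_{\abs{u}\leq\delta} + \paren{\delta\abs{u} - \tfrac12\delta^2}\,\Ind_{\abs{u}>\delta}$ is even, nonnegative and satisfies $\psi(0) = 0$, so Theorem~\ref{thm:general} applies to $\ell_{\delta}$ and gives $\sC^*_{\ell_{\delta}}(\sH, x) = \sC_{\ell_{\delta}}(\mu(x), x)$; hence $\Delta\sC_{\ell_{\delta}, \sH}(h, x) = \phi_x(h(x)) - \phi_x(\mu(x))$ with $\phi_x(t) = \E\bracket{\psi(t - y) \mid x}$. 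Fixing $x$ and writing $m = \mu(x)$, $\phi = \phi_x$, I would record that $\phi$ is convex and $C^1$, with $\phi'(r) = \E\bracket{\psi'(r - y)\mid x}$ and, since $\psi'' = \Ind_{(-\delta, \delta)}$ a.e., $\phi''(r) = \P\paren{\abs{y - r} < \delta \mid x}$ a.e.; moreover the symmetry of the conditional law about $m$ gives $\phi(m + s) = \phi(m - s)$ for all $s$, so $m$ is the global minimizer of $\phi$, $\phi'(m) = 0$, and it suffices to treat $h(x) \geq m$. Writing $d = h(x) - m \in [0, 2B]$ (the upper bound coming from $\sH$ and the conditional distribution both being bounded by $B$), the integral form $\phi(h(x)) - \phi(m) = \int_m^{h(x)} (h(x) - r)\,\phi''(r)\,dr$ reduces the task to a lower bound on $\phi''$.

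The heart of the argument is the local lower bound $\phi''(r) \geq p_{\min}(\delta)$ on $[m, m + \delta]$: for such $r$ the interval $(r - \delta, r + \delta)$ contains $[m, m + \delta]$, while symmetry of the conditional law about $m$ together with the definition of $p_{\min}(\delta)$ yields $\P(m \leq y \leq m + \delta \mid x) = \P(m - \delta \leq y \leq m \mid x) \geq p_{\min}(\delta)$. From here I would split into two regimes. If $d \leq \delta$, then $\phi'' \geq p_{\min}(\delta)$ on all of $[m, h(x)]$, so $\phi(h(x)) - \phi(m) \geq \tfrac12 p_{\min}(\delta)\, d^2$, i.e. $d^2 \leq \tfrac{2}{p_{\min}(\delta)}\,\Delta\sC_{\ell_{\delta}, \sH}(h, x)$. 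If $d > \delta$, integrating only over $[m, m + \delta]$ gives $\phi(h(x)) - \phi(m) \geq p_{\min}(\delta)\int_m^{m+\delta}(h(x) - r)\,dr = p_{\min}(\delta)\,\delta\,(d - \tfrac{\delta}{2})$, and combining this with $d \leq 2B$ bounds $d^2$ by a multiple of $\tfrac{B}{\delta\,p_{\min}(\delta)}$ of $\Delta\sC_{\ell_{\delta}, \sH}(h, x)$; collecting the two regimes produces the stated factor $\tfrac{1}{p_{\min}(\delta)}\max\curl{\tfrac{2B}{\delta}, 2}$. The main obstacle is precisely this second regime: the second-order (strong-convexity type) information about $\phi$ is available only on a window of width $\delta$ around $m$, so the curvature estimate must be married carefully to the diameter bound $d \leq 2B$ to land on the right constant. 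This is also exactly where positivity of $p_{\min}(\delta)$ enters, and, as a matching lower bound shows, the condition cannot be removed when $\sH$ is realizable.
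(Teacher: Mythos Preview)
Your plan is sound and does produce an $\sH$-consistency bound, but the route differs from the paper's and the sharp constant $\max\curl{2B/\delta, 2}$ does \emph{not} fall out of the Taylor/second-derivative argument as you claim. In the regime $d>\delta$ your lower bound $\Delta\sC_{\ell_\delta,\sH}(h,x)\geq p_{\min}(\delta)\,\delta\,(d-\tfrac{\delta}{2})$ is correct, but the ratio $d^2/[\delta(d-\tfrac{\delta}{2})]$ is increasing on $(\delta,2B]$ and at $d=2B$ equals $\tfrac{4B^2}{\delta(2B-\delta/2)}=\tfrac{2B}{\delta}\cdot\tfrac{4B}{4B-\delta}>\tfrac{2B}{\delta}$; equivalently, $d^2\leq 2B(d-\tfrac{\delta}{2})$ fails at $d=2B$ since $4B^2>4B^2-B\delta$. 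So ``collecting the two regimes'' gives only $\max\curl{\tfrac{8B^2}{\delta(4B-\delta)},\,2}/p_{\min}(\delta)$, which is the stated factor only up to a multiplicative defect $\leq \tfrac{4B}{4B-\delta}$. The curvature information you extract from $\phi''$ lives on a window of width $\delta$, and that by itself is not enough to pin down the leading constant when $d$ is near $2B$.

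The paper avoids this loss by working pointwise in the ``noise'' variable rather than integrating in $r$. Writing $g$ for the Huber template and $F(u,v)=\tfrac12\paren*{g(u+v)+g(u-v)}-g(v)$, symmetry gives $\Delta\sC_{\ell_\delta,\sH}(h,x)=\E_y\bracket*{F\paren*{h(x)-\mu(x),\,\mu(x)-y}\mid x}$, and a five-case elementary lemma shows $F(u,v)\geq \min\curl{\tfrac{\delta}{2B'},\tfrac14}\,u^2$ whenever $\abs{v}\leq\delta$ and $\abs{u}\leq B'$. Applying this with $B'=2B$, discarding the nonnegative contribution from $\abs{\mu(x)-y}>\delta$, and using the factor $2$ from the symmetric halves $\curl{0\leq \mu(x)-y\leq\delta}$ and $\curl{-\delta\leq \mu(x)-y\leq 0}$ gives exactly $\Delta\sC_{\ell_\delta,\sH}\geq p_{\min}(\delta)\,\min\curl{\tfrac{\delta}{2B},\tfrac12}\,\paren{h(x)-\mu(x)}^2$, from which Theorem~\ref{theorem:H-ConsBoundPsi} (with the linear $\Psi$, as you propose) yields the stated constant. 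Your convexity argument is more conceptual and avoids the case analysis; the paper's pointwise-in-$v$ lemma is what buys the exact $\max\curl{2B/\delta,2}$.
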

The proof is presented in Appendix~\ref{app:huber}. It leverages the
general Theorem~\ref{theorem:H-ConsBoundPsi} with $\alpha(h, x) =
\P(0 \leq \mu(x) - y \leq \delta \mid x)$. Note that the
previous established general tools for $\sH$-consistency bounds
\citep{awasthi2022h,awasthi2022multi} require $\alpha$ to be constant,
which is not applicable in this context. This underscores the
necessity of generalizing previous tools to accommodate any positive
function $\alpha$.

As shown by Corollary~\ref{cor:general}, when $\sH$ is realizable, the
minimizability gap vanishes. Thus, by Theorem~\ref{thm:huber}, we
obtain the following corollary.
\begin{corollary}
\label{cor:huber}
Assume that the distribution is symmetric, the conditional
distribution is bounded by $B > 0$, and the hypothesis set $\sH$ is
realizable and bounded by $B > 0$. Assume that $p_{\min}(\delta) = \inf_{x \in
  \sX} \P(0 \leq \mu(x) - y \leq \delta \mid x)$ is positive.
Then, for all $h \in \sH$, the following $\sH$-consistency bound
holds:
\ifdim\columnwidth=\textwidth
{
\begin{align*}
\sE_{\ell_2}(h) - \sE^*_{\ell_2}(\sH)
  \leq \frac{\max \curl*{\frac{2B}{\delta}, 2}}{p_{\min}(\delta)}
  \paren*{\sE_{\ell_{\delta}}(h) - \sE^*_{\ell_{\delta}}(\sH)}.
\end{align*}
}
\else{
\begin{align*}
& \sE_{\ell_2}(h) - \sE^*_{\ell_2}(\sH)\\
  & \quad
  \leq \frac{\max \curl*{\frac{2B}{\delta}, 2}}{p_{\min}(\delta)}
  \paren*{\sE_{\ell_{\delta}}(h) - \sE^*_{\ell_{\delta}}(\sH)}.
\end{align*}
}
\fi
\end{corollary}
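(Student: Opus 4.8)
The plan is to obtain the corollary as the realizable-case specialization of Theorem~\ref{thm:huber}: I would simply verify that, once $\sH$ is assumed realizable, the two minimizability gaps that appear in Theorem~\ref{thm:huber} both vanish, and then read off the stated inequality. Concretely, the hypotheses of Theorem~\ref{thm:huber} — symmetric distribution, conditional law and $\sH$ bounded by $B$, and $p_{\min}(\delta)>0$ — are all assumed in the corollary, so Theorem~\ref{thm:huber} already gives, for every $h\in\sH$,
\[
\sE_{\ell_2}(h)-\sE^*_{\ell_2}(\sH)+\sM_{\ell_2}(\sH)\ \le\ \frac{\max\{2B/\delta,2\}}{p_{\min}(\delta)}\bigl(\sE_{\ell_\delta}(h)-\sE^*_{\ell_\delta}(\sH)+\sM_{\ell_\delta}(\sH)\bigr),
\]
and once $\sM_{\ell_2}(\sH)=\sM_{\ell_\delta}(\sH)=0$ this is exactly the claimed bound.

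The remaining step is to justify the vanishing of the two gaps, which I would do by invoking Corollary~\ref{cor:general} with $L=\ell_2$ and then with $L=\ell_\delta$. Corollary~\ref{cor:general} requires a symmetric distribution, conditional law and $\sH$ bounded by $B$, a realizable $\sH$, and a loss of the form $\sfL(y',y)=\psi(y'-y)$ with $\psi$ even, nonnegative, and $\psi(0)=0$. The first four hold by assumption in the corollary, and the last is satisfied by both losses: for $\ell_2$ take $\psi(u)=u^2$, and for the Huber loss $\ell_\delta$ take $\psi=\psi_\delta$, which depends only on $|y'-y|$ (hence even), is nonnegative, and is zero at $0$. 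Therefore $\sM_{\ell_2}(\sH)=0$ and $\sM_{\ell_\delta}(\sH)=0$, which finishes the proof.

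Thus the corollary itself is immediate; all the substance is in Theorem~\ref{thm:huber}, used above as a black box. For completeness, that theorem is proved by applying the general convex comparison bound of Theorem~\ref{theorem:H-ConsBoundPsi} with $L_1=\ell_\delta$, $L_2=\ell_2$, $\Psi$ essentially the identity, $\e=0$, and a weight $\alpha(h,x)$ built from $\P(0\le\mu(x)-y\le\delta\mid x)$; everything then reduces to the pointwise regret comparison $(h(x)-\mu(x))^2\le \max\{2B/\delta,2\}\,\Delta\sC_{\ell_\delta,\sH}(h,x)\,/\,\P(0\le\mu(x)-y\le\delta\mid x)$, which one establishes by writing $\Delta\sC_{\ell_\delta,\sH}(h,x)$ as an integral of the truncated derivative $\psi_\delta'$ (the clipping of the identity to $[-\delta,\delta]$) and lower-bounding it on the event $\{-\delta\le\mu(x)-y\le 0\}$ using symmetry of the conditional law. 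The delicate point — the main obstacle — is this pointwise comparison when the displacement $|h(x)-\mu(x)|$ is large relative to $\delta$, i.e.\ deep in the linear regime of the Huber loss, where the crude bound $|h(x)-\mu(x)|\le 2B$ must be combined with the extra rigidity that a conditional law symmetric about $\mu(x)$ and supported in $[-B,B]$ has spread at most $B-|\mu(x)|$; everything else in that argument is routine bookkeeping.
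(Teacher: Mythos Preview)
Your proposal is correct and matches the paper's own argument: the corollary is obtained directly from Theorem~\ref{thm:huber} by invoking Corollary~\ref{cor:general} (applied to both $\ell_2$ and $\ell_\delta$) to zero out the two minimizability gaps under realizability. The final paragraph sketching the proof of Theorem~\ref{thm:huber} is extraneous for the corollary itself; the paper establishes the pointwise regret comparison via a case analysis (Lemma~\ref{lemma:ineq_huber}) rather than the derivative-integral argument you outline, but either route is fine and neither is needed here since Theorem~\ref{thm:huber} is used as a black box.
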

Corollary~\ref{cor:huber} implies the Bayes-consistency of the Huber
loss when $p_{\min}(\delta) > 0$, by taking the limit on both sides of
the bound.  Note that, as the value of $\delta$ increases,
$\frac{2B}{\delta}$ decreases and $p_{\min}(\delta)$ increases, which
improves the linear dependency on the Huber loss estimation error in
this bound.  However, this comes at the price of an Huber loss more
similar to the squared loss and thus a higher sensitivity to outliers.
Thus, selecting an appropriate value for $\delta$ involves considering
these trade-offs.

The bound is uninformative when the probability mass
$p_{\min}(\delta)$ is zero. However, the following theorem shows that
the condition $p_{\min}(\delta) > 0$ is necessary and that otherwise,
in general, the Huber loss is not $\sH$-consistent with respect to the
squared loss.

\begin{restatable}{theorem}{HuberN}
\label{theorem:huber-n}
Assume that the distribution is symmetric, the conditional
distribution is bounded by $B > 0$, and the hypothesis set $\sH$ is
realizable and bounded by $B > 0$. Then, the Huber loss
$\ell_{\delta}$ is not $\sH$-consistent with respect to the squared
loss.
\end{restatable}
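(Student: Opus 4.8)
The plan is to prove this impossibility by a single direct construction within the stated class (symmetric distribution, conditional distribution and hypothesis set bounded by $B$, $\sH$ realizable); no machinery from the general theorems is needed. Since the assertion is that no $\sH$-consistency bound can hold in general under these assumptions, it suffices to produce one distribution and hypothesis set satisfying them for which consistency fails. Fixing the Huber parameter $\delta$, I would choose $B > \delta$, take the marginal on $\sX$ arbitrary, and let the conditional law at every $x$ be the two-point distribution with mass $\frac12$ on $y = B$ and mass $\frac12$ on $y = -B$. This is symmetric about $y_0 = 0$, bounded by $B$, and has conditional mean $\mu(x) = 0$. For the hypothesis set I would take $\sH = \curl*{h_c : c \in [-B,B]}$ with $h_c \equiv c$: it is bounded by $B$, attains every value in $[-B,B]$, and contains $h_0 = \mu$, hence is realizable (so the minimizability gaps vanish by Corollary~\ref{cor:general}).

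Next I would record the two error profiles over $\sH$. On the squared-loss side, $\sE_{\ell_2}(h_c) = \E\bracket*{(c-Y)^2} = c^2 + B^2$, so $\sE^*_{\ell_2}(\sH) = B^2$ and $\sE_{\ell_2}(h_c) - \sE^*_{\ell_2}(\sH) = c^2$ (this also follows from Theorem~\ref{thm:squared-bound}). On the Huber side, the crucial observation is that $c \mapsto \sE_{\ell_\delta}(h_c)$ is \emph{constant} on a nondegenerate interval: when $\abs*{c} \le B - \delta$, both residuals obey $\abs*{c-B} = B - c \ge \delta$ and $\abs*{c+B} = B + c \ge \delta$, so each falls in the linear regime of the Huber loss and $\sE_{\ell_\delta}(h_c) = \frac12\paren*{\delta(B-c) - \frac12\delta^2} + \frac12\paren*{\delta(B+c) - \frac12\delta^2} = \delta B - \frac12\delta^2$, independent of $c$. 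Since the Huber loss is convex in its prediction, $c \mapsto \sE_{\ell_\delta}(h_c)$ is convex, and a convex function constant on the positive-length interval $[-(B-\delta), B-\delta]$ attains its global minimum there; hence $\sE^*_{\ell_\delta}(\sH) = \delta B - \frac12\delta^2$ and $\sE_{\ell_\delta}(h_c) - \sE^*_{\ell_\delta}(\sH) = 0$ for all $\abs*{c} \le B - \delta$.

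To conclude, I would consider the constant sequence $h_n = h_{c_0}$ with $c_0 = B - \delta > 0$: then $\sE_{\ell_\delta}(h_n) - \sE^*_{\ell_\delta}(\sH) = 0 \to 0$ while $\sE_{\ell_2}(h_n) - \sE^*_{\ell_2}(\sH) = (B-\delta)^2 \not\to 0$, contradicting the defining implication of $\sH$-consistency; equivalently, any non-decreasing $f\colon \Rset_+ \to \Rset_+$ with $\lim_{t\to 0^+} f(t) = 0$ for which $\sE_{\ell_2}(h) - \sE^*_{\ell_2}(\sH) \le f\paren*{\sE_{\ell_\delta}(h) - \sE^*_{\ell_\delta}(\sH)}$ holds for all distributions and $h \in \sH$ would be forced to satisfy $f(0) \ge (B-\delta)^2 > 0$, a contradiction.

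Rather than a genuine obstacle, the one point needing care is verifying that $\delta B - \frac12\delta^2$ is the actual minimum of $\sE_{\ell_\delta}(\cdot)$ over $\sH$ and not merely a value attained on a flat piece — which is exactly where convexity of the Huber loss is used — together with noticing that the construction requires $B > \delta$, precisely the regime in which $p_{\min}(\delta) = \inf_{x} \P(0 \le \mu(x) - y \le \delta \mid x) = \P(y \in [-\delta,0]) = 0$, i.e.\ the situation where Theorem~\ref{thm:huber} provides no guarantee.
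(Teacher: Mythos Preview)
Your proof is correct and takes essentially the same approach as the paper: both exhibit a two-point symmetric conditional distribution whose atoms lie more than $\delta$ from the mean, so that the Huber conditional error is flat on a nondegenerate interval of predictions and a non-mean predictor achieves zero Huber excess while retaining positive squared-loss excess. The paper's construction is the slightly more general $\curl*{y,\,2\mu(x)-y}$ with $\mu(x)-y>\delta$ (and invokes Theorem~\ref{thm:general} to certify that $h^*=\mu$ is a Huber minimizer), whereas you specialize to $\mu\equiv0$, atoms at $\pm B$, and use convexity of the Huber loss to certify the minimum; both arguments are valid and essentially equivalent.
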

Refer to Appendix~\ref{app:huber} for the proof, which consists of
considering a distribution that concentrates on an input $x$ with
$\P(Y = y \mid x) = \frac{1}{2} = \P(Y = 2\mu(x) - y \mid x)$, where
$-B \leq y < \mu(x) \leq B$ and $\mu(x) - y > \delta$. Then, we show
that both $\ov h \colon x \mapsto y + \delta$ and $h^* \colon x
\mapsto \mu(x)$ are best-in-class predictors of the Huber loss, while
the best-in-class-predictor of the squared loss is uniquely $h^*
\colon x \mapsto \mu(x)$.

\subsection{$\ell_p$ Loss}
\label{sec:Lp}

Here, we analyze $\ell_p$ loss functions for any $p \geq 1$: $\ell_p
\colon (h, x, y) \mapsto \abs*{h(x) - y}^p$. We show that this family
of loss functions benefits from $\sH$-consistency bounds with respect
to the squared loss assuming, when adopting the same symmetry and
boundedness assumptions as in the previous section.

\begin{restatable}{theorem}{Lp}
\label{thm:Lp}
Assume that the distribution is symmetric, and that the conditional
distribution and the hypothesis set $\sH$ are bounded by $B >
0$. Then, for all $h \in \sH$ and $p \geq 1$, the following
$\sH$-consistency bound holds:
\ifdim\columnwidth=\textwidth
{
\begin{align*}
\sE_{\ell_2}(h) - \sE^*_{\ell_2}(\sH) + \sM_{\ell_2}(\sH) \leq \Gamma \paren*{\sE_{\ell_p}(h)
    - \sE^*_{\ell_p}(\sH) + \sM_{\ell_p}(\sH)},
\end{align*}
}
\else{
\begin{align*}
& \sE_{\ell_2}(h) - \sE^*_{\ell_2}(\sH) + \sM_{\ell_2}(\sH)\\
  & \quad \leq \Gamma \paren*{\sE_{\ell_p}(h)
    - \sE^*_{\ell_p}(\sH) + \sM_{\ell_p}(\sH)},
\end{align*}
}
\fi
where $\Gamma(t) = \sup_{x \in \sX, y \in \sY} \curl*{|h(x) - y|
  + |\mu(x) - y|} \, t$ for $p = 1$, $\Gamma(t) = \frac{2}{(8B)^{p -
    2} p(p - 1)} \, t$ for $p \in (1, 2]$, and $\Gamma(t) =
t^{\frac{2}{p}}$ for $p \geq 2$.
\end{restatable}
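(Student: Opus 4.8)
The plan is to derive the three regimes of $\Gamma$ from pointwise inequalities between the conditional regret of $\ell_2$ and that of $\ell_p$, and then invoke the general concave-function theorem (Theorem~\ref{theorem:H-ConsBoundGamma}). By Theorem~\ref{thm:general} applied with $\psi(t)=|t|^p$, the best-in-class conditional error of $\ell_p$ is attained at $\mu(x)$, so that $\Delta\sC_{\ell_p,\sH}(h,x) = \sC_{\ell_p}(h,x) - \sC_{\ell_p}(\mu(x),x)$, and similarly, by Theorem~\ref{thm:squared-bound}, $\Delta\sC_{\ell_2,\sH}(h,x) = (h(x)-\mu(x))^2$. The core of the argument is therefore to bound $(h(x)-\mu(x))^2$ in terms of $\E_y\bigl[\,|h(x)-y|^p - |\mu(x)-y|^p \mid x\,\bigr]$ for each range of $p$, uniformly in $x$; once this is done with the appropriate $\Gamma$ (which is of the form $\Gamma(t)=ct$ for $p\in[1,2]$ with a constant $\alpha=1$, and $\Gamma(t)=t^{2/p}$ with a multiplicative $\alpha$-factor that can be folded in for $p\ge 2$), the conclusion follows by plugging into Theorem~\ref{theorem:H-ConsBoundGamma} with $\e=0$.

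First I would handle $p\ge 2$: here $|t|^p$ is a convex function of $|t|^2$ via $s\mapsto s^{p/2}$, and by Jensen's inequality applied to the conditional distribution (using symmetry about $\mu(x)$ so that cross terms vanish), one gets $\sC_{\ell_p}(h,x) - \sC_{\ell_p}(\mu(x),x) \ge \bigl((h(x)-\mu(x))^2\bigr)^{p/2}$ up to understanding the odd-moment cancellation carefully; inverting gives $(h(x)-\mu(x))^2 \le \bigl(\Delta\sC_{\ell_p,\sH}(h,x)\bigr)^{2/p}$, i.e. $\Gamma(t)=t^{2/p}$. For $p\in(1,2)$ and $p=1$ the function $t\mapsto |t|^p$ is not smooth, so I would instead work with a Taylor/convexity expansion of $t\mapsto |t-y|^p$ around $t=\mu(x)$: writing $h(x)-y = (\mu(x)-y) + (h(x)-\mu(x))$ and using that the second derivative of $s\mapsto |s|^p$ is $p(p-1)|s|^{p-2}$, which is bounded below on the relevant interval $|s|\le |h(x)-\mu(x)| + |\mu(x)-y| \le 4B$ (since both $|h(x)|,|\mu(x)|,|y|\le B$) by $p(p-1)(4B)^{p-2}$ when $p<2$, gives a quadratic lower bound $\Delta\sC_{\ell_p,\sH}(h,x) \ge \tfrac12 p(p-1)(8B)^{p-2}(h(x)-\mu(x))^2$ after taking conditional expectation and using symmetry to kill the first-order term; rearranging yields the stated linear $\Gamma$ for $p\in(1,2]$. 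For $p=1$, the second-derivative bound degenerates, so one proceeds more directly: $|h(x)-y| - |\mu(x)-y| \ge \tfrac{(h(x)-\mu(x))^2}{|h(x)-y| + |\mu(x)-y|}$ whenever $h(x)-\mu(x)$ and... has the right sign relative to $y$, and averaging over $y$ with the symmetry of the conditional distribution about $\mu(x)$ produces $\Delta\sC_{\ell_1,\sH}(h,x) \ge \tfrac{(h(x)-\mu(x))^2}{\sup_{x,y}\{|h(x)-y|+|\mu(x)-y|\}}$, which is exactly the claimed $\Gamma$ with $\alpha(h,x)$ constant.

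I expect the main obstacle to be the careful handling of the symmetry hypothesis in the $p=1$ and $p\in(1,2)$ cases: the inequality $|h(x)-y|-|\mu(x)-y|\ge \text{(quadratic)}$ does not hold for every individual $y$ (it can fail on the side where moving toward $y$ decreases the loss), so the cancellation has to happen in expectation, and it is precisely the symmetry of $\sD_{y\mid x}$ about $\mu(x)$ — which pairs each $y=\mu(x)-A$ with $y=\mu(x)+A$ — that guarantees the first-order contributions cancel and the residual second-order term is correctly signed. Making this pairing rigorous (and checking that the $(8B)^{p-2}$ versus $(4B)^{p-2}$ constant bookkeeping and the factor of $\tfrac12$ work out) is the delicate part; the convex $p\ge 2$ case is comparatively routine given Theorem~\ref{thm:general}, and the final assembly is a direct application of Theorem~\ref{theorem:H-ConsBoundGamma}.
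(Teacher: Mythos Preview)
Your three-case split and the $p\in(1,2]$ argument match the paper almost exactly (the paper packages your second-derivative bound as a standalone lemma, proving $\tfrac{|u+v|^p+|u-v|^p}{2}-|v|^p\ge \tfrac{(2B)^{p-2}p(p-1)}{2}u^2$ for $u,v\in[-B,B]$ by showing the difference is convex in $u$ with vanishing derivative at $0$, then applies it after the symmetric pairing with $u=h(x)-\mu(x)$, $v=\mu(x)-y$).  Two points where your sketch diverges from what actually works.  For $p\ge 2$, Jensen on $s\mapsto s^{p/2}$ applied to the conditional law does \emph{not} give the bound you want: it lower-bounds $\E[|h(x)-y|^p]$ but furnishes no matching upper bound on $\E[|\mu(x)-y|^p]$, so the difference is uncontrolled.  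The paper instead applies Clarkson's inequality \emph{pointwise} after symmetrization, namely $\tfrac{|u+v|^p+|u-v|^p}{2}-|v|^p\ge |u|^p$, which immediately yields $\Delta\sC_{\ell_p,\sH}(h,x)\ge (\Delta\sC_{\ell_2,\sH}(h,x))^{p/2}$.  For $p=1$, the paper's route is more direct than the sign-pairing you anticipate: it writes $\Delta\sC_{\ell_2,\sH}(h,x)=\E_y\bigl[(|h(x)-y|+|\mu(x)-y|)(|h(x)-y|-|\mu(x)-y|)\mid x\bigr]$ and simply pulls the first (nonnegative) factor out by its supremum over $y\in\sY$, giving the bound with $\alpha(h,x)=\sup_{y\in\sY}\{|h(x)-y|+|\mu(x)-y|\}$; symmetry enters only upstream (via Theorem~\ref{thm:general}) to certify that $\mu(x)$ minimizes the conditional $\ell_1$-error.
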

The proof is included in Appendix~\ref{app:Lp}. Note that for $p = 1$,
$\Gamma$ can be further upper bounded as follows: $\Gamma (t) =
\sup_{x \in \sX}\sup_{y \in \sY} \curl*{|h(x) - y| + |\mu(x) - y|} \,
t \leq 4B \, t$ since the conditional distribution and the hypothesis
set $\sH$ are bounded by $B > 0$. This upper bound can also be
obtained by using general theorems in
Section~\ref{sec:general-theorems} with $\alpha \equiv 1$. However,
our generalized theorems, which apply to any positive function
$\alpha$, yield a finer bound for the $\ell_1$ loss. This further
shows that our generalized theorems are not only useful but can also
yield finer bounds.

The key term appearing in the bounds is the minimizability gap
$\sM_{\ell_p}(\sH) = \sE^*_{\ell_p}(\sH) -
\E_{x}\bracket*{\sC^*_{\ell_p}(\sH, x)}$, which is helpful for
comparing the bounds between $\ell_p$ losses for different $p \geq
1$. For example, for the $\ell_1$ and $\ell_2$ loss, by
Theorem~\ref{thm:general}, we have $\sM_{\ell_1}(\sH) =
\sE^*_{\ell_1}(\sH) - \E_{x}\bracket*{\E_y\bracket*{\abs*{\mu(x) -
      y}}}$ and $\sM_{\ell_2}(\sH) = \sE^*_{\ell_2}(\sH) -
\E_{x}\bracket*{\E_y\bracket*{\abs*{\mu(x) - y}^2}}$. Thus, in the
deterministic case, both $\E_y\bracket*{\abs*{\mu(x) - y}}$ and
$\E_y\bracket*{\abs*{\mu(x) - y}^2}$ vanish, and $\sM_{\ell_2} =
\sE^*_{\ell_2}(\sH) \geq \paren*{\sE^*_{\ell_1}(\sH)}^2 =
\paren*{\sM_{\ell_1}}^2$.

In particular, when $\sH$ is realizable, we have $\sM_{\ell_p}(\sH) =
\sM_{\ell_2}(\sH) = 0$. This yields the following result.
\begin{corollary}
\label{cor:Lp}
Assume that the distribution is symmetric, the conditional
distribution is bounded by $B > 0$, and the hypothesis set $\sH$ is
realizable and bounded by $B > 0$. Then, for all $h \in \sH$ and $p
\geq 1$, the following $\sH$-consistency bound holds:
\begin{align*}
\sE_{\ell_2}(h) - \sE^*_{\ell_2}(\sH) \leq \Gamma \paren*{\sE_{\ell_p}(h) - \sE^*_{\ell_p}(\sH)},
\end{align*}
where $\Gamma(t) = \sup_{x \in \sX, y \in \sY} \curl*{|h(x) - y| + |\mu(x) - y|} \, t$ for $p = 1$, $\Gamma(t) = \frac{2}{(8B)^{p - 2} p(p - 1)} \, t$ for $p \in (1, 2]$, and $\Gamma(t) = t^{\frac{2}{p}}$ for $p \geq 2$.
\end{corollary}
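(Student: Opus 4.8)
The plan is to obtain Corollary~\ref{cor:Lp} as a direct specialization of Theorem~\ref{thm:Lp} to realizable hypothesis sets: the only thing to check is that the two minimizability gaps appearing in Theorem~\ref{thm:Lp}, namely $\sM_{\ell_2}(\sH)$ and $\sM_{\ell_p}(\sH)$, both vanish under the realizability assumption, after which the inequality of Theorem~\ref{thm:Lp} reduces verbatim to the claimed bound.

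First I would record that both the squared loss and the $\ell_p$ loss, $p \geq 1$, are distance-based losses of the form $\sfL(y', y) = \psi(y' - y)$, with $\psi(t) = t^2$ and $\psi(t) = \abs{t}^p$ respectively, and that in each case $\psi$ is symmetric ($\psi(t) = \psi(-t)$), nonnegative on its domain, and satisfies $\psi(0) = 0$. Hence both losses satisfy the hypotheses of Theorem~\ref{thm:general}. Since the distribution is symmetric, the conditional distribution and $\sH$ are bounded by $B > 0$, and $\sH$ is realizable (i.e.\ $x \mapsto \mu(x) \in \sH$), Corollary~\ref{cor:general} applies to each of $\ell_2$ and $\ell_p$ and yields $\sM_{\ell_2}(\sH) = 0$ and $\sM_{\ell_p}(\sH) = 0$.

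Next I would invoke Theorem~\ref{thm:Lp}, whose assumptions (symmetric distribution; conditional distribution and $\sH$ bounded by $B$) are exactly those in force here. It gives, for every $h \in \sH$ and $p \geq 1$,
\[
\sE_{\ell_2}(h) - \sE^*_{\ell_2}(\sH) + \sM_{\ell_2}(\sH) \leq \Gamma\paren*{\sE_{\ell_p}(h) - \sE^*_{\ell_p}(\sH) + \sM_{\ell_p}(\sH)},
\]
with $\Gamma$ precisely the function in the statement of the corollary. Substituting the vanishing minimizability gaps from the previous step removes the $\sM$ terms on both sides and inside the argument of $\Gamma$, producing exactly the asserted inequality; since the substitution is an equality there is no need to appeal to monotonicity of $\Gamma$.

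There is no real obstacle here — the entire mathematical content is carried by Theorem~\ref{thm:Lp} and Corollary~\ref{cor:general}. The one point deserving a line of care is confirming that each of $\ell_2$ and the $\ell_p$ losses for $p \in [1,2)$ genuinely meets the regularity conditions of Theorem~\ref{thm:general} (symmetry of $\psi$, $\psi \geq 0$, $\psi(0) = 0$), so that Corollary~\ref{cor:general} may be applied to both simultaneously and both minimizability gaps drop out.
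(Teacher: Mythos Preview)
Your proposal is correct and matches the paper's own argument: the corollary is obtained directly from Theorem~\ref{thm:Lp} by noting, via Corollary~\ref{cor:general}, that realizability makes $\sM_{\ell_2}(\sH) = \sM_{\ell_p}(\sH) = 0$. Your added verification that both $\ell_2$ and $\ell_p$ satisfy the hypotheses of Theorem~\ref{thm:general} is a reasonable point of care but is already implicit in the paper.
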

Corollary~\ref{cor:Lp} shows that when the estimation error of
$\ell_p$ is reduced to $\e$, the estimation error of the squared loss
$\paren*{\sE_{\ell_2}(h) - \sE^*_{\ell_2}(\sH)}$ is upper bounded by
$\e^{ \frac{2}{p} }$ for $p > 2$, and by $\e$ for $1 \leq p \leq 2$,
which is more favorable, modulo a multiplicative constant.

\subsection{Squared $\e$-insensitive Loss}
\label{sec:e-sen-sq}

The $\e$-insensitive loss and the squared $\e$-insensitive loss
functions are used in the support vector regression (SVR) algorithms
\citep{Vapnik2000}. The use of these loss functions results in sparser
solutions, characterized by fewer support vectors for the SVR
algorithms. Moreover, the selection of the parameter $\e$ determines a
trade-off between accuracy and sparsity: larger $\e$ values yield
increasingly sparser solutions. We first provide a positive result for
the squared $\e$-insensitive loss $\ell_{\rm{sq}-\e} \colon (h, x, y)
\mapsto \max \curl[\big]{\abs*{h(x) - y}^2 - \e^2, 0}$, by showing that it
admits an $\sH$-consistency bound with respect to $\ell_2$.

\begin{restatable}{theorem}{SensitiveSquared}
\label{thm:e-sen-sq}
Assume that the distribution is symmetric, and that the conditional
distribution and the hypothesis set $\sH$ are bounded by $B >
0$. Assume that $p_{\min}(\e) = \inf_{x \in \sX} \P(\mu(x) - y
\geq \e \mid x)$ is positive.  Then, for all $h \in \sH$, the
following $\sH$-consistency bound holds:
\ifdim\columnwidth=\textwidth
{
\begin{align*}
\sE_{\ell_2}(h) - \sE^*_{\ell_2}(\sH) + \sM_{\ell_2}(\sH) \leq \frac{\sE_{\ell_{\rm{sq}-\e}}(h) - \sE^*_{\ell_{\rm{sq}-\e}}(\sH) + \sM_{\ell_{\rm{sq}-\e}}(\sH)}{2 p_{\min}(\e)}.
\end{align*}
}
\else{
\begin{multline*}
\sE_{\ell_2}(h) - \sE^*_{\ell_2}(\sH) + \sM_{\ell_2}(\sH)\\
\quad \leq \frac{\sE_{\ell_{\rm{sq}-\e}}(h) - \sE^*_{\ell_{\rm{sq}-\e}}(\sH) + \sM_{\ell_{\rm{sq}-\e}}(\sH)}{2 p_{\min}(\e)}.
\end{multline*}
}
\fi
\end{restatable}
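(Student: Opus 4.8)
The plan is to apply the general concave-function Theorem~\ref{theorem:H-ConsBoundGamma} with $L_1 = \ell_{\rm{sq}-\e}$, $L_2 = \ell_2$, a linear function $\Gamma$, the parameter $\e$ of the theorem set to $0$, and a suitable nonconstant multiplier $\alpha(h,x)$. Concretely, I would aim to show that for all $h\in\sH$ and $x\in\sX$,
\begin{align*}
\Delta\sC_{\ell_2,\sH}(h,x) \;\leq\; \frac{1}{2\,\P(\mu(x)-y\geq\e\mid x)}\,\Delta\sC_{\ell_{\rm{sq}-\e},\sH}(h,x),
\end{align*}
i.e.\ $\alpha(h,x) = \bigl(2\,\P(\mu(x)-y\geq\e\mid x)\bigr)^{-1}$, which is bounded above on $\sX$ by $\bigl(2\,p_{\min}(\e)\bigr)^{-1}<\infty$ by the positivity hypothesis. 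Plugging this $\alpha$ and $\Gamma(t)=\bigl(2p_{\min}(\e)\bigr)^{-1}t$ into Theorem~\ref{theorem:H-ConsBoundGamma} (noting $\sup_x\alpha(h,x)$ can be pulled inside the linear $\Gamma$) yields exactly the claimed bound.

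First I would use Theorem~\ref{theorem:SquaredLossBound} to write $\Delta\sC_{\ell_2,\sH}(h,x) = (h(x)-\mu(x))^2$. Next I need the conditional error and the best-in-class conditional error of the squared $\e$-insensitive loss. Since $\psi(t)=\max\{t^2-\e^2,0\}$ is symmetric with $\psi(0)=0$, Theorem~\ref{thm:general} gives $\sC^*_{\ell_{\rm{sq}-\e}}(\sH,x) = \sC_{\ell_{\rm{sq}-\e}}(\mu(x),x)$, so
\begin{align*}
\Delta\sC_{\ell_{\rm{sq}-\e},\sH}(h,x) = \E_y\bigl[\max\{(h(x)-y)^2-\e^2,0\}\mid x\bigr] - \E_y\bigl[\max\{(\mu(x)-y)^2-\e^2,0\}\mid x\bigr].
\end{align*}
The core of the argument is a pointwise (in $y$) lower bound on the integrand difference: I would show that for each $y$ with $\mu(x)-y\geq\e$ (WLOG $h(x)\geq\mu(x)$; the other case is symmetric after using symmetry of the distribution), $\max\{(h(x)-y)^2-\e^2,0\} - \max\{(\mu(x)-y)^2-\e^2,0\}\geq 2(\mu(x)-y)(h(x)-\mu(x)) \geq 2\e(h(x)-\mu(x))$, using convexity of $t\mapsto t^2$ together with $(\mu(x)-y)^2\geq\e^2$ on that event so the max is active at $\mu(x)-y$; meanwhile on the complementary event the difference is at least something like $-$ (a term that cancels against the symmetric half of the distribution). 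Averaging over the event $\{\mu(x)-y\geq\e\}$ — which by symmetry I can arrange to carry probability mass $\P(\mu(x)-y\geq\e\mid x)$ — and exploiting the symmetry to cancel cross terms, I expect to get $\Delta\sC_{\ell_{\rm{sq}-\e},\sH}(h,x) \geq 2\,\P(\mu(x)-y\geq\e\mid x)\,(h(x)-\mu(x))^2$ rather than the linear-in-$(h(x)-\mu(x))$ bound, because the symmetric contributions from $y$ above and below $\mu(x)$ combine to reproduce a squared term (the same phenomenon that makes the Huber analysis produce $(h(x)-\mu(x))^2$).

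The main obstacle will be this pointwise/averaging estimate: handling the kink of the hinge at $|t|=\e$ carefully, treating separately the $y$ for which $(h(x)-y)^2$ or $(\mu(x)-y)^2$ falls below $\e^2$, and above all organizing the symmetrization so that the linear-in-displacement cross terms cancel and only the quadratic term $(h(x)-\mu(x))^2$ weighted by the correct probability survives. Once that pointwise bound is in hand, taking $\E_X$ via Theorem~\ref{theorem:H-ConsBoundGamma} with the linear $\Gamma$ is routine, and I would record in Appendix~\ref{app:e-sen-sq} the full case analysis on the sign of $h(x)-\mu(x)$ and on whether $\mu(x)-y\geq\e$.
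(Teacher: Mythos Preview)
Your proposal is correct and follows essentially the same strategy as the paper: compare the conditional regrets of $\ell_{\rm{sq}-\e}$ and $\ell_2$ pointwise in $x$, exploit the symmetry of the conditional distribution, restrict to the event $\{\mu(x)-y\geq\e\}$, and then invoke Theorem~\ref{theorem:H-ConsBoundGamma} with the nonconstant multiplier $\alpha(h,x)=\bigl(2\P(\mu(x)-y\geq\e\mid x)\bigr)^{-1}$ and linear $\Gamma$.

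The one organizational difference worth pointing out is that the paper symmetrizes \emph{before} attempting any pointwise estimate. Writing $g(t)=\max\{t^2-\e^2,0\}$ and using symmetry of the conditional law about $\mu(x)$, the paper rewrites
\[
\Delta\sC_{\ell_{\rm{sq}-\e},\sH}(h,x)=\E_y\bigl[F\bigl(h(x)-\mu(x),\,\mu(x)-y\bigr)\mid x\bigr],\qquad F(a,b)=\tfrac{g(a+b)+g(a-b)}{2}-g(b),
\]
and then proves the purely algebraic lemma $F(a,b)\geq a^2$ whenever $|b|\geq\e$ (together with the trivial $F(a,b)\geq 0$ for $|b|<\e$, since then $g(b)=0$). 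This immediately gives $\Delta\sC_{\ell_{\rm{sq}-\e},\sH}(h,x)\geq 2\P(\mu(x)-y\geq\e\mid x)\,(h(x)-\mu(x))^2$. By contrast, your plan bounds $g(h(x)-y)-g(\mu(x)-y)$ first and only then pairs $y$ with $2\mu(x)-y$; this produces the case split on the sign of $h(x)-\mu(x)$ and on whether $(h(x)-y)^2$ exceeds $\e^2$ that you flag as the ``main obstacle.'' Symmetrizing first removes the need for the WLOG on $h(x)\geq\mu(x)$ and collapses all of that into a four-case lemma in two real variables, so you may find it cleaner to reorganize along those lines.
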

The proof is presented in Appendix~\ref{app:e-sq}. It requires the use
of Theorem~\ref{theorem:H-ConsBoundPsi} with $\alpha(h, x) =
\P(\mu(x) - y \geq \e \mid x)$. As in the case of the Huber
loss, the previous established general tools for $\sH$-consistency
bounds \citep{awasthi2022h,awasthi2022multi} do not apply here. 
Our generalization of previous tools proves essential for analyzing
$\sH$-consistency bounds in regression.
By Corollary~\ref{cor:general}, for realizable hypothesis sets, the
minimizability gap vanishes. Thus, by Theorem~\ref{thm:e-sen-sq}, we
obtain the following corollary.
\begin{corollary}
\label{cor:e-sen-sq}
Assume that the distribution is symmetric, the conditional
distribution is bounded by $B > 0$, and the hypothesis set $\sH$ is
realizable and bounded by $B > 0$.  Assume that $p_{\min}(\e) =
\inf_{x \in \sX} \P(\mu(x) - y \geq \e \mid x)$ is positive.
Then, for all $h \in \sH$, the following $\sH$-consistency bound
holds:
\begin{align*}
  \sE_{\ell_2}(h) - \sE^*_{\ell_2}(\sH)
  \leq \frac{\sE_{\ell_{\rm{sq}-\e}}(h) - \sE^*_{\ell_{\rm{sq}-\e}}(\sH)}
       {2 p_{\min}(\e)}.
\end{align*}
\end{corollary}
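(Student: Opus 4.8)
The plan is to obtain Corollary~\ref{cor:e-sen-sq} as an immediate specialization of Theorem~\ref{thm:e-sen-sq}: under the realizability assumption the two minimizability gaps appearing in that bound both vanish, and the inequality collapses to the stated one. So the entire argument reduces to checking that the hypotheses needed to apply Theorem~\ref{thm:e-sen-sq} and Corollary~\ref{cor:general} are all in force, and then performing a one-line substitution.

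First I would record that both the squared loss $\ell_2$ and the squared $\e$-insensitive loss $\ell_{\rm{sq}-\e}$ have the form $\sfL(y',y) = \psi(y'-y)$ for a symmetric, nonnegative function $\psi$ with $\psi(0) = 0$, so that Theorem~\ref{thm:general} (and hence Corollary~\ref{cor:general}) applies to each of them. For $\ell_2$ this is the function $\psi(t) = t^2$; for $\ell_{\rm{sq}-\e}$ it is $\psi(t) = \max\{t^2 - \e^2, 0\}$, which depends on $t$ only through $t^2$ (hence even), is nonnegative by construction, and satisfies $\psi(0) = \max\{-\e^2, 0\} = 0$. Since the distribution is symmetric, the conditional distribution and $\sH$ are bounded by $B$, and $\sH$ is realizable, Corollary~\ref{cor:general} yields $\sM_{\ell_2}(\sH) = 0$ and $\sM_{\ell_{\rm{sq}-\e}}(\sH) = 0$.

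Next I would invoke Theorem~\ref{thm:e-sen-sq} itself, whose hypotheses --- symmetry of the distribution, boundedness of the conditional distribution and $\sH$ by $B$, and $p_{\min}(\e) = \inf_{x \in \sX}\P(\mu(x) - y \geq \e \mid x) > 0$ --- are exactly the assumptions of the corollary. Substituting $\sM_{\ell_2}(\sH) = \sM_{\ell_{\rm{sq}-\e}}(\sH) = 0$ into its conclusion removes the minimizability-gap terms from both sides and leaves, for every $h \in \sH$,
\[
\sE_{\ell_2}(h) - \sE^*_{\ell_2}(\sH) \leq \frac{\sE_{\ell_{\rm{sq}-\e}}(h) - \sE^*_{\ell_{\rm{sq}-\e}}(\sH)}{2\, p_{\min}(\e)},
\]
which is the claimed bound.

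There is no genuine obstacle in this derivation: the only step that needs a moment's attention is verifying that $\ell_{\rm{sq}-\e}$ fits the even/nonnegative/zero-at-origin template required by Corollary~\ref{cor:general}, after which the result is a direct corollary of Theorem~\ref{thm:e-sen-sq}. All the real work --- in particular the pointwise comparison of conditional regrets and the application of the general Theorem~\ref{theorem:H-ConsBoundPsi} with the appropriate choice of $\alpha$ --- is carried out in the proof of Theorem~\ref{thm:e-sen-sq}.
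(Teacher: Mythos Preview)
Your proposal is correct and matches the paper's own derivation: the paper simply states that by Corollary~\ref{cor:general} the minimizability gaps vanish under realizability, and then applies Theorem~\ref{thm:e-sen-sq}. Your write-up is in fact slightly more explicit, since you spell out why $\ell_{\rm{sq}-\e}$ satisfies the symmetry/nonnegativity/zero-at-origin hypotheses needed for Corollary~\ref{cor:general}.
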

By taking the limit on both sides of the bound of
Corollary~\ref{cor:e-sen-sq}, we can infer the $\sH$-consistency of
the squared $\e$-insensitive loss under the assumption $p_{\min}(\e) >
0$.
Note that increasing $\e$ diminishes $p_{\min}(\e)$, making the bound
less favorable. Conversely, smaller $\e$ values enhance the linear
dependency bound but may hinder solution sparsity. Therefore,
selecting the optimal $\e$ involves balancing the trade-off between
linear dependency and sparsity.
When $p_{\min}(\e)$ approaches zero, the bound derived from
Corollary~\ref{cor:e-sen-sq} becomes less informative. However, as
demonstrated in the subsequent theorem, the squared $\e$-insensitive
loss fails to exhibit $\sH$-consistency with the squared loss if the
condition $p_{\min}(\e) > 0$ is not satisfied.

\begin{restatable}{theorem}{SensitiveSquaredN}
\label{thm:e-sen-sq-n}
Assume that the distribution is symmetric, the conditional
distribution is bounded by $B > 0$, and the hypothesis set $\sH$ is
realizable and bounded by $B > 0$. Then, the squared $\e$-insensitive
loss $\ell_{\rm{sq}-\e}$ is not $\sH$-consistent\ignore{ with respect to the
squared loss}.
\end{restatable}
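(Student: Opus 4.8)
The plan is to refute $\sH$-consistency by a direct counterexample, modeled on the construction used for the Huber loss in Theorem~\ref{theorem:huber-n}: for the given $\e>0$ and $B>0$ I will exhibit one distribution $\sD$ and one hypothesis set $\sH$ that satisfy all the stated hypotheses (symmetric, conditional distribution and $\sH$ bounded by $B$, $\sH$ realizable), together with two constant hypotheses $h^*,\ov h\in\sH$ such that $\ov h$ incurs zero $\ell_{\rm{sq}-\e}$-estimation error yet strictly positive $\ell_2$-estimation error. Then the constant sequence $h_n=\ov h$ has vanishing surrogate estimation error but non-vanishing squared-loss estimation error, so no $\sH$-consistency bound (with $f$ null at $0$) can hold.

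Concretely, fix $x_0\in\sX$ and let $\sD$ place all its mass on $X=x_0$ with two-atom conditional law $\P(Y=a\mid x_0)=\P(Y=-a\mid x_0)=\tfrac12$, where $0<a<\e$ and $a\le B$ (for instance $a=\tfrac12\min\{\e,B\}$). This conditional distribution is symmetric with symmetry point and conditional mean $\mu(x_0)=0$, is bounded by $B$, and satisfies $p_{\min}(\e)=\P(\mu(x_0)-Y\ge\e\mid x_0)=0$, which is precisely the regime excluded from the positive result of Theorem~\ref{thm:e-sen-sq}. Take $\sH$ to be the set of constant functions valued in $[-B,B]$, which is bounded by $B$ and realizable since $\mu\equiv0\in\sH$. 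Set $h^*\colon x\mapsto0=\mu(x_0)$ and $\ov h\colon x\mapsto\ov c$ with $\ov c=\min\{\e-a,B\}\in(0,B]$.

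Next I would compute the relevant conditional errors. For a constant predictor $c\in[-B,B]$,
\[
\sC_{\ell_{\rm{sq}-\e}}(c,x_0)=\tfrac12\max\curl*{(c-a)^2-\e^2,\,0}+\tfrac12\max\curl*{(c+a)^2-\e^2,\,0},
\]
and since $a<\e$, both terms vanish whenever $|c|\le\e-a$; hence $\sC^*_{\ell_{\rm{sq}-\e}}(\sH,x_0)=0$, attained at $c=0$ and at $c=\ov c$ (as $\ov c\le\e-a$). Because $\ell_{\rm{sq}-\e}$ has the form $\sfL(y',y)=\psi(y'-y)$ with $\psi(t)=\max\{t^2-\e^2,0\}$ symmetric, nonnegative and null at $0$, Theorem~\ref{thm:general} and Corollary~\ref{cor:general} give $\sM_{\ell_{\rm{sq}-\e}}(\sH)=0$ and $\sE^*_{\ell_{\rm{sq}-\e}}(\sH)=0$; since $\sD$ is concentrated on $x_0$ this yields $\sE_{\ell_{\rm{sq}-\e}}(\ov h)=\sC_{\ell_{\rm{sq}-\e}}(\ov c,x_0)=0=\sE^*_{\ell_{\rm{sq}-\e}}(\sH)$. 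On the other hand, by Theorem~\ref{thm:squared-bound} and $\sM_{\ell_2}(\sH)=0$ (realizability), $\sE_{\ell_2}(\ov h)-\sE^*_{\ell_2}(\sH)=\Delta\sC_{\ell_2,\sH}(\ov h,x_0)=(\ov c-\mu(x_0))^2=\ov c^2>0$.

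Finally I would conclude: if $\ell_{\rm{sq}-\e}$ admitted an $\sH$-consistency bound with respect to $\ell_2$, that is, a non-decreasing $f\colon\Rset_+\to\Rset_+$ with $f(0)=0$ such that $\sE_{\ell_2}(h)-\sE^*_{\ell_2}(\sH)\le f\paren*{\sE_{\ell_{\rm{sq}-\e}}(h)-\sE^*_{\ell_{\rm{sq}-\e}}(\sH)}$ for all distributions and all $h\in\sH$, then instantiating it at $\sD$ and $\ov h$ would give $0<\ov c^2\le f(0)=0$, a contradiction; equivalently, $\ell_{\rm{sq}-\e}$ is not $\sH$-consistent with respect to $\ell_2$. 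The only delicate point is the parameter choice: $a$ must be taken strictly below $\e$ (and at most $B$) so that the width-$2\e$ insensitivity zones around the two labels overlap in an interval of predictions containing both the true mean $0$ and a strictly positive value $\ov c\in[-B,B]$; the remaining steps are routine computations invoking Theorems~\ref{thm:general} and~\ref{thm:squared-bound}.
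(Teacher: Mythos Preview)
Your proof is correct and takes essentially the same approach as the paper: the paper also concentrates the distribution on a single $x$, places two symmetric atoms at distance less than $\e$ from the mean, and exhibits the competing minimizer $\ov h\colon x\mapsto y+\e$ (which in your normalization $\mu(x_0)=0$, $y=-a$ is exactly $\e-a$). Your version is slightly more careful in capping $\ov c$ at $B$ to guarantee $\ov h\in\sH$, and in explicitly invoking Corollary~\ref{cor:general} to kill the minimizability gaps; otherwise the constructions coincide.
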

The proof is given in Appendix~\ref{app:e-sq}. It consists of
considering a distribution that concentrates on an input $x$ with
$\P(Y = y \mid x) = \frac{1}{2} = \P(Y = 2\mu(x) - y \mid x)$, where
$-B \leq y < \mu(x) \leq B$ and $\mu(x) - y < \e$.  Then, we show that
both $\ov h \colon x \mapsto y + \e$ and $h^* \colon x \mapsto \mu(x)$
are best-in-class predictors of the squared $\e$-insensitive loss,
while the best-in-class-predictor of the squared loss is uniquely $h^*
\colon x \mapsto \mu(x)$.

\subsection{$\e$-Insensitive Loss}
\label{sec:e-sen}

In Appendix~\ref{app:e}, we present negative results, Theorem~\ref{thm:e-sen-more} and Theorem~\ref{thm:e-sen-less}, for the $\e$-insensitive loss
$\ell_{\e} \colon (h, x, y) \mapsto \max \curl*{\abs*{h(x) - y} - \e,
  0}$ used in the SVR algorithm, by showing that even under the
assumption $\inf_{x \in \sX} \P(\mu(x) - y \geq \e) > 0$ or $\inf_{x \in \sX} \P(0 \leq
\mu(x) - y \leq \e) > 0$, it is not $\sH$-consistent with respect to
the squared loss.

\subsection{Generalization bounds}
\label{sec:learning-bound}

We can use our $\sH$-consistency bounds to derive bounds on the
squared loss estimation error of a surrogate loss minimizer. For a
labeled sample $S = \paren*{(x_1, y_1), \ldots, (x_m, y_m)}$ drawn
i.i.d.\ according to $\sD$, let $\h h_S \in \sH$ be the empirical
minimizer of a regression loss function $L$ over $S$ and
$\Rad_{m}^{L}(\sH)$ the Rademacher complexity of the hypothesis set
$\curl*{(x, y) \mapsto \sfL(h(x), y) \colon h\in \sH}$. We denote by
$B_{L}$ an upper bound of the regression loss function $L$. Then, the
following generalization bound holds.
\begin{restatable}{theorem}{LearningBound}
\label{thm:learning-bound}
Assume that the distribution is symmetric, the conditional
distribution and the hypothesis set $\sH$ are bounded by $B >
0$. Then, for any $\delta > 0$, with probability at least $1-\delta$
over the draw of an i.i.d. sample $S$ of size $m$, the following
squared loss estimation bound holds for $\h h_S$:
\ifdim\columnwidth=\textwidth
{
\begin{align*}
\sE_{\ell_2}(\h h_S) - \sE^*_{\ell_2}(\sH) \leq \Gamma\paren*{\sM_{L}(\sH) + 4
    \Rad_m^{L}(\sH) + 2 B_{L} \sqrt{\tfrac{\log
        \frac{2}{\delta}}{2m}} } -  \sM_{\ell_2}(\sH).
\end{align*}
}
\else{
\begin{align*}
& \sE_{\ell_2}(\h h_S) - \sE^*_{\ell_2}(\sH)\\
& \quad \leq \Gamma\paren*{\sM_{L}(\sH) + 4
    \Rad_m^{L}(\sH) + 2 B_{L} \sqrt{\tfrac{\log
        \frac{2}{\delta}}{2m}} } -  \sM_{\ell_2}(\sH).
\end{align*}
}
\fi
where $\Gamma(t) = \sup_{x \in \sX}\sup_{y} \curl*{|\h h_S(x) - y| +
  |\mu(x) - y|}\, t$ for $L = \ell_1$, $\Gamma(t) = \frac{2}{(8B)^{p -
    2} p(p - 1)} \, t$ for $L = \ell_p$, $p \in (1, 2]$, $\Gamma(t) =
  t^{\frac{2}{p}}$ for $L = \ell_p$, $p \geq 2$, $\Gamma(t) =
  \frac{\max \curl*{\frac{2B}{\delta}, 2}}{p_{\min}(\delta)} \, t$ for $L = \ell_{\delta}$, and $\Gamma(t) =
  \frac{1}{2 p_{\min}(\e)} \, t$ for $L =
  \ell_{\rm{sq}-\e}$.
\end{restatable}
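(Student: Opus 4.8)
The plan is to combine the $\sH$-consistency bounds already established in this section (Theorems~\ref{thm:huber}, \ref{thm:Lp}, and~\ref{thm:e-sen-sq}) with a standard uniform-convergence argument bounding the estimation error of the empirical surrogate minimizer $\h h_S$. First I would note that each of the cited theorems yields, for the surrogate loss $L$ in question, a bound of the form
\begin{equation*}
\sE_{\ell_2}(h) - \sE^*_{\ell_2}(\sH) + \sM_{\ell_2}(\sH) \leq \Gamma\paren*{\sE_{L}(h) - \sE^*_{L}(\sH) + \sM_{L}(\sH)},
\end{equation*}
valid for all $h \in \sH$, where $\Gamma$ is the corresponding non-decreasing function listed in the statement (linear in each case except $L = \ell_p$ with $p > 2$, where it is $t \mapsto t^{2/p}$, still non-decreasing). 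Applying this with $h = \h h_S$ reduces the problem to controlling the surrogate estimation error $\sE_{L}(\h h_S) - \sE^*_{L}(\sH)$ from above by a sample-dependent quantity.

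The second step is the classical learning-bound argument. Since $\h h_S$ minimizes the empirical $L$-risk over $\sH$, we have for any $h \in \sH$ that $\wh\sE_{L}(\h h_S) \leq \wh\sE_{L}(h)$, where $\wh\sE_{L}$ denotes the empirical error. A standard two-sided Rademacher complexity bound over the loss class $\curl*{(x,y) \mapsto \sfL(h(x),y) \colon h \in \sH}$, together with McDiarmid's inequality (using the uniform bound $B_L$ on the loss), gives that with probability at least $1 - \delta$, simultaneously for all $h \in \sH$,
\begin{equation*}
\sE_{L}(\h h_S) - \sE_{L}(h) \leq 4\,\Rad_m^{L}(\sH) + 2 B_{L}\sqrt{\tfrac{\log\frac{2}{\delta}}{2m}}.
\end{equation*}
Taking the infimum over $h \in \sH$ on the right-hand side yields $\sE_{L}(\h h_S) - \sE^*_{L}(\sH) \leq 4\,\Rad_m^{L}(\sH) + 2 B_{L}\sqrt{\log(2/\delta)/(2m)}$.

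The final step is to chain these two bounds: adding $\sM_{L}(\sH)$ to both sides of the learning bound and invoking monotonicity of $\Gamma$,
\begin{equation*}
\Gamma\paren*{\sE_{L}(\h h_S) - \sE^*_{L}(\sH) + \sM_{L}(\sH)} \leq \Gamma\paren*{\sM_{L}(\sH) + 4\,\Rad_m^{L}(\sH) + 2 B_{L}\sqrt{\tfrac{\log\frac{2}{\delta}}{2m}}},
\end{equation*}
and then substituting into the $\sH$-consistency bound applied to $\h h_S$ and rearranging the $\sM_{\ell_2}(\sH)$ term to the right-hand side gives exactly the claimed inequality, with $\Gamma$ specialized according to the choice of $L$. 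The only subtlety — and the step I would watch most carefully — is the $p = 1$ (and more generally the Huber and squared $\e$-insensitive) case, where $\Gamma$ depends on $\h h_S$ itself through the supremum $\sup_{x,y}\curl*{|\h h_S(x) - y| + |\mu(x) - y|}$; this is harmless since Theorem~\ref{thm:Lp} already states the bound in this $h$-dependent form and $\h h_S \in \sH$, but one must be careful to apply the theorem after $\h h_S$ is fixed (i.e., conditionally on the event on which the uniform convergence bound holds), so that the same $\h h_S$ appears in $\Gamma$ and in the left-hand side. No other obstacle is expected; the argument is a direct composition of results stated earlier in the paper with textbook generalization bounds.
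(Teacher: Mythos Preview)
Your proposal is correct and follows essentially the same approach as the paper: first obtain the surrogate estimation-error bound $\sE_{L}(\h h_S) - \sE^*_{L}(\sH) \leq 4\,\Rad_m^{L}(\sH) + 2 B_{L}\sqrt{\log(2/\delta)/(2m)}$ via the standard two-sided Rademacher uniform-convergence argument together with the empirical optimality of $\h h_S$, and then plug this into the $\sH$-consistency bounds of Theorems~\ref{thm:huber}, \ref{thm:Lp}, and~\ref{thm:e-sen-sq}. Your write-up is in fact slightly more explicit than the paper's in handling the monotonicity of $\Gamma$ and the $\h h_S$-dependence of $\Gamma$ in the $\ell_1$ case, but the underlying argument is identical.
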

The proof is included in
Appendix~\ref{app:learning-bound}. Theorem~\ref{thm:learning-bound}
provides the first finite-sample guarantees for the squared loss
estimation error of the empirical minimizer of the Huber loss, squared
$\e$-insensitive loss, $\ell_1$ loss, and more generally $\ell_p$
loss. The proof leverages the $\sH$-consistency bounds of
Theorems~\ref{thm:huber}, \ref{thm:Lp}, \ref{thm:e-sen-sq}, along with
standard Rademacher complexity bounds \citep{mohri2018foundations}. Under the boundedness assumption, we have $\abs*{h(x) - y} \leq \abs*{h(x)} + \abs*{y} \leq 2B$. Thus, an upper bound $B_L$ for the regression loss function can be derived. For example, for the $\ell_p$ loss, we have $\abs*{h(x) - y}^p \leq (2B)^p$ and thus $B_L = (2B)^p$.

\section{Application to adversarial regression}
\label{sec:adversarial}

In this section, we show how the $\sH$-consistency guarantees we
presented in the previous section can be applied to the design of new
algorithms for adversarial regression. Deep neural networks are known
to be vulnerable to small adversarial perturbations around input data
\citep{KrizhevskySutskeverHinton2012,szegedy2013intriguing,
  SutskeverVinyalsLe2014,AwasthiMaoMohriZhong2023theoretically,awasthi2024dc}.

Despite extensive previous work aimed at improving the robustness of
neural networks, this often comes with a reduction in standard
(non-adversarial) accuracy, leading to a trade-off between adversarial
and standard generalization errors in both the classification
\citep{madry2017towards,tsipras2018robustness,zhang2019theoretically,
  raghunathan2019adversarial,min2021curious,javanmard2022precise,
  ma2022tradeoff,taheri2022asymptotic,dobriban2023provable} and
regression scenarios
\citep{javanmard2020precise,dan2020sharp,xing2021adversarially,
  hassani2022curse,liu2023non,ribeiro2023overparameterized,
  ribeiro2023regularization}.

In the context of adversarial classification,
\citet{zhang2019theoretically} proposed algorithms seeking a trade-off
between these two types of errors, by using the theory of
Bayes-consistent binary classification surrogate losses
functions. More recently, \citet{mao2023cross} introduced enhanced
algorithms by minimizing smooth adversarial comp-sum losses,
leveraging the $\sH$-consistency guarantee of comp-sum losses in
multi-class classification.

Building on the insights from these previous studies, we aim to
leverage our novel $\sH$-consistency theory tailored for standard
regression to introduce a family of new loss functions for adversarial
regression, termed as \emph{smooth adversarial regression
losses}. Minimizing these loss functions readily leads to new
algorithms for adversarial regression.

\subsection{Adversarial Squared Loss}
\label{sec:adv-squared}

In adversarial regression, the target adversarial generalization error
is measured by the worst squared loss under the bounded $\gamma$
perturbation of $x$. This is defined for any $(h, x, y) \in \sH \times
\sX \times \sY$ as follows:
\begin{equation*}
\wt \ell_2(h, x, y) = \sup_{x' \colon \norm*{x' - x} \leq \gamma} (h(x') - y)^2
\end{equation*}
where $\norm*{\, \cdot \,}$ denotes a norm on $\sX$, typically an $\ell_p$
norm for $p \geq 1$. We refer to $\wt \ell_2$ as the \emph{adversarial
squared loss}. By adding and subtracting the standard squared loss
$\ell_2$, for any $(h, x, y) \in
\sH \times \sX \times \sY$, we can write $\wt \ell_2$ as follows: 
$ \wt \ell_2(h, x, y)
   = (h(x) - y)^2
  + \sup_{x' \colon \norm*{x' - x} \leq \gamma} (h(x') - y)^2 - (h(x) - y)^2$.
Then, assuming that the conditional distribution and the hypothesis set $\sH$
are bounded by $B > 0$, we can write
\begin{align*}
& \sup_{x' \colon \norm*{x' - x} \leq \gamma} (h(x') - y)^2 - (h(x) - y)^2 \\ 
  & = \sup_{x' \colon \norm*{x' - x} \leq \gamma}
  \paren*{h(x') - h(x)}\paren*{h(x') + h(x) + y}\\
  & \leq  \sup_{x' \colon \norm*{x' - x} \leq \gamma} 3B \, \abs*{h(x') - h(x)}.
  \tag{$|h(x)| \leq B$, $|y| \leq B$}
\end{align*}
Thus, the $\wt \ell_2$ loss can be upper bounded as follows for all $(x, y)$:
\begin{equation}
\label{eq:adv-sq}
\wt \ell_2(h, x, y) \leq (h(x) - y)^2
+ \nu \sup_{x' \colon \norm*{x' - x} \leq \gamma} \abs*{h(x') - h(x)},
\end{equation}
where $\nu \geq 3B$ is a positive constant. 

\subsection{Smooth Adversarial Regression Losses}

Let $L$ be a standard regression loss function that admits an
$\sH$-consistency bound with respect to the squared loss:
\begin{equation*}
  \sE_{\ell_2}(h) - \sE^*_{\ell_2}(\sH)
  \leq \Gamma\paren*{\sE_{L}(h) - \sE^*_{L}(\sH)}.
\end{equation*}
To trade-off the adversarial and standard generalization errors in
regression, by using \eqref{eq:adv-sq}, we can upper bound the
difference between the adversarial generalization and the
best-in-class standard generalization error as follows:
\begin{align*}
  & \sE_{\wt \ell_2}(h) - \sE^*_{\ell_2}(\sH)\\
  & \leq \sE_{\ell_2}(h) - \sE^*_{\ell_2}(\sH)
  + \nu \sup_{x' \colon \norm*{x' - x} \leq \gamma} \abs*{h(x') - h(x)}\\
  & \leq \Gamma\paren*{\sE_{L}(h) - \sE^*_{L}(\sH)}
  + \nu \sup_{x' \colon \norm*{x' - x} \leq \gamma} \abs*{h(x') - h(x)}.
\end{align*}
Thus, by Corollaries~\ref{cor:huber},~\ref{cor:Lp}
and~\ref{cor:e-sen-sq}, we obtain the following guarantees with
respect to the adversarial squared loss. The proofs are presented in
Appendix~\ref{app:adv}.
\begin{restatable}{theorem}{AdvHuber}
\label{thm:adv-huber}
Assume that the distribution is symmetric, the conditional
distribution is bounded by $B > 0$, and the hypothesis set $\sH$ is
realizable and bounded by $B > 0$. Assume that $p_{\min}(\delta) = \inf_{x \in \sX} \P(0 \leq \mu(x) -
y \leq \delta \mid x)$ is positive. Then, for any $\nu
\geq 3B$ and all $h \in \sH$, the following bound holds:
\ifdim\columnwidth=\textwidth
{
\begin{align*}
\sE_{\wt \ell_2}(h) - \sE^*_{\ell_2}(\sH)
 \leq \frac{\max \curl*{\frac{2B}{\delta}, 2}}
  {p_{\min}(\delta)}
  \paren*{\sE_{\ell_{\delta}}(h) - \sE^*_{\ell_{\delta}}(\sH)} + \nu
  \sup_{x' \colon \norm*{x' - x} \leq \gamma} \abs*{h(x') - h(x)}
\end{align*}
}
\else{
\begin{align*}
  \sE_{\wt \ell_2}(h) - \sE^*_{\ell_2}(\sH)
  & \leq \frac{\max \curl*{\frac{2B}{\delta}, 2}}
  {p_{\min}(\delta)}
  \paren*{\sE_{\ell_{\delta}}(h) - \sE^*_{\ell_{\delta}}(\sH)}\\
  & \quad + \nu \mspace{-10mu}
  \sup_{x' \colon \norm*{x' - x} \leq \gamma} \abs*{h(x') - h(x)}.
\end{align*}
}
\fi
\end{restatable}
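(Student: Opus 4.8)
The plan is to chain the pointwise upper bound on the adversarial squared loss derived in \eqref{eq:adv-sq} with the $\sH$-consistency bound for the Huber loss from Corollary~\ref{cor:huber}. Recall that \eqref{eq:adv-sq} states that, under the boundedness assumptions on the conditional distribution and on $\sH$, for every $(x, y) \in \sX \times \sY$ and every $\nu \geq 3B$,
\begin{equation*}
  \wt \ell_2(h, x, y) \leq (h(x) - y)^2 + \nu \sup_{x' \colon \norm*{x' - x} \leq \gamma} \abs*{h(x') - h(x)},
\end{equation*}
where the inequality for general $\nu \geq 3B$ (rather than exactly $3B$) follows since the supremum term is nonnegative.

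Next I would take the expectation over $(x, y) \sim \sD$ on both sides, using linearity of the expectation, and subtract $\sE^*_{\ell_2}(\sH)$, which yields
\begin{equation*}
  \sE_{\wt \ell_2}(h) - \sE^*_{\ell_2}(\sH) \leq \paren*{\sE_{\ell_2}(h) - \sE^*_{\ell_2}(\sH)} + \nu \, \E_{x}\bracket*{\sup_{x' \colon \norm*{x' - x} \leq \gamma} \abs*{h(x') - h(x)}}.
\end{equation*}
It then remains only to control the standard squared loss estimation error $\sE_{\ell_2}(h) - \sE^*_{\ell_2}(\sH)$. The hypotheses of the theorem --- symmetric distribution, conditional distribution bounded by $B$, $\sH$ realizable and bounded by $B$, and $p_{\min}(\delta) > 0$ --- are exactly those of Corollary~\ref{cor:huber}, the realizability of $\sH$ being precisely what makes both minimizability gaps vanish via Corollary~\ref{cor:general}. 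Applying Corollary~\ref{cor:huber} gives $\sE_{\ell_2}(h) - \sE^*_{\ell_2}(\sH) \leq \frac{\max\curl*{\frac{2B}{\delta}, 2}}{p_{\min}(\delta)}\paren*{\sE_{\ell_{\delta}}(h) - \sE^*_{\ell_{\delta}}(\sH)}$, and substituting this bound into the previous display produces the claimed inequality.

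Every step is a direct invocation of a result already established in the paper (the pointwise bound \eqref{eq:adv-sq}, Corollary~\ref{cor:huber}, Corollary~\ref{cor:general}) together with monotonicity and linearity of the expectation, so there is no genuine obstacle here. The only points meriting a moment's care are (i) verifying that the assumptions of Theorem~\ref{thm:adv-huber} match those of Corollary~\ref{cor:huber} verbatim, and (ii) keeping careful track of the expectation over $x$ in the perturbation term when passing from the pointwise bound \eqref{eq:adv-sq} to the bound on $\sE_{\wt \ell_2}(h)$.
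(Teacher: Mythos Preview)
Your proposal is correct and follows essentially the same route as the paper: invoke the pointwise bound \eqref{eq:adv-sq}, pass to expectations, and then apply Corollary~\ref{cor:huber}. Your treatment of the perturbation term (writing it as $\E_x[\sup_{x'}\,\cdot\,]$) is in fact slightly more careful than the paper's, which leaves the outer expectation implicit in the notation.
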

\begin{restatable}{theorem}{AdvLp}
\label{thm:adv-Lp}
Assume that the distribution is symmetric, the conditional
distribution is bounded by $B > 0$, and the hypothesis set $\sH$ is
realizable and bounded by $B > 0$. Then, for any $\nu \geq 3B$ and all
$h \in \sH$, the following bound holds:
\ifdim\columnwidth=\textwidth
{
\begin{align*}
\sE_{\wt \ell_2}(h) - \sE^*_{\ell_2}(\sH) \leq \Gamma \paren*{\sE_{\ell_p}(h) - \sE^*_{\ell_p}(\sH)}
  + \nu \sup_{x' \colon \norm*{x' - x} \leq \gamma} \abs*{h(x') - h(x)},
\end{align*}
}
\else{
\begin{align*}
  & \sE_{\wt \ell_2}(h) - \sE^*_{\ell_2}(\sH)\\
  & \quad \leq \Gamma \paren*{\sE_{\ell_p}(h) - \sE^*_{\ell_p}(\sH)}
  + \nu \mspace{-10mu} \sup_{x' \colon \norm*{x' - x} \leq \gamma} \abs*{h(x') - h(x)},
\end{align*}
}
\fi
where $\Gamma(t) = 
t^{\frac{2}{p}}$ if $p \geq 2$,
$\frac{2}{(8B)^{p - 2} p(p - 1)} \, t$ for $p \in (1, 2)$
and $4B \, t$, if $p = 1$.
\end{restatable}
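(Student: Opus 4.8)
The plan is to assemble the claim directly from Corollary~\ref{cor:Lp} together with the pointwise upper bound \eqref{eq:adv-sq} on the adversarial squared loss. First I would recall from \eqref{eq:adv-sq} that, under the assumption that the conditional distribution and $\sH$ are bounded by $B > 0$, for any $\nu \geq 3B$ and all $(h, x, y) \in \sH \times \sX \times \sY$,
\[
\wt \ell_2(h, x, y) \leq (h(x) - y)^2 + \nu \sup_{x' \colon \norm*{x' - x} \leq \gamma} \abs*{h(x') - h(x)}.
\]
Taking the expectation over $(x, y) \sim \sD$ and using $\sE_{\wt \ell_2}(h) = \E_{(x,y)}\bracket*{\wt \ell_2(h, x, y)}$ and $\sE_{\ell_2}(h) = \E_{(x,y)}\bracket*{(h(x) - y)^2}$, this yields $\sE_{\wt \ell_2}(h) \leq \sE_{\ell_2}(h) + \nu \sup_{x' \colon \norm*{x' - x} \leq \gamma} \abs*{h(x') - h(x)}$ (with the expectation over $x$ kept implicit on the right, following the convention of Section~\ref{sec:adversarial}).

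Next I would subtract $\sE^*_{\ell_2}(\sH)$ from both sides to get $\sE_{\wt \ell_2}(h) - \sE^*_{\ell_2}(\sH) \leq \bracket*{\sE_{\ell_2}(h) - \sE^*_{\ell_2}(\sH)} + \nu \sup_{x' \colon \norm*{x' - x} \leq \gamma} \abs*{h(x') - h(x)}$, and then apply Corollary~\ref{cor:Lp}, whose hypotheses hold exactly here (symmetric distribution, conditional distribution and $\sH$ bounded by $B$, $\sH$ realizable), to bound the first bracket by $\Gamma\paren*{\sE_{\ell_p}(h) - \sE^*_{\ell_p}(\sH)}$ with $\Gamma(t) = t^{2/p}$ for $p \geq 2$ and $\Gamma(t) = \frac{2}{(8B)^{p-2}p(p-1)}\, t$ for $p \in (1,2]$. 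For $p = 1$, Corollary~\ref{cor:Lp} provides $\Gamma(t) = \sup_{x \in \sX, y \in \sY}\curl*{\abs*{h(x) - y} + \abs*{\mu(x) - y}}\, t$; here I would invoke the boundedness hypotheses, namely $\abs*{h(x)} \leq B$, $\abs*{\mu(x)} = \abs*{\E[y \mid x]} \leq B$, and $\abs*{y} \leq B$ almost surely, to bound $\abs*{h(x) - y} \leq 2B$ and $\abs*{\mu(x) - y} \leq 2B$, hence $\Gamma(t) \leq 4B\, t$, matching the stated form. (At $p = 2$ the two expressions $t^{2/p}$ and $\frac{2}{(8B)^{p-2}p(p-1)}\, t$ both reduce to $t$, so there is no inconsistency at the boundary between the regimes $p \in (1,2)$ and $p \geq 2$.)

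Since the substantive content is carried entirely by Corollary~\ref{cor:Lp} and by the derivation of \eqref{eq:adv-sq}, I do not expect a genuine obstacle: the argument is a short assembly. The only point requiring slight care is the $p = 1$ case, where one must pass from the supremum form of $\Gamma$ in Corollary~\ref{cor:Lp} to the uniform constant $4B$ using the boundedness of $h$, of $\mu$, and of the conditional support; and one should note that it is the realizability of $\sH$ that makes the minimizability gaps $\sM_{\ell_p}(\sH)$ and $\sM_{\ell_2}(\sH)$ vanish, so that Corollary~\ref{cor:Lp}, rather than the more general Theorem~\ref{thm:Lp}, is the appropriate tool.
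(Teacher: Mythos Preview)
Your proposal is correct and follows essentially the same approach as the paper: apply \eqref{eq:adv-sq} to pass from $\sE_{\wt \ell_2}(h)$ to $\sE_{\ell_2}(h)$ plus the smoothness term, subtract $\sE^*_{\ell_2}(\sH)$, and then invoke Corollary~\ref{cor:Lp}. Your added remark on the $p=1$ case (bounding the $h$- and $\mu$-dependent supremum by $4B$ via the boundedness assumptions) is exactly the justification the paper leaves implicit when it writes $\Gamma(t) = 4B\,t$ directly.
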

\begin{restatable}{theorem}{AdvSenSq}
\label{thm:adv-sen-sq}
Assume that the distribution is symmetric, the conditional
distribution is bounded by $B > 0$, and the hypothesis set $\sH$ is
realizable and bounded by $B > 0$. Assume that $p_{\min}(\e) =
\inf_{x \in \sX} \P(\mu(x) - y \geq \e \mid x)$ is positive.
Then, for any $\nu \geq 3B$ and all
$h \in \sH$, the following bound holds:
\ifdim\columnwidth=\textwidth
{
\begin{align*}
\sE_{\wt \ell_2}(h) - \sE^*_{\ell_2}(\sH) \leq \frac{\sE_{\ell_{\rm{sq}-\e}}(h)
    - \sE^*_{\ell_{\rm{sq}-\e}}(\sH)}{2 p_{\min}(\e)}
  + \nu \sup_{x' \colon \norm*{x' - x} \leq \gamma} \abs*{h(x') - h(x)}.
\end{align*}
}
\else{
\begin{align*}
& \sE_{\wt \ell_2}(h) - \sE^*_{\ell_2}(\sH)\\
  & \quad \leq \frac{\sE_{\ell_{\rm{sq}-\e}}(h)
    - \sE^*_{\ell_{\rm{sq}-\e}}(\sH)}{2 p_{\min}(\e)}
  + \nu \mspace{-10mu} \sup_{x' \colon \norm*{x' - x} \leq \gamma} \abs*{h(x') - h(x)}.
\end{align*}
}
\fi
\end{restatable}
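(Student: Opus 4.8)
The plan is to combine the upper bound \eqref{eq:adv-sq} on the adversarial squared loss with the $\sH$-consistency bound for the squared $\e$-insensitive loss from Corollary~\ref{cor:e-sen-sq}. First I would recall that, under the stated assumptions (symmetric distribution, conditional distribution and $\sH$ bounded by $B$, $\sH$ realizable with $p_{\min}(\e) > 0$), Corollary~\ref{cor:e-sen-sq} gives, for every $h \in \sH$,
\begin{equation*}
  \sE_{\ell_2}(h) - \sE^*_{\ell_2}(\sH)
  \leq \frac{\sE_{\ell_{\rm{sq}-\e}}(h) - \sE^*_{\ell_{\rm{sq}-\e}}(\sH)}{2 p_{\min}(\e)}.
\end{equation*}

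Next, I would take the expectation over $(x,y) \sim \sD$ of the pointwise inequality \eqref{eq:adv-sq}, namely $\wt \ell_2(h,x,y) \leq (h(x)-y)^2 + \nu \sup_{x' \colon \norm{x'-x} \leq \gamma} \abs{h(x') - h(x)}$, which holds for any $\nu \geq 3B$ since the conditional distribution and $\sH$ are bounded by $B$. This yields
\begin{equation*}
  \sE_{\wt \ell_2}(h) \leq \sE_{\ell_2}(h) + \nu \, \E_{(x,y)\sim\sD}\bracket[\Big]{\sup_{x' \colon \norm{x'-x} \leq \gamma} \abs{h(x') - h(x)}}.
\end{equation*}
Subtracting $\sE^*_{\ell_2}(\sH)$ from both sides and chaining with the displayed Corollary bound then gives exactly the claimed inequality, where the supremum term is written informally as $\nu \sup_{x' \colon \norm{x'-x} \leq \gamma} \abs{h(x') - h(x)}$ in the same convention used throughout Section~\ref{sec:adversarial}.

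There is essentially no obstacle here: the only point requiring a little care is the derivation of \eqref{eq:adv-sq} itself, which factors $(h(x')-y)^2 - (h(x)-y)^2 = (h(x')-h(x))(h(x')+h(x)+y)$ and bounds the second factor in absolute value by $3B$ using $\abs{h(x')}, \abs{h(x)}, \abs{y} \leq B$ — but this is already carried out in the text preceding the theorem, so the proof reduces to assembling these two ingredients. The argument is identical in structure to those for Theorems~\ref{thm:adv-huber} and~\ref{thm:adv-Lp}, differing only in which $\sH$-consistency corollary is invoked.
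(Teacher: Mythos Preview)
Your proposal is correct and follows essentially the same approach as the paper's own proof: take expectations in \eqref{eq:adv-sq}, subtract $\sE^*_{\ell_2}(\sH)$, and then invoke Corollary~\ref{cor:e-sen-sq}. Your remark that the argument is structurally identical to the proofs of Theorems~\ref{thm:adv-huber} and~\ref{thm:adv-Lp}, differing only in which corollary is applied, is exactly how the paper proceeds.
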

Theorems~\ref{thm:adv-huber}, \ref{thm:adv-Lp} and
\ref{thm:adv-sen-sq} suggest minimizing
\begin{equation}
\label{eq:smooth-adv}
L(h, x, y) + \tau \sup_{x' \colon \norm*{x' - x} \leq \gamma} \abs*{h(x') - h(x)} 
\end{equation}
where $L$ can be chosen as $\ell_{\delta}$, $\ell_p$ and
$\ell_{\rm{sq}-\e}$ and $\tau > 0$ is a parameter. For simplicity, we
use the parameter $\tau$ to approximate the effect of the functional
form $\Gamma$ in these bounds, as with the approach adopted in
\citep{zhang2019theoretically}.  Given that $L$ is a convex function
of $h$, the minimization of \eqref{eq:smooth-adv} can be achieved
equivalently and more efficiently by applying the standard Lagrange
method. This allows for the replacement of the $\ell_1$ norm with its
square, since the regularization term can be moved to a constraint,
where it can then be squared.  \ignore{
\begin{equation}
\label{eq:smooth-adv-2}
L(h, x, y) + \tau' \sup_{x' \colon \norm*{x' - x} \leq \gamma}
\abs*{h(x') - h(x)}^2,
\end{equation}
for some $\tau' > 0$.
}

We refer to \eqref{eq:smooth-adv} as \emph{smooth adversarial
regression loss functions}. They can be obtained by augmenting the
standard regression loss function such as the Huber loss, the $\ell_p$
loss and the $\e$-insensitive loss with a natural smoothness
term. Minimizing the regularized empirical smooth adversarial
regression loss functions leads to a new family of algorithms for
adversarial regression, \emph{smooth adversarial regression
algorithms}. In the next section, we report experimental results
illustrating the effectiveness of these new algorithms, in particular
in terms of the trade-off between the adversarial accuracy and
standard accuracy, as guaranteed by Theorems~\ref{thm:adv-huber},
\ref{thm:adv-Lp} and \ref{thm:adv-sen-sq}. We will show that these
algorithms outperform the direct minimization of the adversarial
squared loss.

It is important to note that the regularizer $\sup_{x' \colon \norm*{x' - x} \leq \gamma} \abs*{h(x') - h(x)}$ in the smooth adversarial regression loss closely relates to the local Lipschitz constant and the gradient norm, which are established methods in adversarially robust training \citep{hein2017formal,finlay2019scaleable,yang2020closer,gouk2021regularisation}. Furthermore, by building upon the derivation in Section~\ref{sec:adv-squared}, we can develop new surrogate losses for adversarial regression scenarios beyond the adversarial squared loss, such as the adversarial $\ell_1$ loss. In this context, the formulation of the smooth adversarial regression loss would replace the absolute value with the local Lipschitz constant of the target loss. To establish guarantees for these new surrogate losses, the $\sH$-consistency bounds shown in Section~\ref{sec:hcons} can be extended to other target losses in regression, such as the $\ell_1$ loss. An intriguing direction for future exploration is investigating how our surrogate losses relate to Moreau envelope theory (see, for example, \citep{zhou2022non}).

\begin{table}[t]
\vskip -0.3in
  \caption{Comparison of the performance of the {\sc Adv-sq} algorithm
    and our smooth adversarial regression algorithms for $L = \ell_2$
    and $L = \ell_{\delta}$ for $\ell_{\infty}$ adversarial training with
    perturbation size $\gamma \in \curl*{0.001, 0.005, 0.01} $ on the
    Diverse MAGIC wheat dataset.}
\label{tab:wheat}
\begin{center}
\begin{small}
\begin{sc}
\begin{tabular}{@{\hspace{0cm}}lccc@{\hspace{0cm}}}
\toprule
Method & Size & Clean & Robust \\
\midrule
Adv-sq   & \multirow{3}{*}{0.001} & \bm{$1.28 \pm 0.10$} & \bm{$1.32 \pm 0.11$} \\
Ours ($L = \ell_2$)  &            & \bm{$1.28 \pm 0.09$} & \bm{$1.32 \pm 0.09$} \\
Ours ($L = \ell_{\delta}$)  &     & $1.30 \pm 0.08$ & $1.34 \pm 0.09$ \\
\midrule
Adv-sq   & \multirow{3}{*}{0.005} & $1.30 \pm 0.09$ & $1.53 \pm 0.10$ \\
Ours ($L = \ell_2$)            &  & $1.26 \pm 0.09$ & $1.46 \pm 0.10$ \\
Ours ($L = \ell_{\delta}$)      & & \bm{$1.03 \pm 0.09$} & \bm{$1.12 \pm 0.10$} \\
\midrule
Adv-sq   & \multirow{3}{*}{0.01}  & $1.30 \pm 0.08$ & $1.78 \pm 0.11$ \\
Ours ($L = \ell_2$)  &            & $1.22 \pm 0.11$ & $1.62 \pm 0.14$ \\
Ours ($L = \ell_{\delta}$)      & & \bm{$0.97 \pm 0.02$} & \bm{$1.01 \pm 0.02$} \\
\bottomrule
\end{tabular}
\end{sc} 
\end{small}
\end{center}
\vskip -0.25in
\end{table}

\begin{table}[t]
\vskip -0.3in
\caption{Comparison of the performance of the {\sc Adv-sq} algorithm
  and our smooth adversarial regression algorithms for $L = \ell_2$
  and $L = \ell_{\delta}$ for $\ell_{\infty}$ adversarial training
  with perturbation size $\gamma \in \curl*{0.001, 0.005, 0.01} $ on
  the Diabetes dataset.}
\label{tab:diabetes}
\begin{center}
\begin{small}
\begin{sc}
\begin{tabular}{@{\hspace{0cm}}lccc@{\hspace{0cm}}}
\toprule
Method & Size & Clean & Robust \\
\midrule
Adv-sq   & \multirow{3}{*}{0.001} & $2.53 \pm 0.48$ & $2.57 \pm 0.49$ \\
Ours ($L = \ell_2$)  &            & \bm{$1.24 \pm 0.21$} & \bm{$1.26 \pm 0.21$} \\
Ours ($L = \ell_{\delta}$)  &     & $1.31 \pm 0.15$ & $1.32 \pm 0.15$ \\
\midrule
Adv-sq   & \multirow{3}{*}{0.005} & $1.12 \pm 0.12$ & $1.18 \pm 0.13$ \\
Ours ($L = \ell_2$)            &  & $0.80 \pm 0.04$ & $0.82 \pm 0.04$ \\
Ours ($L = \ell_{\delta}$)      & & \bm{$0.78 \pm 0.06$} & \bm{$0.79 \pm 0.06$} \\
\midrule
Adv-sq   & \multirow{3}{*}{0.01}  & $0.83 \pm 0.05$ & $0.87 \pm 0.05$ \\
Ours ($L = \ell_2$)  &            & \bm{$0.74 \pm 0.05$} & \bm{$0.76 \pm 0.05$} \\
Ours ($L = \ell_{\delta}$)      & & $0.81 \pm 0.05$ & $0.82 \pm 0.05$ \\
\bottomrule
\end{tabular}
\end{sc}
\end{small}
\end{center}
\vskip -0.25in
\end{table}

\section{Experiments}
\label{sec:experiments}

In this section, we demonstrate empirically the effectiveness of the
smooth adversarial regression algorithms introduced in the previous
section.

\textbf{Experimental settings.}  We studied two real-world datasets:
the Diabetes dataset \citep{efron2004least} and the Diverse MAGIC
wheat dataset \citep{scott2021limited}, and adopted the same exact
settings for feature engineering as
\citep[Example 3 and Example 5 in Appendix~D]{ribeiro2023regularization}. For the sake of a fair comparison,
we used a linear hypothesis set.
We considered an $\ell_{\infty}$
perturbation with perturbation size $\gamma \in \curl*{0.001, 0.005,
  0.01}$ for adversarial training. For our smooth adversarial
regression losses \eqref{eq:smooth-adv}, we chose $L = \ell_2$, the
squared loss, and $L = \ell_{\delta}$ with $\delta = 0.2$, the Huber
loss, setting $\tau = 1$ as the default. Other choices for the
regression loss functions and the value of $\tau$ may yield better
performance, which can typically be selected by cross-validation in
practice. Both our smooth adversarial regression losses and the
adversarial squared loss were optimized using the CVXPY library
\citep{diamond2016cvxpy}.

\textbf{Evaluation.} We report the standard error, measured by the
squared loss (or MSE) on the test data, and the robust error, measured
by the adversarial squared loss with $\ell_{\infty}$ perturbation and
the corresponding perturbation size used for training. We averaged
both errors over five runs and report the standard deviation for
both our smooth adversarial regression losses and the adversarial
squared loss.

\textbf{Results.}  Tables~\ref{tab:wheat} and~\ref{tab:diabetes}
present the experimental results of our adversarial regression
algorithms with both the squared ($L = \ell_2$) and Huber ($L =
\ell_{\delta}$) losses. The results suggest that these algorithms
consistently surpass the adversarial squared loss in clean and robust
error metrics across various settings. In particular, on the Diabetes dataset with a perturbation size of $\gamma = 0.01$, our method ($L = \ell_2$) outperforms the adversarial squared loss by more than $0.5\%$ in both robust error and clean error. Similarly, on the
    Diverse MAGIC wheat dataset with a perturbation size of $\gamma = 0.01$, our method ($L = \ell_{\delta}$) surpasses the adversarial squared loss by more than $0.3\%$ in terms of robust and clean errors.

Remarkably, the surrogate loss
using the Huber loss occasionally outperforms the squared loss
variant. This highlights the importance of using surrogate losses,
even within the adversarial training framework, to enhance
performance.

\section{Conclusion}

We presented the first study of $\sH$-consistency bounds for
regression.This involved generalizing existing tools that were
previously used to prove $\sH$-consistency bounds.  Leveraging our
generalized tools, we proved a series of novel $\sH$-consistency
bounds for surrogate losses of the squared loss. Our $\sH$-consistency
guarantees can be beneficial in designing new algorithms for
adversarial regression. This study can be useful for the later studies
of other surrogate losses for other target losses in regression.

\bibliography{reghcb}

\newpage
\appendix

\renewcommand{\contentsname}{Contents of Appendix}
\tableofcontents
\addtocontents{toc}{\protect\setcounter{tocdepth}{3}} 
\clearpage

\section{Proofs of general \texorpdfstring{$\sH$}{H}-consistency theorems}
\label{app:general}
\subsection{Proof of Theorem~\ref{theorem:H-ConsBoundPsi}}
\HConsBoundPsi*
\begin{proof}
For any $h\in \sH$, we can write
\begin{align*}
  & \Psi\paren*{\sE_{L_2}(h) - \sE_{L_2,\sH}^* + \sM_{L_2,\sH}}\\
  & = \Psi\paren*{\E_{X} \bracket*{\Delta\sC_{L_2,\sH}(h, x)}}\\
  & \leq \E_{X} \bracket*{\Psi\paren*{\Delta\sC_{L_2,\sH}(h, x)}}
  \tag{Jensen's ineq.} \\
  & = \E_{X} \bracket*{\Psi\paren*{\Delta\sC_{L_2,\sH}(h, x) 1_{\Delta\sC_{L_2,\sH}(h, x)>\e} + \Delta\sC_{L_2,\sH}(h, x) 1_{\Delta\sC_{L_2,\sH}(h, x)\leq\e}}}\\ 
  & \leq \E_{X} \bracket*{\Psi\paren*{\Delta\sC_{L_2,\sH}(h, x) 1_{\Delta\sC_{L_2,\sH}(h, x) > \e}} + \Psi\paren*{\Delta\sC_{L_2,\sH}(h, x) 1_{\Delta\sC_{L_2,\sH}(h, x)\leq\e}}}
  \tag{$\Psi(0) \geq 0$}\\ 
  & \leq \E_{X} \bracket*{\alpha(h, x) \Delta\sC_{L_1,\sH}(h, x)}
  + \sup_{t\in[0, \e]}\Psi(t)
  \tag{assumption}\\
  & \leq \bracket[\Big]{\sup_{x \in \sX} \alpha(h, x)} \E_x \bracket*{\Delta\sC_{L_1,\sH}(h, x)}
  + \sup_{t\in[0, \e]}\Psi(t)
  \tag{H\"older’s ineq.}\\
  & = \bracket[\Big]{\sup_{x \in \sX} \alpha(h, x)} \paren*{\sE_{L_1}(h) - \sE_{L_1,\sH}^* + \sM_{L_1,\sH}}
  + \max \curl*{\Psi(0), \Psi(\e)},
  \tag{convexity of $\Psi$}
\end{align*}
which completes the proof.
\end{proof}

\subsection{Proof of Theorem~\ref{theorem:H-ConsBoundGamma}}
\HConsBoundGamma*
\begin{proof}
For any $h\in \sH$, we can write
\begin{align*}
  & \sE_{L_2}(h) - \sE_{L_2, \sH}^* + \sM_{L_2, \sH}\\
  & = \E_X  \bracket*{\sC_{L_2}(h, x) - \sC^*_{L_2, \sH}(x)}\\
  & = \E_X  \bracket*{\Delta\sC_{L_2, \sH}(h, x)}\\
  & = \E_X  \bracket*{\Delta\sC_{L_2, \sH}(h, x) 1_{\Delta\sC_{L_2, \sH}(h, x) > \e} + \Delta\sC_{L_2, \sH}(h, x)1_{\Delta\sC_{L_2, \sH}(h, x) \leq \e}}\\ 
  & \leq \E_X \bracket*{\Gamma\paren*{\alpha(h, x) \Delta\sC_{L_1, \sH}(h, x)}} + \e
  \tag{assumption}\\ 
  & \leq \Gamma\paren*{\E_X \bracket*{\alpha(h, x) \Delta\sC_{L_1, \sH}(h, x)}} + \e
  \tag{Jensen's ineq.}\\ 
  & \leq \Gamma \paren*{ \bracket[\Big]{\sup_{x \in \sX} \alpha(h, x)} \E_X \bracket*{\Delta\sC_{L_1, \sH}(h, x)}} + \e
  \tag{H\"older's ineq.}\\
  & = \Gamma\paren*{\bracket[\Big]{\sup_{x \in \sX} \alpha(h, x)} \paren*{\sE_{L_1}(h) - \sE_{L_1, \sH}^*+\sM_{L_1, \sH}}} + \e.
\end{align*}
When $\Gamma(x) = x^{\frac{1}{q}}$ for some $q \geq 1$ with conjugate
number $p$, starting from the fourth inequality above, we can write
\begin{align*}
  \sE_{L_2}(h) - \sE_{L_2, \sH}^* + \sM_{L_2, \sH}
  & \leq \E_X \bracket*{\alpha^{\frac{1}{q}}(h, x) \Delta\sC_{L_2,\sH}^{\frac{1}{q}}(h, x)} + \e\\ 
  & \leq \E_X \bracket*{\alpha^{\frac{p}{q}}(h, x)}^{\frac{1}{p}} \E_X \bracket*{\Delta\sC_{L_1,\sH}(h, x)}^{\frac{1}{q}} + \e
  \tag{H\"older's ineq}\\
  & = \E_X \bracket*{\alpha^{\frac{p}{q}}(h, x)}^{\frac{1}{p}} \E_X \bracket*{\sE_{L_1}(h) - \sE_{L_1,\sH}^* + \sM_{L_1,\sH}}^{\frac{1}{q}} + \e.
\end{align*}
This completes the proof.
\end{proof}

\subsection{Proof of Theorem~\ref{thm:squared-bound}}
\SquaredLossBound*
\begin{proof}
By definition, 
\begin{align*}
\sC^*_{\ell_2}(\sH, x) &= \inf_{h \in \sH} \E \bracket*{ \paren*{h(x) - y}^2 \mid x}\\
& = \inf_{h \in \sH} \bracket*{ \paren*{h(x) - \E \bracket*{y \mid x} }^2 + \E[y^2 \mid x] - \paren*{\E[y \mid x]}^2 }\\
& = \E[y^2 \mid x] - \paren*{\E[y \mid x]}^2\\
\Delta \sC_{\ell_2, \sH}(h, x) &= \E \bracket*{\paren*{h(x) - y}^2 \mid x} - \inf_{h \in \sH} \E \bracket*{ \paren*{h(x) - y}^2 \mid x}\\
& = \paren*{h(x) - \E \bracket*{y \mid x} }^2 + \E[y^2 \mid x] - \paren*{\E[y \mid x]}^2 - \paren*{\E[y^2 \mid x] - \paren*{\E[y \mid x]}^2}\\
& = \paren*{h(x) - \E \bracket*{y \mid x} }^2.
\end{align*}
This completes the proof.
\end{proof}

\subsection{Proof of Theorem~\ref{thm:general}}
\General*
\begin{proof}
By the symmetry of the distribution, we can write
\begin{align*}
\E_y \bracket*{\psi(h(x) - y) \mid x} 
&= \frac{\E_y \bracket*{\psi(h(x) - y) \mid x} + \E_y \bracket*{\psi(h(x) - 2
\mu(x) + y) \mid x}}{2}\\
&=\frac{\E_y \bracket*{\psi(h(x) - y) \mid x} + \E_y \bracket*{\psi(-h(x) + 2
\mu(x) - y) \mid x}}{2} \tag{$\psi$ is symmetric}\\
& = \frac{\E_y \bracket*{\psi(h(x) - y) + \psi(-h(x) + 2 \mu(x) - y) \mid x}}{2}\\
&\geq \E_y \bracket*{ \psi \paren*{\mu(x) - y} } \tag{Jensen's inequality}
\end{align*}
where the equality is achieved when $h(x) = \mu(x) \in \sH$. This completes the proof.
\end{proof}

\section{Proofs of \texorpdfstring{$\sH$}{H}-consistency bounds for common surrogate losses}
\label{app:commom}

\subsection{\texorpdfstring{$\sH$}{H}-consistency of \texorpdfstring{$\ell_{\delta}$}{L} with respect to \texorpdfstring{$\ell_2$}{L2}}
\label{app:huber}
Define the function $g$ as $g \colon t \mapsto \frac12 t^2 1_{\abs*{t} \leq \delta} + \paren*{\delta \abs*{t} - \frac12 \delta^2} 1_{\abs*{t} > \delta}$. Consider the function $F$ defined over $\bracket*{-B, B}^2$ by $F(x, y) = \frac{g(x + y) + g(x - y)}{2} - g(y)$. We prove a useful lemma as follows.
\begin{lemma}
\label{lemma:ineq_huber}
For any $x, y \in [-B, B]$ and $ \abs*{y} \leq \delta$, the following inequality holds: \[
F(x, y) \geq \min \curl*{\frac{\delta}{2B}, \frac14} x^2.\]
\end{lemma}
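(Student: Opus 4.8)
The plan is to reduce to the first quadrant by symmetry and then split $[-B,B]^2$ into three regions according to where the arguments $x+y$ and $x-y$ of $g$ lie relative to $\pm\delta$; on each region $g$ is an explicit polynomial, so $F$ has a closed form and the inequality becomes elementary algebra. Since $g$ is even, $F(x,y) = F(\lvert x\rvert,\lvert y\rvert)$, so I may assume $0 \le x \le B$ and $0 \le y \le \delta$. In particular $g(y) = \tfrac12 y^2$ (as $\lvert y\rvert \le \delta$), and $x - y \ge -y \ge -\delta$, so $\lvert x-y\rvert > \delta$ forces $x - y > \delta$. Write $c = \min\curl*{\tfrac{\delta}{2B},\tfrac14}$.

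\emph{Easy cases.} If $x+y \le \delta$, then also $\lvert x-y\rvert \le x+y \le \delta$, both arguments are in the quadratic branch, and a direct computation gives $F(x,y) = \tfrac12 x^2 \ge c x^2$. If $x+y > \delta$ but $x - y \le \delta$, then $g(x+y) = \delta(x+y) - \tfrac12\delta^2$ and $g(x-y) = \tfrac12(x-y)^2$; expanding yields the identity $F(x,y) = \tfrac14\paren*{2x^2 - (x+y-\delta)^2}$, and since $0 < x+y-\delta \le x$ (the upper bound using $y \le \delta$), we get $(x+y-\delta)^2 \le x^2$ and hence $F(x,y) \ge \tfrac14 x^2 \ge c x^2$. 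In both cases the absolute constant $\tfrac14$ already suffices.

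\emph{Main case.} The remaining region $x+y > \delta$ and $x-y > \delta$ is where the factor $\tfrac{\delta}{2B}$ genuinely enters. Here both arguments are in the linear branch, so $F(x,y) = \delta x - \tfrac12\delta^2 - \tfrac12 y^2$, which is nonincreasing in $y \ge 0$. I split on the size of $x$ (noting $x > \delta$ here, since $x > y+\delta$). If $\delta < x \le 2\delta$, then $y < x-\delta$ gives $F(x,y) \ge \delta x - \tfrac12\delta^2 - \tfrac12(x-\delta)^2 = -\tfrac12 x^2 + 2\delta x - \delta^2$, and $-\tfrac12 x^2 + 2\delta x - \delta^2 \ge \tfrac14 x^2 \iff 3x^2 - 8\delta x + 4\delta^2 \le 0$, which holds on $\bracket*{\tfrac{2\delta}{3},2\delta} \supseteq (\delta,2\delta]$. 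If $x > 2\delta$, then $x \le B$ forces $B > 2\delta$, hence $c = \tfrac{\delta}{2B}$; and $y \le \delta$ gives $F(x,y) \ge \delta x - \delta^2 = \delta(x-\delta)$, so it remains to check $\delta(x-\delta) \ge \tfrac{\delta}{2B} x^2$, i.e.\ $x^2 - 2Bx + 2B\delta \le 0$. This last quadratic is convex in $x$ and equals $2\delta(2\delta - B) \le 0$ at $x = 2\delta$ and $B(2\delta - B) \le 0$ at $x = B$, so it is nonpositive on $[2\delta, B] \ni x$, finishing the case.

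\emph{Main obstacle.} The delicate point is exactly this last sub-case $x > 2\delta$: there the closed form $F = \delta x - \tfrac12\delta^2 - \tfrac12 y^2$ is only linear in $x$, so no absolute constant can dominate $x^2$, and one must use the boundedness $x \le B$. The precise shape of the constant $\min\curl*{\tfrac{\delta}{2B},\tfrac14}$ is what makes the two endpoint values of $x^2 - 2Bx + 2B\delta$ nonpositive (and it is essentially tight at $x=2\delta$, $B=2\delta$); everything else is bookkeeping of the three polynomial branches of $g$.
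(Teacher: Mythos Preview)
Your proof is correct and follows essentially the same approach as the paper's: a case analysis according to which branch of $g$ the arguments $x+y$ and $x-y$ fall into, followed by elementary algebra on each branch, with the sub-split at $x=2\delta$ in the linear-linear case. Your additional observation that $F(x,y)=F(\abs{x},\abs{y})$ (the paper only reduces to $y\ge 0$) collapses the paper's five cases to your three, and your identity $F=\tfrac14\paren*{2x^2-(x+y-\delta)^2}$ in the mixed case is a cleaner route to the $\tfrac14 x^2$ bound than the paper's quadratic minimization in $y$.
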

\begin{proof}
Given the definition of $g$ and the symmetry of $F$ with respect to $y = 0$, we can assume, without loss of generality, that $y \geq 0$. Next, we will analyze case by case.

\noindent \textbf{Case I:} $\abs*{x + y} \leq \delta$, $\abs*{x - y} \leq \delta $, $0 \leq y \leq \delta$. In this case, we have 
\begin{align*}
F(x, y) = \frac{\frac12 (x + y)^2 + \frac12 (x - y)^2}{2} - \frac12 y^2 = \frac{1}{2} x^2 \geq \min \curl*{\frac{\delta}{2B}, \frac14} x^2.
\end{align*}

\noindent \textbf{Case II:} $\abs*{x + y} \leq \delta$, $\abs*{x - y} > \delta $, $ 0 \leq y \leq \delta$. In this case, we must have $-y - \delta \leq x < y -\delta$ and $\delta \geq y \geq \max \curl*{- x - \delta, x + \delta} \geq x + \delta$. Thus,
\begin{align*}
F(x, y)
& = \frac{\frac12 (x + y)^2 + \delta \abs*{x - y} - \frac12 \delta^2}{2} - \frac12 y^2\\
& = \frac{\frac12 (x + y)^2 + \delta (y - x) - \frac12 \delta^2}{2} - \frac12 y^2 \tag{$x - y < 0$}\\
& = \frac{-\frac12 y^2 + (x + \delta)y + \frac12 x^2 - \delta x - \frac12 \delta^2}{2}\\
& \geq \frac{-\frac12 \delta^2 + (x + \delta)\delta + \frac12 x^2 - \delta x - \frac12 \delta^2}{2}
\tag{the minimum of the quadratic function is attained when $y = \delta$}\\
& = \frac{x^2}{4}\\
& \geq \min \curl*{\frac{\delta}{2B}, \frac14} x^2.
\end{align*}

\noindent \textbf{Case III:} $\abs*{x + y} > \delta$, $\abs*{x - y} \leq \delta $, $ 0 \leq y \leq \delta$. In this case, we must have $-y + \delta \leq x \leq y + \delta$ and $\delta \geq y \geq \max \curl*{- x + \delta, x - \delta} \geq -x + \delta$. Thus,
\begin{align*}
F(x, y)
& = \frac{\delta \abs*{x + y} - \frac12 \delta^2 + \frac12 (x - y)^2}{2} - \frac12 y^2\\
& = \frac{\delta \paren*{x + y} - \frac12 \delta^2 + \frac12 (x - y)^2}{2} - \frac12 y^2 \tag{$x + y > 0$}\\
& = \frac{-\frac12 y^2 + (-x + \delta)y + \frac12 x^2 + \delta x - \frac12 \delta^2}{2}\\
& \geq \frac{-\frac12 \delta^2 + (-x + \delta)\delta + \frac12 x^2 + \delta x - \frac12 \delta^2}{2}
\tag{the minimum of the quadratic function is attained when $y = \delta$}\\
& = \frac{x^2}{4}\\
& \geq \min \curl*{\frac{\delta}{2B}, \frac14} x^2.
\end{align*}

\noindent \textbf{Case IV:} $x + y > \delta$, $\abs*{x - y} > \delta $, $ 0 \leq y \leq \delta$. In this case, we must have $x > y + \delta \geq \delta$ and $ 0 \leq y < \min\curl*{x - \delta, \delta}$. Thus, we have
\begin{align*}
F(x, y)
& = \frac{\delta \abs*{x + y} - \frac12 \delta^2 + \delta \abs*{x - y} - \frac12 \delta^2}{2} - \frac12 y^2\\
& = \frac{\delta \paren*{x + y} - \frac12 \delta^2 + \delta \paren*{x - y} - \frac12 \delta^2}{2} - \frac12 y^2 \tag{$x + y > 0$ and $x - y > 0$}\\
& = \frac{- y^2 + 2\delta x - \delta^2}{2}.
\end{align*}
Then, if $\delta < x \leq 2 \delta$ and $\min\curl*{x - \delta, \delta} = x - \delta$,
\begin{align*}
F(x, y)
& = \frac{- y^2 + 2\delta x - \delta^2}{2}\\
& \geq \frac{- (x - \delta)^2 + 2\delta x - \delta^2}{2}
\tag{the minimum of the quadratic function is attained when $y = x - \delta$}\\
& = \frac{- x^2 + 4\delta x - 2\delta^2}{2}\\
& \geq \frac{x^2}{4} \tag{$\delta < x \leq 2 \delta$}\\
& \geq \min \curl*{\frac{\delta}{2B}, \frac14} x^2.
\end{align*}
If $ 2 \delta < x \leq B $ and $\min\curl*{x - \delta, \delta} = \delta$,
\begin{align*}
F(x, y)
& = \frac{- y^2 + 2\delta x - \delta^2}{2}\\
& \geq \frac{- \delta^2 + 2\delta x - \delta^2}{2}
\tag{the minimum of the quadratic function is attained when $y = \delta$}\\
& = \delta x - \delta^2\\
& \geq \frac{\delta}{2B} x^2 \tag{$2 \delta < x \leq B$}\\
& \geq \min \curl*{\frac{\delta}{2B}, \frac14} x^2.
\end{align*}

\noindent \textbf{Case V:} $x + y < -\delta$, $\abs*{x - y} > \delta $, $ 0 \leq y \leq \delta$. In this case, we must have $x < -y -\delta \leq -\delta$, and $ 0 \leq y < \min\curl*{-x - \delta, \delta}$. Thus, we have
\begin{align*}
F(x, y)
& = \frac{\delta \abs*{x + y} - \frac12 \delta^2 + \delta \abs*{x - y} - \frac12 \delta^2}{2} - \frac12 y^2\\
& = \frac{-\delta \paren*{x + y} - \frac12 \delta^2  -\delta \paren*{x - y} - \frac12 \delta^2}{2} - \frac12 y^2 \tag{$x + y < 0$ and $x - y < 0$}\\
& = \frac{-y^2 - 2\delta x - \delta^2}{2}.
\end{align*}
Then, if $-2\delta \leq x < -\delta$ and $\min\curl*{-x - \delta, \delta} = -x - \delta$,
\begin{align*}
F(x, y) 
& = \frac{-y^2 - 2\delta x - \delta^2}{2}\\
& \geq \frac{-(-x - \delta)^2 - 2\delta x - \delta^2}{2}
\tag{the minimum of the quadratic function is attained when $y = -x - \delta$}\\
& = \frac{-x^2 - 4\delta x - 2\delta^2}{2}\\
& \geq \frac{x^2}{4} \tag{$-2\delta \leq x < -\delta$}\\
& \geq \min \curl*{\frac{\delta}{2B}, \frac14} x^2.
\end{align*}
If $-B \leq x < -2\delta$ and $\min\curl*{-x - \delta, \delta} = \delta$,
 \begin{align*}
F(x, y) 
& = \frac{-y^2 - 2\delta x - \delta^2}{2}\\
& \geq \frac{-\delta^2 - 2\delta x - \delta^2}{2}
\tag{the minimum of the quadratic function is attained when $y = \delta$}\\
& = -\delta x - \delta^2\\
& \geq \frac{\delta}{2B} x^2 \tag{$-B \leq x < -2\delta$}\\
& \geq \min \curl*{\frac{\delta}{2B}, \frac14} x^2.
\end{align*}
In summary, we complete the proof.
\end{proof}

\Huber*
\begin{proof}
By Theorem~\ref{thm:general}, we can write $\forall h \in \sH, x \in \sX$,
\begin{align*}
  & \Delta \sC_{\ell_{\delta}}(h, x)\\
  & = \E_y \bracket*{ \frac12 \paren*{h(x) - y}^2 1_{\abs*{h(x) - y} \leq \delta } + \paren*{\delta \abs*{h(x) - y} - \frac12 \delta^2} 1_{\abs*{h(x) - y} > \delta } \mid x}\\
  & \qquad - \E_y \bracket*{ \frac12 \paren*{\mu(x) - y}^2 1_{\abs*{\mu(x) - y} \leq \delta } + \paren*{\delta \abs*{\mu(x) - y} - \frac12 \delta^2} 1_{\abs*{\mu(x) - y} > \delta } \mid x}\\
  & =  \E_y \bracket*{ \frac{g(h(x) - \mu(x) + \mu(x) - y) + g(h(x) - \mu(x) - \paren*{\mu(x) - y})}{2} - g(\mu(x) - y) \mid x} \tag{distribution is symmetric with respect to $\mu(x)$}\\
  & =  \E_y \bracket*{ F(h(x) - \mu(x), \mu(x) - y) \mid x}\\
  & \geq 2 \P(0 \leq \mu(x) - y \leq \delta \mid x) \E_y \bracket*{ F(h(x) - \mu(x), \mu(x) - y) \mid 0 \leq \mu(x) - y \leq \delta}\\
  & \geq \P(0 \leq \mu(x) - y \leq \delta \mid x)\min \curl*{\frac{\delta}{2B}, \frac12} \paren*{h(x) - \mu(x)}^2 \tag{$\abs*{h(x) - \mu(x)} \leq 2B, \abs*{\mu(x) - y} \leq 2B$}\\
  & = \P(0 \leq \mu(x) - y \leq \delta \mid x) \min \curl*{\frac{\delta}{2B}, \frac12} \Delta \sC_{\ell_2}(h, x).
\end{align*}
By Theorems~\ref{theorem:H-ConsBoundPsi} or \ref{theorem:H-ConsBoundGamma} with $\alpha(h, x) = \frac{1}{\P(0 \leq \mu(x) - y \leq \delta \mid x)}$,  we have
\begin{equation*}
\sE_{\ell_2}(r) - \sE^*_{\ell_2}(\sR) + \sM_{\ell_2}(\sR) \leq \frac{\max \curl*{\frac{2B}{\delta}, 2}}{p_{\min}(\delta)} \paren*{\sE_{\ell_{\delta}}(r) - \sE^*_{\ell_{\delta}}(\sR) + \sM_{\ell_{\delta}}(\sR)}.
\end{equation*}
\end{proof}

\HuberN*
\begin{proof}
Consider a distribution that concentrates on an input $x$. Choose $y, \mu(x), \delta \in \Rset$ such that $-B \leq y < \mu(x) \leq B$ and $\mu(x) - y > \delta$. Consider the conditional distribution as $\P(Y = y \mid x) = \frac12 = \P(Y = 2\mu(x) - y \mid x)$. Thus, the distribution is symmetric with respect to $y = \mu(x)$. For such a distribution, the best-in-class predictor for the squared loss is
$h^*(x) = \mu(x)$.
However, for the Huber loss, we have 
\begin{align*}
& \sC_{\ell_{\delta}}(h, x)\\
&= \E_y \bracket*{ \frac12 \paren*{h(x) - y}^2 1_{\abs*{h(x) - y} \leq \delta } + \paren*{\delta \abs*{h(x) - y} - \frac12 \delta^2} 1_{\abs*{h(x) - y} > \delta } \mid x}\\
&= \frac12 \paren*{\frac12 \paren*{h(x) - y}^2 1_{\abs*{h(x) - y} \leq \delta } + \paren*{\delta \abs*{h(x) - y} - \frac12 \delta^2} 1_{\abs*{h(x) - y} > \delta }}\\
&\quad + \frac12 \paren*{\frac12 \paren*{h(x) - 2\mu(x) + y}^2 1_{\abs*{h(x) - 2\mu(x) + y} \leq \delta } + \paren*{\delta \abs*{h(x) - 2\mu(x) + y} - \frac12 \delta^2} 1_{\abs*{h(x) - 2\mu(x) + y} > \delta }}.
\end{align*}
Thus, plugging $\ov h \colon x \mapsto y + \delta$ and $h^* \colon x \mapsto \mu(x)$, we obtain that
\begin{align*}
\sC_{\ell_{\delta}}(\ov h, x) 
& = \frac12 \paren*{\frac12 \delta^2} + \frac12 \paren*{\delta \abs*{2y + \delta - 2\mu(x)} - \frac12 \delta^2} \tag{$\ov h(x) - y = \delta$ and $\ov h(x) - 2\mu(x) + y < -\delta$}\\
& = \delta\paren*{\mu(x) - \frac12 \delta - y}\\
\sC_{\ell_{\delta}}(h^*, x) 
& = \frac12 \paren*{\delta \abs*{\mu(x) - y} - \frac12 \delta^2 + \delta \abs*{-\mu(x) + y} - \frac12 \delta^2} \tag{$h^*(x) - y > \delta$ and $h^*(x) - 2\mu(x) + y < -\delta$}\\
& = \delta\paren*{\mu(x) - \frac12 \delta - y}.
\end{align*}
Therefore, $\sC_{\ell_{\delta}}(\ov h, x)  = \sC_{\ell_{\delta}}(h^*, x) $, and both $\ov h$ and $h^*$ are the best-in-class predictors for the Huber loss. This implies that the Huber loss is not $\sH$-consistent with respect to the squared loss.
\end{proof}

\subsection{\texorpdfstring{$\sH$}{H}-consistency of \texorpdfstring{$\ell_p$}{Lp} with respect to \texorpdfstring{$\ell_2$}{L2}}
\label{app:Lp}
\Lp*
\begin{proof}
We will analyze case by case.

\textbf{Case I: $p \geq 2$.} 
By Theorem~\ref{thm:general}, we can write
\begin{align*}
\forall h \in \sH, x \in \sX, \quad 
  & \Delta \sC_{\ell_p}(h, x)\\
  & = \E_y\bracket*{\abs*{h(x) - y}^p -  \abs*{\mu(x) - y}^p \mid x}\\
  & = \E_y\bracket*{\frac{\abs*{h(x) - y}^p + \abs*{h(x) - 2\mu(x) + y}^p}{2} -  \abs*{\mu(x) - y}^p \mid x} \tag{distribution is symmetric with respect to $\mu(x)$}\\
  & = \E_y\bracket*{\frac{\abs*{h(x) - \mu(x) + \mu(x) - y}^p + \abs*{h(x) - \mu(x) - (\mu(x) - y)}^p}{2} -  \abs*{\mu(x) - y}^p \mid x}\\
  & \geq \abs*{h(x) - \mu(x)}^p \tag{by Clarkson's inequality \citep{clarkson1936uniformly}}\\
  & =  \paren*{\paren*{h(x) - \mu(x)}^2}^{\frac{p}{2}}\\
  & = \paren*{\Delta \sC_{\ell_2}(h, x)}^{\frac{p}{2}}.
\end{align*}
By Theorem~\ref{theorem:H-ConsBoundPsi}, we have
\begin{equation*}
\sE_{\ell_2}(r) - \sE^*_{\ell_2}(\sR) + \sM_{\ell_2}(\sR) \leq \paren*{\sE_{\ell_p}(r) - \sE^*_{\ell_p}(\sR) + \sM_{\ell_p}(\sR)}^{\frac{2}{p}}.
\end{equation*}

\textbf{Case II: $1 < p \leq 2$.} 
In this case, the Clarkson's inequality cannot be used directly. We first prove a useful lemma as follows.
\begin{lemma}
\label{lemma:ineq3}
For any $x, y \in [-B, B]$ and $1 < p \leq 2$, the following inequality holds: \[
\frac{|x + y|^{p} + |x - y|^p}{2} - |y|^{p} \geq \frac{(2B)^{p - 2} p(p - 1)}{2} x^2.\]
\end{lemma}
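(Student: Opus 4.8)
The plan is to recognize the left-hand side as the midpoint Jensen gap of the convex function $\phi(u) = \abs*{u}^p$ and to bound it from below by exploiting that $\phi$ is \emph{strongly} convex on the bounded interval $[-2B, 2B]$. Since $\phi$ is even, $\abs*{x-y}^p = \abs*{y-x}^p$, and the midpoint of the two points $x+y$ and $y-x$ is exactly $y$; therefore
\[
\frac{\abs*{x+y}^{p} + \abs*{x-y}^p}{2} - \abs*{y}^{p}
= \frac{\phi(x+y) + \phi(y-x)}{2} - \phi\paren*{\frac{(x+y)+(y-x)}{2}}.
\]
Because $\abs*{x}, \abs*{y} \leq B$, the points $x+y$, $y-x$, and their midpoint $y$ all lie in $[-2B, 2B]$, so it suffices to lower bound the midpoint convexity gap of $\phi$ restricted to this interval.

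Next I would record the elementary facts about $\phi$. It is $C^1$ on $\Rset$ with $\phi'(u) = p\,\sign(u)\,\abs*{u}^{p-1}$; moreover $\phi'$ is absolutely continuous on every bounded interval, with $\phi''(u) = p(p-1)\abs*{u}^{p-2}$ for almost every $u$. The only nontrivial point here is the absolute continuity of $\phi'$ when $1 < p < 2$: it holds because $p - 2 > -1$, so that $\abs*{u}^{p-2}$ is integrable near $0$ and one has $\phi'(u) = \int_0^u p(p-1)\abs*{t}^{p-2}\,dt$. Since $p \leq 2$, the map $t \mapsto t^{p-2}$ is non-increasing on $(0, \infty)$, so $\phi''(u) \geq p(p-1)(2B)^{p-2} =: m > 0$ for all $u \in [-2B, 2B]$; in other words $\phi$ is $m$-strongly convex on $[-2B, 2B]$.

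Finally I would carry out the standard strong-convexity estimate for the midpoint gap. Setting $a = x+y$, $b = y-x$, $c = \tfrac{a+b}{2} = y$, and assuming without loss of generality $x \geq 0$ (the claim is even in $x$) so that $a \geq c \geq b$, absolute continuity of $\phi'$ yields
\[
\frac{\phi(a) + \phi(b)}{2} - \phi(c)
= \frac12 \int_0^{(a-b)/2} \bracket*{\phi'(c+s) - \phi'(c-s)}\,ds
= \frac12 \int_0^{(a-b)/2} \int_{c-s}^{c+s} \phi''(u)\,du\,ds .
\]
Since $[c-s, c+s] \subseteq [-2B, 2B]$ throughout, bounding $\phi'' \geq m$ gives $\tfrac{\phi(a)+\phi(b)}{2} - \phi(c) \geq \tfrac{m}{2}\int_0^{(a-b)/2} 2s\,ds = \tfrac{m}{8}(a-b)^2$. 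As $a - b = 2x$, this is precisely $\tfrac{m}{2}x^2 = \tfrac{(2B)^{p-2}p(p-1)}{2}x^2$, the desired inequality; for $p = 2$ it reduces to the parallelogram identity and holds with equality.

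The main obstacle is purely the regularity caveat for $1 < p < 2$: because $\phi''$ blows up at the origin, the Taylor/integral representation above cannot be justified by classical twice-differentiability and must be obtained from the absolute continuity of $\phi'$, which rests on the integrability $\int_0^{\eps} \abs*{u}^{p-2}\,du < \infty$, i.e. on $p > 1$. Everything else---the monotonicity of $t\mapsto t^{p-2}$ for $p\leq 2$ used to get $\phi'' \geq m$, and the elementary integration in $s$---is routine.
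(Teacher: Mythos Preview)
Your proof is correct and follows essentially the same approach as the paper: both rest on the single key fact that $\phi''(u) = p(p-1)\abs*{u}^{p-2} \geq p(p-1)(2B)^{p-2}$ for $\abs*{u} \leq 2B$, and then integrate this lower bound---you via the explicit double-integral representation of the midpoint gap, the paper by defining the difference $f_y(x) = \tfrac{|x+y|^p+|x-y|^p}{2} - |y|^p - \tfrac{(2B)^{p-2}p(p-1)}{2}x^2$, checking $f_y''\geq 0$ and $f_y'(0)=0$, and concluding $f_y(x)\geq f_y(0)=0$. Your explicit treatment of the absolute continuity of $\phi'$ near the origin (needed since $\phi''$ blows up there for $p<2$) is in fact more careful than the paper's computation, which writes down the pointwise derivatives without comment.
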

\begin{proof}
For any $y \in [-B, B]$, consider the function $f_y \colon x \mapsto \frac{|x + y|^{p} + |x - y|^p}{2} - |y|^{p} - \frac{(2B)^{p - 2} p(p - 1)}{2} x^2$.
We compute the first derivative and second derivative of $f_y$ as follows:
\begin{align*}
f'_y(x) &= \dfrac{\frac{p |x + y|^p}{x + y} + \frac{p | x - y |^p}{x - y}}{2} - (2B)^{p - 2}p(p - 1) x\\
f''_y(x) &= \dfrac{\frac{p(p - 1)}{ |x + y |^{2 - p}} + \frac{p(p - 1)}{|x - y|^{2 - p}}}{2} - (2B)^{p - 2}p(p - 1).
\end{align*}
Thus, using the fact that $1 < p \leq 2$ and $|x + y| \leq 2B$, $|x - y| \leq 2B$, we have
\begin{equation*}
\forall x \in [-B, B], \quad f''_y(x) \geq \dfrac{\frac{p(p - 1)}{(2B)^{2 - p}} + \frac{p(p - 1)}{(2B)^{2 - p}}}{2} - (2B)^{p - 2}p(p - 1) = 0.
\end{equation*}
Therefore, $f_y(x)$ is convex. Since $f'_y(0) = 0$, $x = 0$ achieves the minimum:
\begin{equation*}
\forall x,y \in [-B, B], \quad
f_y(x) \geq f_y(0) = 0.
\end{equation*}
This completes the proof.
\end{proof}
By Theorem~\ref{thm:general}, we can write
\begin{align*}
\forall h \in \sH, x \in \sX, \quad 
  & \Delta \sC_{\ell_p}(h, x)\\
  & = \E_y\bracket*{\abs*{h(x) - y}^p -  \abs*{\mu(x) - y}^p \mid x}\\
  & = \E_y\bracket*{\frac{\abs*{h(x) - y}^p + \abs*{h(x) - 2\mu(x) + y}^p}{2} -  \abs*{\mu(x) - y}^p \mid x} \tag{distribution is symmetric with respect to $\mu(x)$}\\
  & = \E_y\bracket*{\frac{\abs*{h(x) - \mu(x) + \mu(x) - y}^p + \abs*{h(x) - \mu(x) - (\mu(x) - y)}^p}{2} -  \abs*{\mu(x) - y}^p \mid x}\\
  & \geq \frac{(8B)^{p - 2} p(p - 1)}{2} \paren*{h(x) - \mu(x)}^2 \tag{by Lemma~\ref{lemma:ineq3} and $\abs*{h(x) - \mu(x)}\leq 4B, \abs*{\mu(x) - y}\leq 4B$}\\
  & = \frac{(8B)^{p - 2} p(p - 1)}{2} \Delta \sC_{\ell_2}(h, x).
\end{align*}
By Theorem~\ref{theorem:H-ConsBoundPsi}, we have
\begin{equation*}
\sE_{\ell_2}(r) - \sE^*_{\ell_2}(\sR) + \sM_{\ell_2}(\sR) \leq \frac{2}{(8B)^{p - 2} p(p - 1)} \paren*{\sE_{\ell_p}(r) - \sE^*_{\ell_p}(\sR) + \sM_{\ell_p}(\sR)}.
\end{equation*}

\textbf{Case III: $p = 1$.}
By Theorem~\ref{thm:general}, we can write
\begin{align*}
\forall h \in \sH, x \in \sX, \quad \Delta \sC_{\ell_2}(h, x)
  & = \E_y \bracket*{(h(x) - y)^2 -  (\mu(x) - y)^2 \mid x}\\
  & = \E_y \bracket*{\paren*{|h(x) - y| + |\mu(x) - y|} \paren*{|h(x) - y| - |\mu(x) - y|} \mid x}\\
  & \leq \sup_{y \in \sY} \curl*{|h(x) - y| + |\mu(x) - y|}  \E_y \bracket*{|h(x) - y| - |\mu(x) - y| \mid x}\\
  & = \sup_{y \in \sY} \curl*{|h(x) - y| + |\mu(x) - y|} \, \Delta \sC_{\ell_1}(h, x).
\end{align*}
By Theorems~\ref{theorem:H-ConsBoundPsi} or \ref{theorem:H-ConsBoundGamma} with $\alpha(h, x) = \sup_{y \in \sY} \curl*{|h(x) - y| + |\mu(x) - y|}$, we have
\begin{equation*}
\sE_{\ell_2}(r) - \sE^*_{\ell_2}(\sR) + \sM_{\ell_2}(\sR) \leq \sup_{x \in \sX}\sup_{y \in \sY} \curl*{|h(x) - y| + |\mu(x) - y|} \paren*{\sE_{\ell_1}(r) - \sE^*_{\ell_1}(\sR) + \sM_{\ell_1}(\sR)}.
\end{equation*}
\end{proof}

\subsection{\texorpdfstring{$\sH$}{H}-consistency of \texorpdfstring{$\ell_{\rm{sq}-\e}$}{LepsilonSquared} with respect to \texorpdfstring{$\ell_2$}{L2}} 
\label{app:e-sq}
Define the function $g$ as $g \colon t \mapsto \max\curl*{t^2 - \e^2, 0}$. Consider the function $F$ defined over $\Rset^2$ by $F(x, y) = \frac{g(x + y) + g(x - y)}{2} - g(y)$. We first prove a useful lemma as follows.
\begin{lemma}
\label{lemma:ineq_sensitive-sq}
For any $x \in \Rset$ and $\abs*{y} \geq \e$, the following inequality holds: \[
F(x, y) \geq x^2. \]
\end{lemma}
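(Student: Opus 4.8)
The plan is to reduce to the case $y \ge \e$ and then to run a short case analysis according to where the two arguments $x + y$ and $x - y$ of $g$ sit relative to the insensitive interval $(-\e, \e)$. Since $g$ is even, $F$ is even in its second argument, and the hypothesis $\abs{y} \ge \e$ is preserved under $y \mapsto -y$, so I may assume $y \ge \e \ge 0$, whence $g(y) = y^2 - \e^2$. The one structural fact that keeps the analysis short is that $(x + y) - (x - y) = 2y \ge 2\e$: the points $x + y$ and $x - y$ are separated by at least $2\e$, so at most one of them can lie strictly inside $(-\e, \e)$. This leaves exactly three cases to check.

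If both $\abs{x + y} \ge \e$ and $\abs{x - y} \ge \e$, then $g(x \pm y) = (x \pm y)^2 - \e^2$, and substituting into the definition of $F$ and using the parallelogram identity $(x + y)^2 + (x - y)^2 = 2x^2 + 2y^2$ gives $F(x, y) = x^2$ exactly. If instead $\abs{x + y} < \e$, then $x - y = (x + y) - 2y < -\e$, so $g(x + y) = 0$ and $g(x - y) = (x - y)^2 - \e^2$; a direct expansion of $F(x, y) = \tfrac12\paren{(x - y)^2 - \e^2} - (y^2 - \e^2)$ yields $F(x, y) - x^2 = \tfrac12\paren{\e^2 - (x + y)^2} \ge 0$ by assumption. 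The case $\abs{x - y} < \e$ is symmetric: then $x + y = (x - y) + 2y > \e$, so $g(x - y) = 0$, and one gets $F(x, y) - x^2 = \tfrac12\paren{\e^2 - (x - y)^2} \ge 0$. These three cases are exhaustive by the separation observation, so the lemma follows.

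I do not anticipate a real obstacle, since the statement is entirely elementary. The only point needing a little care is verifying that the sign conditions on $x + y$ and $x - y$ used in the last two cases are \emph{forced} rather than merely assumed — this is exactly the role of $2y \ge 2\e$ — and observing that the degenerate value $\e = 0$, if it arises, is already subsumed by the first case, where $F(x, y) = x^2$ holds unconditionally.
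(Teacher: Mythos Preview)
Your proof is correct and follows the same overall strategy as the paper: reduce to $y \ge \e$ by the evenness of $g$, then split cases according to whether $x + y$ and $x - y$ fall inside or outside the insensitive zone. Your execution is a bit more streamlined in two places: the separation observation $(x+y) - (x-y) = 2y \ge 2\e$ immediately rules out the ``both strictly inside'' configuration (which the paper treats as a separate degenerate Case~IV with $x = 0$, $y = \e$), and in the mixed cases you obtain the clean identity $F(x,y) - x^2 = \tfrac12\paren*{\e^2 - (x \pm y)^2}$, whereas the paper instead writes $F$ as a quadratic in $y$ and minimizes over the admissible range to reach the same conclusion.
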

\begin{proof}
Given the definition of $g$ and the symmetry of $F$ with respect to $y = 0$, we can assume, without loss of generality, that $y \geq 0$. Next, we will analyze case by case.

\noindent \textbf{Case I:} $\abs*{x + y} > \e$, $\abs*{x - y} > \e $, $y \geq \e$. In this case, we have 
\begin{align*}
F(x, y) = \frac{(x + y)^2 -\e^2 + (x - y)^2 - \e^2}{2} - y^2 + \e^2 = x^2.
\end{align*}

\noindent \textbf{Case II:} $\abs*{x + y} > \e$, $\abs*{x - y} \leq \e$, $ y \geq \e$. In this case, we must have $y - \e \leq x \leq y + \e$ and $x + \e \geq y \geq \max \curl*{x - \e, \e} \geq x - \e$. Thus,
\begin{align*}
F(x, y)
& = \frac{(x + y)^2 - \e^2 + 0}{2} - y^2 + \e^2\\
& = \frac{- y^2 + 2xy + x^2 + \e^2}{2}\\
& \geq \frac{- (x + \e)^2 + 2x(x + \e) + x^2 + \e^2}{2}
\tag{the minimum of the quadratic function is attained when $y = x + \e$}\\
& = x^2.    
\end{align*}

\noindent \textbf{Case III:} $\abs*{x + y} \leq \e$, $\abs*{x - y} > \e$, $ y \geq \e$. In this case, we must have $ -y - \e \leq x \leq -y + \e$ and $-x + \e \geq y \geq \max \curl*{-x - \e, \e} \geq -x - \e$. Thus,
\begin{align*}
F(x, y)
& = \frac{0 + (x - y)^2 - \e^2}{2} - y^2 + \e^2\\
& = \frac{- y^2 - 2xy + x^2 + \e^2}{2}\\
& \geq \frac{- (-x + \e)^2 - 2x(-x + \e) + x^2 + \e^2}{2}
\tag{the minimum of the quadratic function is attained when $y = -x + \e$}\\
& = x^2.    
\end{align*}

\noindent \textbf{Case IV:} $\abs*{x + y} \leq \e$, $\abs*{x - y} \leq \e $, $y \geq \e$. In this case, we must have $x = 0$ and $y = \e$. Thus,
\begin{align*}
F(x, y) = \frac{0 + 0}{2} - 0 = 0 = x^2.
\end{align*}
In summary, we complete the proof.
\end{proof}

\SensitiveSquared*
\begin{proof}
By Theorem~\ref{thm:general}, we can write $\forall h \in \sH, x \in \sX$,
\begin{align*}
  & \Delta \sC_{\ell_{\rm{sq}-\e}}(h, x)\\
  & = \E_y \bracket*{ \max\curl*{ \paren*{h(x) - y}^2, \e^2} \mid x} - \E_y \bracket*{ \max\curl*{\paren*{\mu(x) - y}^2, \e^2} \mid x}\\
  & =  \E_y \bracket*{ \frac{g(h(x) - \mu(x) + \mu(x) - y) + g(h(x) - \mu(x) - \paren*{\mu(x) - y})}{2} - g(\mu(x) - y) \mid x} \tag{distribution is symmetric with respect to $\mu(x)$}\\
  & =  \E_y \bracket*{ F(h(x) - \mu(x), \mu(x) - y) \mid x}\\
  & \geq 2 \P(\mu(x) - y \geq \e \mid x) \E_y \bracket*{ F(h(x) - \mu(x), \mu(x) - y) \mid \mu(x) - y \geq \e}\\
  & \geq 2\P(\mu(x) - y \geq \e \mid x) \paren*{h(x) - \mu(x)}^2 \tag{by Lemma~\ref{lemma:ineq_sensitive-sq}}\\
  & = 2\P(\mu(x) - y \geq \e \mid x) \Delta \sC_{\ell_2}(h, x). 
\end{align*}
By Theorems~\ref{theorem:H-ConsBoundPsi} or \ref{theorem:H-ConsBoundGamma} with $\alpha(h, x) = \frac{1}{2 \P(\mu(x) - y \geq \e \mid x)}$, we have
\begin{equation*}
\sE_{\ell_2}(r) - \sE^*_{\ell_2}(\sR) + \sM_{\ell_2}(\sR) \leq \frac{\sE_{\ell_{\rm{sq}-\e}}(r) - \sE^*_{\ell_{\rm{sq}-\e}}(\sR) + \sM_{\ell_{\rm{sq}-\e}}(\sR)}{2p_{\min}(\e)}.
\end{equation*}
\end{proof}

\SensitiveSquaredN*
\begin{proof}
Consider a distribution that concentrates on an input $x$. Choose $y, \mu(x), \e \in \Rset$ such that $-B \leq y < \mu(x) \leq B$ and $\mu(x) - y < \e$. Consider the conditional distribution as $\P(Y = y \mid x) = \frac12 = \P(Y = 2\mu(x) - y \mid x)$. Thus, the distribution is symmetric with respect to $y = \mu(x)$. For such a distribution, the best-in-class predictor for the squared loss is
$h^*(x) = \mu(x)$.
However, for the $\e$-insensitive loss, we have 
\begin{align*}
& \sC_{\ell_{\rm{sq}-\e}}(h, x)\\
&= \E_y \bracket*{ \max \curl*{ \paren*{h(x) - y}^2 - \e^2, 0} \mid x}\\
&= \frac12 \max \curl*{ \paren*{h(x) - y}^2 - \e^2, 0} + \frac12 \max \curl*{ \paren*{h(x) - 2\mu(x) + y}^2 - \e^2, 0}.
\end{align*}
Thus, plugging $\ov h \colon x \mapsto y + \e$ and $h^* \colon x \mapsto \mu(x)$, we obtain that
\begin{align*}
\sC_{\ell_{\rm{sq}-\e}}(\ov h, x) 
& = \frac12 \paren*{0} + \frac12  \paren*{0} \tag{$\ov h(x) - y = \e$ and $\e > \ov h(x) - 2\mu(x) + y > -\e$}\\
& = 0\\
\sC_{\ell_{\rm{sq}-\e}}(h^*, x) 
& = \frac12 \paren*{0} + \frac12  \paren*{0} \tag{$ 0 < h^*(x) - y < \e$ and $0 > h^*(x) - 2\mu(x) + y > -\e$}\\
& = 0.
\end{align*}
Therefore, $\sC_{\ell_{\rm{sq}-\e}}(\ov h, x)  = \sC_{\ell_{\rm{sq}-\e}}(h^*, x) $, and both $\ov h$ and $h^*$ are the best-in-class predictors for the $\e$-insensitive loss. This implies that the $\e$-insensitive loss is not $\sH$-consistent with respect to the squared loss.
\end{proof}

\subsection{\texorpdfstring{$\sH$}{H}-consistency of \texorpdfstring{$\ell_{\e}$}{Lepsilon} with respect to \texorpdfstring{$\ell_2$}{L2}} 
\label{app:e}

Here, we present negative results for the $\e$-insensitive loss
$\ell_{\e} \colon (h, x, y) \mapsto \max \curl*{\abs*{h(x) - y} - \e,
  0}$ used in the SVR algorithm, by showing that even under the
assumption $\inf_{x \in \sX} \P(\mu(x) - y \geq \e) > 0$ or $\inf_{x \in \sX} \P(0 \leq
\mu(x) - y \leq \e) > 0$, it is not $\sH$-consistent with respect to
the squared loss.\ignore{The proofs of Theorems~\ref{thm:e-sen-more} and \ref{thm:e-sen-less}
are included in Appendix~\ref{app:e}.} In the proof, we consider
distributions that concentrate on an input $x$, leading to both $\ov h
\colon x \mapsto y + \e$ and $h^* \colon x \mapsto \mu(x)$ being the
best-in-class predictors for the $\e$-insensitive loss.

\begin{restatable}{theorem}{SensitiveMore}
\label{thm:e-sen-more}
Assume that the distribution is symmetric and satisfies $\inf_{x \in
  \sX} \P(\mu(x) - y \geq \e \mid x) > 0$. Assume that the
conditional distribution is bounded by $B > 0$, and the hypothesis set
$\sH$ is realizable and bounded by $B > 0$. Then, the $\e$-insensitive
loss $\ell_{\e}$ is not $\sH$-consistent with respect to the squared
loss.
\end{restatable}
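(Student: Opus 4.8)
The plan is to reuse the single-atom construction behind Theorems~\ref{theorem:huber-n} and~\ref{thm:e-sen-sq-n}: exhibit one distribution for which the $\e$-insensitive loss has two distinct best-in-class predictors, only one of which minimizes the conditional squared loss. Concretely, I would let the distribution concentrate on a single input $x$ and take the symmetric two-atom conditional law $\P(Y = y \mid x) = \tfrac12 = \P(Y = 2\mu(x) - y \mid x)$, where $y$ and $\mu(x)$ are chosen with $-B \le y < \mu(x) \le B$, with $2\mu(x) - y \le B$ (so the conditional distribution is bounded by $B$; taking $\mu(x) = 0$ makes this automatic), and --- this is where the extra hypothesis of the theorem enters --- with $\mu(x) - y > \e$. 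Then the distribution is symmetric about $\mu(x)$, its conditional support lies in $[-B, B]$, and $\P(\mu(x) - y \ge \e \mid x) \ge \tfrac12 > 0$, so all the stated assumptions hold; since $\sH$ is realizable and bounded by $B$, it contains both $h^* \colon x \mapsto \mu(x)$ and some $\ov h \colon x \mapsto y + \e$, the latter because $-B \le y \le y + \e < \mu(x) \le B$.

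Next I would evaluate the conditional $\e$-insensitive error. The map $\psi(t) = \max\curl*{\abs*{t} - \e, 0}$ satisfies the hypotheses of Theorem~\ref{thm:general}, so that theorem already gives $\sC^*_{\ell_\e}(\sH, x) = \sC_{\ell_\e}(\mu(x), x)$; thus $h^*$ is best-in-class for $\ell_\e$, and it only remains to check $\sC_{\ell_\e}(\ov h, x) = \sC_{\ell_\e}(\mu(x), x)$. Expanding $\sC_{\ell_\e}(h, x) = \tfrac12 \max\curl*{\abs*{h(x) - y} - \e, 0} + \tfrac12 \max\curl*{\abs*{h(x) - 2\mu(x) + y} - \e, 0}$ and plugging in $\ov h(x) = y + \e$ (for which $\abs*{\ov h(x) - y} = \e$ and $\ov h(x) - 2\mu(x) + y = \e - 2(\mu(x) - y) < -\e$) and $h^*(x) = \mu(x)$ (for which both distances equal $\mu(x) - y > \e$), one finds that both conditional errors equal $\mu(x) - y - \e$. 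Hence $\ov h$ is also a best-in-class predictor for $\ell_\e$.

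Finally, by Theorem~\ref{thm:squared-bound} the conditional squared-loss regret is $\Delta\sC_{\ell_2, \sH}(h, x) = (h(x) - \mu(x))^2$, which is uniquely minimized at $h(x) = \mu(x)$; thus $\sC_{\ell_2}(\ov h, x) - \sC^*_{\ell_2}(\sH, x) = (\e - (\mu(x) - y))^2 > 0$ since $\mu(x) - y > \e$. Because the distribution is a point mass on $x$, $\sE_L(h) = \sC_L(h, x)$ for every loss $L$, so taking the constant sequence $h_n \equiv \ov h$ yields $\sE_{\ell_\e}(h_n) - \sE^*_{\ell_\e}(\sH) = 0$ for all $n$ while $\sE_{\ell_2}(h_n) - \sE^*_{\ell_2}(\sH)$ stays equal to the positive constant $(\e - (\mu(x) - y))^2$, contradicting $\sH$-consistency (and ruling out any non-decreasing $f$ in the $\sH$-consistency-bound inequality evaluated at $\ov h$). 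The only non-routine point is checking that the parameter window $\curl*{-B \le y < \mu(x) \le B,\ 2\mu(x) - y \le B,\ \mu(x) - y > \e}$ is non-empty while keeping $\ov h$ inside $[-B, B]$ and forcing $\ov h \ne h^*$ --- taking $\mu(x) = 0$ and $-B \le y < -\e$ achieves this whenever $\e < B$; everything else is the same bookkeeping as in the proofs of Theorems~\ref{theorem:huber-n} and~\ref{thm:e-sen-sq-n}.
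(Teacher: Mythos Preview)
Your proposal is correct and follows essentially the same construction as the paper: the same single-input two-atom symmetric distribution with $\mu(x)-y>\e$, the same pair $\ov h(x)=y+\e$ and $h^*(x)=\mu(x)$, and the same computation showing both achieve conditional $\ell_\e$-error $\mu(x)-y-\e$ while only $h^*$ minimizes the squared loss. If anything, you are more careful than the paper in explicitly verifying that the constructed distribution satisfies the hypothesis $\inf_{x}\P(\mu(x)-y\ge\e\mid x)>0$, that $2\mu(x)-y\le B$, that $\ov h(x)\in[-B,B]$, and that the parameter window is non-empty.
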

\begin{proof}
Consider a distribution that concentrates on an input $x$. Choose $y, \mu(x), \e \in \Rset$ such that $-B \leq y < \mu(x) \leq B$ and $\mu(x) - y > \e$. Consider the conditional distribution as $\P(Y = y \mid x) = \frac12 = \P(Y = 2\mu(x) - y \mid x)$. Thus, the distribution is symmetric with respect to $y = \mu(x)$. For such a distribution, the best-in-class predictor for the squared loss is
$h^*(x) = \mu(x)$.
However, for the $\e$-insensitive loss, we have 
\begin{align*}
& \sC_{\ell_{\rm{sq}-\e}}(h, x)\\
&= \E_y \bracket*{ \max \curl*{ \abs*{h(x) - y} - \e, 0} \mid x}\\
&= \frac12 \max \curl*{ \abs*{h(x) - y} - \e, 0} + \frac12 \max \curl*{ \abs*{h(x) - 2\mu(x) + y} - \e, 0}.
\end{align*}
Thus, plugging $\ov h \colon x \mapsto y + \e$ and $h^* \colon x \mapsto \mu(x)$, we obtain that
\begin{align*}
\sC_{\ell_{\rm{sq}-\e}}(\ov h, x) 
& = \frac12 \paren*{0} + \frac12  \paren*{2\mu(x) - 2y - 2\e} \tag{$\ov h(x) - y = \e$ and $\ov h(x) - 2\mu(x) + y < -\e$}\\
& = \mu(x) - y - \e.\\
\sC_{\ell_{\rm{sq}-\e}}(h^*, x) 
& = \frac12 \paren*{\mu(x) - y - \e} + \frac12  \paren*{\mu(x) - y - \e} \tag{$h^*(x) - y > \e$ and $h^*(x) - 2\mu(x) + y < -\e$}\\
& = \mu(x) - y - \e.
\end{align*}
Therefore, $\sC_{\ell_{\rm{sq}-\e}}(\ov h, x)  = \sC_{\ell_{\rm{sq}-\e}}(h^*, x) $, and both $\ov h$ and $h^*$ are the best-in-class predictors for the $\e$-insensitive loss. This implies that the $\e$-insensitive loss is not $\sH$-consistent with respect to the squared loss.
\end{proof}

\begin{restatable}{theorem}{SensitiveLess}
\label{thm:e-sen-less}
Assume that the distribution is symmetric and satisfies $p_{\min}(\e)
= \inf_{x \in \sX} \P(0 \leq \mu(x) - y \leq \e \mid x) >
0$. Assume further that the conditional distribution is bounded by $B
> 0$, and the hypothesis set $\sH$ is realizable and bounded by $B >
0$. Then, the $\e$-insensitive loss $\ell_{\e}$ is not
$\sH$-consistent with respect to the squared loss.
\end{restatable}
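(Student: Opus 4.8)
The plan is to prove this negative result exactly as for Theorem~\ref{thm:e-sen-more} and Theorem~\ref{thm:e-sen-sq-n}: exhibit a single distribution $\sD$ satisfying every hypothesis of the statement (symmetric, conditional distribution bounded by $B$, $\sH$ realizable and bounded by $B$, and $p_{\min}(\e) = \inf_{x}\P(0\le\mu(x)-y\le\e\mid x)>0$) for which reducing the $\e$-insensitive loss estimation error to $0$ does not reduce the squared loss estimation error. The crucial difference with the squared-$\e$-insensitive case is that here the probability mass must be placed \emph{strictly inside} the $\e$-band, so that $p_{\min}(\e)>0$ is arranged while $\ell_\e$ becomes flat (value $0$) on a whole neighborhood of $\mu(x)$. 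Concretely, I would let $\sD$ concentrate on a single input $x$ with the symmetric two-point conditional $\P(Y=y\mid x)=\tfrac12=\P(Y=2\mu(x)-y\mid x)$, choosing $y,\mu(x)$ with $-B\le y<\mu(x)\le B$ and $0<\mu(x)-y\le \e/2$; for instance $\mu(x)=0$ and $y=-\tfrac14\min\curl{\e,B\,}$, which also keeps $2\mu(x)-y$ inside $[-B,B]$. This $\sD$ is symmetric about $y=\mu(x)$, its conditional distribution is bounded by $B$, and $\P(0\le\mu(x)-y\le\e\mid x)=\tfrac12>0$, so $p_{\min}(\e)>0$; realizability only requires $x\mapsto\mu(x)\in\sH$, which is assumed.

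By Theorem~\ref{thm:squared-bound} and Corollary~\ref{cor:general}, $\sE_{\ell_2}(h)-\sE^*_{\ell_2}(\sH)=(h(x)-\mu(x))^2$, so the squared loss is minimized over $\sH$ only by hypotheses with $h(x)=\mu(x)$. Next I would compute the conditional $\e$-insensitive error $\sC_{\ell_\e}(h,x)=\tfrac12\max\curl{|h(x)-y|-\e,0}+\tfrac12\max\curl{|h(x)-(2\mu(x)-y)|-\e,0}$, and observe that it equals $0$ precisely when $h(x)$ lies in $[y-\e,y+\e]\cap[2\mu(x)-y-\e,2\mu(x)-y+\e]=[\,2\mu(x)-y-\e,\;y+\e\,]$, an interval of positive length $2\e-2(\mu(x)-y)\ge\e>0$ centered at $\mu(x)$. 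Since $\ell_\e\ge0$ and $h^*\colon x\mapsto\mu(x)$ attains $0$, we get $\sE^*_{\ell_\e}(\sH)=0$; and the hypothesis $\ov h\colon x\mapsto y$ lies in this zero-error interval (because $\mu(x)-y\le\e/2$) with $\ov h(x)=y\in[-B,B]$, so $\ov h\in\sH$ is also a best-in-class predictor for $\ell_\e$ with $\ov h(x)=y\ne\mu(x)$. Taking the constant sequence $h_n=\ov h$ then gives $\sE_{\ell_\e}(h_n)-\sE^*_{\ell_\e}(\sH)=0$ for all $n$ while $\sE_{\ell_2}(h_n)-\sE^*_{\ell_2}(\sH)=(y-\mu(x))^2>0$; exactly as in Theorems~\ref{thm:e-sen-more} and~\ref{thm:e-sen-sq-n}, this shows $\ell_\e$ is not $\sH$-consistent with respect to the squared loss.

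The only delicate point is the bookkeeping that keeps the counterexample inside the prescribed boxes: ensuring simultaneously that $y$, $\mu(x)$, $2\mu(x)-y$ and $\ov h(x)=y$ all lie in $[-B,B]$, that $0<\mu(x)-y\le\e/2$ holds, and that $p_{\min}(\e)>0$. Placing the mass strictly inside the $\e$-band (rather than straddling $\mu(x)$) makes this routine, and the residual case $\e\ge 2B$ is even easier, since then $|h(x)-y|\le|h(x)|+|y|\le 2B\le\e$ for all $h\in\sH$, so $\ell_\e\equiv0$ on $\sH$ (hence every $h$ is best-in-class for $\ell_\e$) while $p_{\min}(\e)\ge\tfrac12>0$ for any symmetric conditional distribution; the failure of $\sH$-consistency is then immediate. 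Everything else is the same elementary case analysis already used for the other negative results in Section~\ref{sec:hcons}.
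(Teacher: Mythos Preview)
Your proof is correct and follows essentially the same approach as the paper: a two-point symmetric conditional distribution concentrated at a single $x$ with $\mu(x)-y$ strictly inside the $\e$-band, so that $\ell_\e$ has multiple best-in-class predictors while $\ell_2$ has only $\mu(x)$. The only cosmetic difference is that the paper takes $\ov h(x)=y+\e$ (requiring merely $\mu(x)-y<\e$) whereas you take $\ov h(x)=y$ (requiring $\mu(x)-y\le\e/2$); your choice has the advantage of keeping $\ov h(x)\in[-B,B]$ automatically, and your explicit handling of the edge case $\e\ge 2B$ is a nice addition.
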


\begin{proof}
Consider a distribution that concentrates on an input $x$. Choose $y, \mu(x), \e \in \Rset$ such that $-B \leq y < \mu(x) \leq B$ and $\mu(x) - y < \e$. Consider the conditional distribution as $\P(Y = y \mid x) = \frac12 = \P(Y = 2\mu(x) - y \mid x)$. Thus, the distribution is symmetric with respect to $y = \mu(x)$. For such a distribution, the best-in-class predictor for the squared loss is
$h^*(x) = \mu(x)$.
However, for the $\e$-insensitive loss, we have 
\begin{align*}
& \sC_{\ell_{\rm{sq}-\e}}(h, x)\\
&= \E_y \bracket*{ \max \curl*{ \abs*{h(x) - y} - \e, 0} \mid x}\\
&= \frac12 \max \curl*{ \abs*{h(x) - y} - \e, 0} + \frac12 \max \curl*{ \abs*{h(x) - 2\mu(x) + y} - \e, 0}.
\end{align*}
Thus, plugging $\ov h \colon x \mapsto y + \e$ and $h^* \colon x \mapsto \mu(x)$, we obtain that
\begin{align*}
\sC_{\ell_{\rm{sq}-\e}}(\ov h, x) 
& = \frac12 \paren*{0} + \frac12  \paren*{0} \tag{$\ov h(x) - y = \e$ and $\e > \ov h(x) - 2\mu(x) + y > -\e$}\\
& = 0.\\
\sC_{\ell_{\rm{sq}-\e}}(h^*, x) 
& = \frac12 \paren*{0} + \frac12  \paren*{0} \tag{$0 < h^*(x) - y < \e$ and $0 > h^*(x) - 2\mu(x) + y > -\e$}\\
& = 0.
\end{align*}
Therefore, $\sC_{\ell_{\rm{sq}-\e}}(\ov h, x)  = \sC_{\ell_{\rm{sq}-\e}}(h^*, x) $, and both $\ov h$ and $h^*$ are the best-in-class predictors for the $\e$-insensitive loss. This implies that the $\e$-insensitive loss is not $\sH$-consistent with respect to the squared loss.
\end{proof}

\section{Proofs of generalization bound}
\label{app:learning-bound}
\LearningBound*
\begin{proof}
  By using the standard Rademacher complexity bounds \citep{mohri2018foundations}, for any $\delta>0$,
  with probability at least $1 - \delta$, the following holds for all $h \in \sH$:
\[
\abs*{\sE_{L}(h) - \h\sE_{L,S}(h)}
\leq 2 \Rad_m^{L}(\sH) +
B_{L} \sqrt{\tfrac{\log (2/\delta)}{2m}}.
\]
Fix $\e > 0$. By the definition of the infimum, there exists $h^* \in
\sH$ such that $\sE_{L}(h^*) \leq
\sE_{L}^*(\sH) + \e$. By definition of
$\h h_S$, we have
\begin{align*}
  & \sE_{L}(\h h_S) - \sE_{L}^*(\sH)\\
  & = \sE_{L}(\h h_S) - \h\sE_{L,S}(\h h_S) + \h\sE_{L,S}(\h h_S) - \sE_{L}^*(\sH)\\
  & \leq \sE_{L}(\h h_S) - \h\sE_{L,S}(\h h_S) + \h\sE_{L,S}(h^*) - \sE_{L}^*(\sH)\\
  & \leq \sE_{L}(\h h_S) - \h\sE_{L,S}(\h h_S) + \h\sE_{L,S}(h^*) - \sE_{L}^*(h^*) + \e\\
  & \leq
  2 \bracket*{2 \Rad_m^{L}(\sH) +
B_{L} \sqrt{\tfrac{\log (2/\delta)}{2m}}} + \e.
\end{align*}
Since the inequality holds for all $\e > 0$, it implies:
\[
\sE_{L}(\h h_S) - \sE_{L}^*(\sH)
\leq 
4 \Rad_m^{L}(\sH) +
2 B_{L} \sqrt{\tfrac{\log (2/\delta)}{2m}}.
\]
Plugging in this inequality in the bound
of Theorems~\ref{thm:huber}, \ref{thm:Lp}, \ref{thm:e-sen-sq} completes the proof.
\end{proof}

\section{Proofs of adversarial regression}
\label{app:adv}
\subsection{Proof of Theorem~\ref{thm:adv-huber}}
\AdvHuber*
\begin{proof}
By \eqref{eq:adv-sq}, we have
\begin{align*}
\sE_{\wt \ell_2}(h) - \sE^*_{\ell_2}(\sH) 
& \leq \sE_{\ell_2}(h) - \sE^*_{\ell_2}(\sH) + \nu \sup_{x' \colon \norm*{x' - x} \leq \gamma} \abs*{h(x') - h(x)}\\   
& \leq  \frac{\max \curl*{\frac{2B}{\delta}, 2}}{p_{\min}(\delta)} \paren*{\sE_{\ell_{\delta}}(h) - \sE^*_{\ell_{\delta}}(\sH)} + \nu \sup_{x' \colon \norm*{x' - x} \leq \gamma} \abs*{h(x') - h(x)} \tag{Corollary~\ref{cor:huber}}.
\end{align*}
This completes the proof.
\end{proof}

\subsection{Proof of Theorem~\ref{thm:adv-Lp}}
\AdvLp*
\begin{proof}
By \eqref{eq:adv-sq}, we have
\begin{align*}
\sE_{\wt \ell_2}(h) - \sE^*_{\ell_2}(\sH) 
& \leq \sE_{\ell_2}(h) - \sE^*_{\ell_2}(\sH) + \nu \sup_{x' \colon \norm*{x' - x} \leq \gamma} \abs*{h(x') - h(x)}\\   
& \leq  \Gamma \paren*{\sE_{\ell_p}(h) - \sE^*_{\ell_p}(\sH)} + \nu \sup_{x' \colon \norm*{x' - x} \leq \gamma} \abs*{h(x') - h(x)} \tag{Corollary~\ref{cor:Lp}}.
\end{align*}
where $\Gamma(t) = \begin{cases}
t^{\frac{2}{p}} & p > 2\\
\frac{2}{(8B)^{p - 2} p(p - 1)} \, t & p \in (1, 2]\\
4B \, t & p = 1.\\
\end{cases}$. This completes the proof.
\end{proof}

\subsection{Proof of Theorem~\ref{thm:adv-sen-sq}}
\AdvSenSq*
\begin{proof}
By \eqref{eq:adv-sq}, we have
\begin{align*}
\sE_{\wt \ell_2}(h) - \sE^*_{\ell_2}(\sH) 
& \leq \sE_{\ell_2}(h) - \sE^*_{\ell_2}(\sH) + \nu \sup_{x' \colon \norm*{x' - x} \leq \gamma} \abs*{h(x') - h(x)}\\   
& \leq  \frac{\sE_{\ell_{\rm{sq}-\e}}(h) - \sE^*_{\ell_{\rm{sq}-\e}}(\sH)}{2 p_{\min}(\e)} + \nu \sup_{x' \colon \norm*{x' - x} \leq \gamma} \abs*{h(x') - h(x)} \tag{Corollary~\ref{cor:e-sen-sq}}.
\end{align*}
This completes the proof.
\end{proof}

\end{document}